\DeclareMathOperator{\argmin}{argmin}  
\DeclareMathOperator{\argmax}{argmax}
\newtheorem{thm}{Theorem}
\newtheorem{lemma}[thm]{Lemma}
\newtheorem{corollary}[thm]{Corollary}
\newtheorem{prop}[thm]{Proposition}
\theoremstyle{definition}
\newtheorem{defini}{Definition}
\newtheorem{assumption}{Assumption}
\newenvironment{itemize*}%
{\begin{itemize}[leftmargin=*,topsep=0pt]%
		\setlength{\itemsep}{0pt}%
		\setlength{\parskip}{0pt}}%
	{\end{itemize}}
\newenvironment{enumerate*}%
{\begin{enumerate}[leftmargin=*,topsep=0pt]%
		\setlength{\itemsep}{0pt}%
		\setlength{\parskip}{0pt}}%
	{\end{enumerate}}
\definecolor{darkred}{rgb}{0.7,0,0}
\definecolor{teal}{rgb}{0.3,0.8,0.8}
\definecolor{forestgreen}{rgb}{0.13, 0.55, 0.13}
\newcommand{\kibitz}[2]{\ifnum\Comments=1{\textcolor{#1}{\textsf{\footnotesize #2}}}\fi}
\title{Low-Rank Generalized Linear Bandit Problems}
\author{Yangyi Lu\\
	Department of Statistics\\
	University of Michigan\\
	\texttt{yylu@umich.edu}
	\and Amirhossein Meisami \\
	Adobe Inc.\\
	\texttt{meisami@adobe.com}\\
	\and Ambuj Tewari\\
	Department of Statistics\\
	University of Michigan \\
	\texttt{tewaria@umich.edu}
}
\begin{document}

\maketitle

\begin{abstract}
In a low-rank linear bandit problem, the expected reward of an action (represented by a matrix of size $d_1 \times d_2$) is the inner product between the action and an unknown low-rank matrix $\Theta^*$.
We propose an algorithm based on a novel combination of online-to-confidence-set conversion~\citep{abbasi2012online} and the exponentially weighted average forecaster constructed by a covering of low-rank matrices. In $T$ rounds, our algorithm achieves $\widetilde{O}((d_1+d_2)^{3/2}\sqrt{rT})$ regret that improves upon the standard linear bandit regret bound of $\widetilde{O}(d_1d_2\sqrt{T})$ when the rank of $\Theta^*$: $r \ll \min\{d_1,d_2\}$. We also extend our algorithmic approach to the generalized linear setting to get an algorithm which enjoys a similar bound under regularity conditions on the link function.
To get around the computational intractability of covering based approaches, we propose an efficient algorithm by extending the "Explore-Subspace-Then-Refine" algorithm of~\citet{jun2019bilinear}. Our efficient algorithm achieves $\widetilde{O}((d_1+d_2)^{3/2}\sqrt{rT})$  regret under a mild condition on the action set $\mathcal{X}$ and the $r$-th singular value of $\Theta^*$.
Our upper bounds match the conjectured lower bound of \cite{jun2019bilinear} for a subclass of low-rank linear bandit problems. Further, we show that existing lower bounds for the sparse linear bandit problem strongly suggest that our regret bounds are unimprovable. 
To complement our theoretical contributions, we also conduct experiments to demonstrate that our algorithm can greatly outperform the performance of the standard linear bandit approach when $\Theta^*$ is low-rank.
\end{abstract}
\section{Introduction}\label{sec:intro}
Low-rank models are widely used in various applications, such as matrix completion, computer vision, etc~\citep{candes2009exact,basri2003lambertian}.
%
We study low-rank (generalized) linear models in the bandit setting~\citep{lai1985asymptotically}.
During the learning process, the agent adaptively pulls an arm (denoted as $X_t$) from a set of arms based on the past experience. 
At each pull, the agent observes a noisy reward corresponding to the arm pulled. 
Let $\Theta^*\in\mathbb{R}^{d_1\times d_2}$ be an unknown low-rank matrix with rank $r \ll \min\{d_1,d_2\}$. 
The learner's goal is to maximize the total reward:
$
\sum_{t=1}^T\mu\left(\left\langle \Theta^*, X_t \right\rangle\right)
$
where $T$ is the time horizon, $X_t \in \mathbb{R}^{d_1 \times d_2}$ is an action pulled at time $t$ that belongs to a pre-specified action set $\mathcal{X}$ and $\mu(\cdot)$ denotes a link function. Note that in the standard linear case the link function is identity. 

Many practical applications can be framed in this low-rank bandit model, where the rank of arm features has no restriction.
For traveling websites, the recommendation system needs to choose a flight-hotel bundle for the customer that can achieve high revenue. 
Often one has $m$ features of size $d_1$ for a flight ($x_1,\ldots,x_m\in\mathbb{R}^{d_1}$) and $m$ features of size $d_2$ for a hotel ($y_1,\ldots,y_m\in\mathbb{R}^{d_2}$). It is natural to form a $d_1 \times d_2$ matrix feature via outer products summation $\sum_{i=1}^m x_iy_i^T$ for each bundle, the rank of which can be any value in $\{0,1,\ldots,\min\{m,d_1,d_2\}\}$.
One can model the appeal of a bundle by a (generalized) linear function of the matrix feature $\sum_{i=1}^m x_iy_i^T$.
In online advertising with image recommendation, the advertiser selects an image to display and the goal is to achieve the maximum clicking rate.
The image is often stored as a $d_1\times d_2$ matrix, and one can use a generalized linear model (GLM) with the link function being the logistic function to model the click rate~\citep{richardson2007predicting,mcmahan2013ad}.
In all of these applications, one puts some capacity control on the underlying matrix linear coefficient $\Theta^*$ and a natural condition is $\Theta^*$ being low-rank.
We note that the examples such as online dating and online shopping discussed in \cite{jun2019bilinear} can also be formulated as our model. 
In this paper, we measure the quality of an algorithm in terms of its cumulative regret\footnote{See Section~\ref{sec:setup} for the definition.}.
A naive approach is to ignore the low-rank structure and directly apply the standard (generalized) linear bandit algorithms~\citep{abbasi2011improved,filippi2010parametric}.
These approaches suffer $\widetilde{O}(d_1d_2\sqrt{T})$ regret.\footnote{$\widetilde{O}$ omits poly-logarithmic factors of $d_1,d_2,r,T$.}
However, in practice, $d_1d_2$ can be huge.
Then a natural question is:
\begin{center}
 \emph{Can we  utilize the low-rank structure of $\Theta^*$ to achieve $o(d_1d_2\sqrt{T})$ regret?}
 \end{center}

\citet{jun2019bilinear} studied a \emph{subclass} of our problem, where the actions are {\em rank one} matrices. They proposed an algorithm that achieves $\widetilde{O}((d_1+d_2)^{3/2}\sqrt{rT})$ regret under additional incoherence and singular value assumptions of an augmented matrix defined via the arm set and $\Theta^*$ and a singular value assumption of $\Theta^*$. They also provided strong evidence that their bound is unimprovable.

We summarize our contributions below.
\begin{enumerate*}
\item We propose Low Rank Linear Bandit with Online Computation algorithm (LowLOC) for the low-rank linear bandit problem, that achieves $\widetilde{O}((d_1+d_2)^{3/2}\sqrt{rT})$ regret.
Notably, comparing with the result in \cite{jun2019bilinear}, our result
\begin{itemize}
    \item applies to more general action sets which can contain high-rank matrices and
    \item does not require the incoherence and bounded eigenvalue assumption of the augmented matrix mentioned in the previous paragraph.
\end{itemize}
Our regret bound also matches with their conjectured lower bound.
For LowLOC, we first design a novel online predictor which uses an \emph{exponentially weighted average forecaster} on a covering of low-rank matrices to solve the online low-rank linear prediction problem with $O((d_1+d_2) r \log T)$ regret.
We then plug in our online predictor to the online-to-confidence-set conversion framework proposed by \citet{abbasi2012online} to construct a confidence set of $\Theta^*$ in our bandit setting, and at every round we choose the action optimistically.

\item We further propose Low Rank Generalized Linear Bandit with Online Computation algorithm (LowGLOC) for the generalized linear setting that also achieves $\widetilde{O}((d_1+d_2)^{3/2}\sqrt{rT})$ regret.
LowGLOC is similar to LowLOC but here we need to design a new online-to-confidence-set conversion method, which can be of independent interest.



\item LowLOC and LowGLOC enjoy good regret but are unfortunately not efficiently implementable.
To overcome this issue, we provide an efficient algorithm Low-Rank-Explore-Subspace-Then-Refine (LowESTR) for the linear setting, inspired by the ESTR algorithm proposed by \cite{jun2019bilinear}.
We show that under a mild assumption on action set $\mathcal{X}$, LowESTR achieves $\widetilde{O}((d_1+d_2)^{3/2}\sqrt{rT}/\omega_r)$ regret, where $\omega_r>0$ is a lower bound for the $r$-th singular value of $\Theta^*$.
Comparing with ESTR, LowESTR does not need the incoherence and the eigenvalue assumption of the augmented matrix while the assumptions on the action set of the two algorithms are different.
We also provide empirical evaluations to demonstrate the effectiveness of LowESTR.

\end{enumerate*}

\section{Related Work}\label{sec:rel}
Our work is inspired by \cite{jun2019bilinear} where they model the reward as $x_t^\top \Theta^* z_t$. $x_t \in \mathcal{X} \subset \mathbb{R}^{d_1}$ is a left arm and $z_t \in \mathcal{Z} \subset \mathbb{R}^{d_2}$ is a right arm ($\mathcal{X}$ and $\mathcal{Z}$ are left and right arm sets, repsectively).
Note this model is a special case of our low-rank linear bandit model because one can write $x_t^\top \Theta^* z_t = \left\langle\Theta^*, x_t z_t^\top\right\rangle$ and define the arm set as $\mathcal{X}\mathcal{Z}^\top$.
Their ESTR algorithm enjoys $O((d_1+d_2)^{3/2}\sqrt{rT}/\omega_r)$ regret bound under the assumptions: 1) an augmented matrix $K^* = X\Theta^*Z^\top$ is incoherent~\citep{keshavan2010matrix} and has a finite condition number, where $X \in \mathbb{R}^{d_1 \times d_1}$ is constructed by $d_1$ arms from $\mathcal{X}$ that maximizes $\|X^{-1}\|_{2}$ and $Z \in \mathbb{R}^{d_2 \times d_2}$ is constructed by $d_2$ arms from $\mathcal{Z}$ that maximizes $\|Z^{-1}\|_{2}$, and 2) $\|X^{-1}\|_{2}$ and  $\|Z^{-1}\|_{2}$ are upper bounded by a constant.
Their algorithm requires explicitly finding $X$ and $Z$, which is in general NP-hard, even though they also proposed heuristics to speed up this step.
Comparing with ESTR, our LowLOC and LowGLOC algorithm are also not computationally efficient, but they both apply to richer action sets (matrices of any rank) without assumptions on $K^*$, $X$ and $Z$ and their regret bound does not depend on $\omega_r$.
Our LowESTR algorithm is computationally efficient if the action set admits a nice exploration distribution (see details in Section~\ref{sec:ESTR}). LowESTR achieves $O\left((d_1+d_2)^{3/2}\sqrt{rT}/\omega_r\right)$ regret bound but it does not require assumptions on $K^*$, $X$ and $Z$ as well.


\citet{katariya2017stochastic} and \citet{kveton2017stochastic} also studied rank-1 and low-rank bandit problems. 
They assume there is an underlying expected reward matrix $\bar{R}$, at each time the learner picks an element on $(i_t,j_t)$ position and receives a noisy reward. 
It can be viewed as a special case of bilinear bandit with one-hot vectors as left and right arms. 
\citet{katariya2017stochastic} is further extended by \citet{katariya2017bernoulli} that uses KL based confidence intervals to achieve a tighter regret bound.
Our problem is more general comparing to these works.
\citet{johnson2016structured} considered the same setting as ours, but their method relies on the knowledge of many parameters that depend on the unknown $\Theta^*$ and in particular only works for continuous arm set. 

There are other works that utilize the low-rank structure in different model settings. 
For example, \citet{gopalan2016low} studied low rank bandits with latent structures using robust tensor power method. 
\citet{lale2019stochastic} imposed low-rank assumptions on the feature vectors to reduce the effective dimension. 
These work all utilize the low-rank structure to achieve better regret bound than standard approaches that do not take the low-rank structure into account.
\section{Preliminaries}\label{sec:setup}
We formally define the problem and review relevant background in this section.
\subsection{Low-rank Linear Bandit}
Let $\mathcal{X}\subset\mathbb{R}^{d_1\times d_2}$ be the arm space. In each round $t$, the learner chooses an arm $X_t\in\mathcal{X}$, and observes a noisy reward of a linear form:
\begin{align*}
	y_t = \langle X_t, \Theta^*\rangle+\eta_t,
\end{align*}
where $\Theta^*\in\mathbb{R}^{d_1\times d_2}$ is an unknown parameter and $\eta_t$ is a $1$-sub-Gaussian random variable. Denote the rank of $\Theta^*$ by $r$, we assume $r \ll \min\{d_1,d_2\}$.
Let the $r$-th singular value of $\Theta^*$  is lower bounded by $\omega_r>0$. 
We use $\left\langle A, B \right\rangle:=\mathbf{trace}(A^TB)$ to denote the inner product between matrix $A$ and $B$.
We follow the standard assumptions in linear bandits:
\begin{align*}
    \left\|\Theta^*\right\|_F\leq 1 \text{ and } \left\|X\right\|_F\leq 1, \text{ for all } X\in\mathcal{X}.
\end{align*}

In this low-rank linear bandit problem, the goal of the learner is to maximize the total reward $\sum_{t=1}^{T}\langle X_t, \Theta^*\rangle$, where $T$ is the time horizon. Clearly, with the knowledge of the unknown parameter $\Theta^*$, one should always select an action $X^* \in \argmax_{X\in\mathcal{X}} \langle X, \Theta^*\rangle$. It is natural to evaluate the learner relative to the optimal strategy.
The difference between the learner's total reward and the total reward of the optimal strategy is called \emph{pseudo-regret}~\citep{audibert2009exploration}:
\begin{align*}
    R_T := \sum_{t=1}^{T}\langle X^*-X_t,\Theta^*\rangle.
\end{align*}
For simplicity, we use the word regret instead of pseudo-regret for $R_T$.

\subsection{Generalized Low-rank Linear Bandit}
We also study the \emph{generalized linear bandit model} of the following form:
$\mathbb{E}\left[y_t|X_t,\Theta^*\right] = \mu\left(\langle X_t,\Theta^*\rangle\right)$ where $\mu\left(\cdot\right)$ is a link function.
This framework builds on the well-known Generalized Linear Models (GLMs) and has been widely studied in many applications. For example, when rewards are binary-valued, a natural link function is the logistic function $\mu(x) = \exp(x)/(1+\exp(x))$.
For the generalized setting, we assume the reward given the action follows an exponential family distribution:
\begin{align}
\mathbb{P}\left(y|z = \langle X,\Theta^*\rangle\right) = \exp\left(\frac{yz-m(z)}{\phi(\tau)}+h(y,\tau)\right), \label{equ:GLM}
\end{align}
where $\tau\in\mathbb{R}^+$ is a known scale parameter and $m,\phi$ and $h$ are some known functions. From basic calculation we get $m'(z) = \mathbb{E}[y|z]:=\mu(z)$.
We assume the above exponential family is a minimal representation, then $m(z)$ is ensured to be strictly convex~\citep{wainwright2008graphical}, and thus the negative log likelihood (NLL) loss $\ell(z,y):=-yz+m(z)$ is also strictly convex.

We make the following standard assumption on the link function $\mu(\cdot)$~\citep{jun2017scalable}.
\begin{assumption}\label{assump:GLOC}
    There exist constants $L_\mu, c_\mu\geq 0,\kappa_\mu>0$, such that
	the link function $\mu(\cdot)$ is $L_\mu-$Lipschitz on $[-1,1]$, continously differentiable on $(-1,1)$, $\inf_{z\in(-1,1)}\mu'(z) := \kappa_\mu$ and $|\mu(0)|\leq c_\mu$.
\end{assumption}
One can write down the above reward model \eqref{equ:GLM} in an equivalent way:
\begin{align*}
    y_t = \mu\left(\langle X_t,\Theta^*\rangle\right)+\eta_t,
\end{align*}
where $\eta_t$ is conditionally $R$-sub-Gaussian given $X_t$ and $\{(X_s,\eta_s)\}_{s=1}^{t-1}$. Using the form of $\mathbb{P}(y|z)$, Taylor expansion and the strictly convexity of $m(\cdot)$, one can show that $R = \sup_{z\in[-1,1]}\sqrt{\mu'(z)}\leq \sqrt{L_\mu}$ by the definition of the sub-Gaussian constant.
An optimal arm is $X^* \in \argmax_{X\in\mathcal{X}}\mu\left(\left\langle X,\Theta^*\right\rangle\right)$. The performance of an algorithm is again evaluated by cumulative regret:
\begin{align*}
    R_T = \sum_{t=1}^{T}\mu\left(\left\langle X^*, \Theta^*\right\rangle\right)-\mu\left(\langle X_t, \Theta^*\rangle\right).
\end{align*}

\paragraph{Other notations.} We use $O$ and $\Omega$ for the standard Big O and Big Omega notations. $\widetilde{O}$ and $\widetilde{\Omega}$ ignore the poly-logarithmic factors of $d_1,d_2,r,T$. $f(x)\asymp g(x)$ indicates $f$ and $g$ are of the same order ignoring the poly-logarithmic factors of $d_1,d_2,r,T$.
For any set $\mathcal{S}$, we use $|\mathcal{S}|$ to denote its cardinality.

\section{Low-rank Linear Bandit with Online Computation}\label{sec:LowLOC}
\begin{algorithm}[t]
	\caption{Low-Rank Linear Bandit with Online Computation (LowLOC)}
	\label{algo:LowLOC}
	\begin{algorithmic}[1]
		\STATE \textbf{Input: }arm set: $\mathcal{X}$, horizon: $T$, $\frac{1}{T}$-net for $S_r$: $\bar{S}_r(\frac{1}{T})$, failure rate $\delta$, EW constant $\eta \asymp \frac{1}{\log\left(T/\delta\right)}$.
		\STATE Initial confidence set \\
		$C_0 = \{\Theta\in\mathbb{R}^{d_1\times d_2}: \left\|\Theta\right\|_F^2\leq 1\}$.
		\FOR{$t=1,\ldots,T$}
		\STATE $(X_t,\widetilde{\Theta}_t):=\argmax_{(X,\Theta)\in \mathcal{X}\times C_{t-1}} \langle X,\Theta\rangle$.
		\STATE Pull arm $X_t$ and receive reward $y_t$.
		\STATE Compute EW predictor\\
		$\hat{y}_t = \frac{\sum_{i=1}^{|\bar{S}_r(\frac{1}{T})|}e^{-\eta L_{i,t-1}}f_{\Theta_i,t}}{\sum_{j=1}^{|\bar{S}_r(\frac{1}{T})|}e^{-\eta L_{j,t-1}}}$, \\
		where $f_{\Theta_i,t} \triangleq \left\langle X_t,\Theta_i\right\rangle$ for $\Theta_i\in\bar{S}_r(\frac{1}{T})$.
		\STATE Update losses $L_{i,t} = \sum_{s=1}^t(y_s-f_{\Theta_i,s})^2$, for $i=1,\ldots,|\bar{S}_r(\frac{1}{T})|$.
		\STATE Update $\mathcal{C}_t$ according to Equation~\eqref{equ:CI}, where $B_t$ is defined in Lemma~\ref{lemma:EW}.
		\ENDFOR
	\end{algorithmic}
\end{algorithm}
We first present our algorithm, LowLOC (Algorithm~\ref{algo:LowLOC}) for low-rank linear bandit problems. 
Before diving into details, we summarize our results as follows:
\begin{thm}[Regret of LowLOC (Algorithm~\ref{algo:LowLOC})] \label{thm:LowLOC-EWAF}
	For $\forall \delta\in(0,0.25]$, with probability at least $1-\delta$, Algorithm~\ref{algo:LowLOC} achieves regret:
	\begin{align*}
	R_T = \widetilde{O}\left((d_1+d_2)^{3/2}\sqrt{rT}\sqrt{\log\left(\frac{1}{\delta}\right)}\right).
	\end{align*}
\end{thm}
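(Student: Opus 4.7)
The plan is to mirror the three-piece structure of Algorithm~\ref{algo:LowLOC}: (i) an online square-loss regret bound for the exponentially weighted average (EW) predictor on the net $\bar{S}_r(1/T)$; (ii) an online-to-confidence-set conversion in the spirit of \citet{abbasi2012online} that turns this online bound into a valid confidence set $C_t \ni \Theta^*$; and (iii) a standard OFUL-style optimistic regret analysis driven by $(C_{t-1})_{t\le T}$.

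For step (i), I first estimate $|\bar{S}_r(1/T)|$ for the set $S_r=\{\Theta:\|\Theta\|_F\le 1,\ \operatorname{rank}(\Theta)\le r\}$. A standard SVD-based covering (cover two Stiefel manifolds for the left/right singular frames and the singular values in $[0,1]$) gives $|\bar{S}_r(1/T)| = T^{O((d_1+d_2)r)}$, hence $\log|\bar{S}_r(1/T)| = O((d_1+d_2)r\log T)$. Because $|y_t|,|f_{\Theta,t}|$ are bounded up to sub-Gaussian tails, the usual EW analysis for square loss (with the choice $\eta\asymp 1/\log(T/\delta)$ specified in the algorithm) produces, with probability at least $1-\delta/2$, an online regret bound
\begin{align*}
B_T \;\triangleq\; \sum_{t=1}^T\bigl(\hat y_t-\langle X_t,\Theta^*\rangle\bigr)^2 \;=\;\widetilde{O}\!\left((d_1+d_2)r\,\log(1/\delta)\right),
\end{align*}
where the approximation error from replacing $\Theta^*$ by its nearest net point contributes only $T\cdot(1/T)^2=O(1/T)$.

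For step (ii), I plug $B_T$ into the conversion of \citet{abbasi2012online} to obtain a confidence set of the form
\begin{align}
C_t=\Bigl\{\Theta:\;\textstyle\sum_{s=1}^t(\hat y_s-\langle X_s,\Theta\rangle)^2\le B_t+\beta_t\Bigr\},\label{equ:CI-proposal}
\end{align}
with $\beta_t = O(\log(1/\delta))$ accounting for the sub-Gaussian martingale concentration. After completing the square in $\Theta$, \eqref{equ:CI-proposal} reduces to an ellipsoid $\|\mathrm{vec}(\Theta-\bar\Theta_t)\|_{V_t}\le\gamma_t$ with $V_t=\lambda I+\sum_{s\le t}\mathrm{vec}(X_s)\mathrm{vec}(X_s)^\top$ and $\gamma_t=\widetilde{O}(\sqrt{B_T})$, and $\Theta^*\in C_t$ for every $t\le T$ with probability at least $1-\delta$ by a union bound over the two high-probability events in steps (i) and (ii).

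For step (iii), optimism at round $t$ gives $\langle X^*,\Theta^*\rangle\le\langle X_t,\widetilde\Theta_t\rangle$, so the instantaneous regret is at most $\langle X_t,\widetilde\Theta_t-\Theta^*\rangle\le\|X_t\|_{V_{t-1}^{-1}}\cdot 2\gamma_{t-1}$. Cauchy--Schwarz across rounds together with the elliptical potential lemma in the vectorized ambient space (of dimension $d_1d_2$) yields $\sum_t\min(1,\|X_t\|_{V_{t-1}^{-1}}^2)=O(d_1d_2\log T)$, so that
\begin{align*}
R_T=\widetilde{O}\!\left(\sqrt{d_1d_2\cdot T\cdot B_T}\right)=\widetilde{O}\!\left(\sqrt{d_1d_2(d_1+d_2)rT\log(1/\delta)}\right),
\end{align*}
and $d_1d_2\le(d_1+d_2)^2/4$ finishes the proof of Theorem~\ref{thm:LowLOC-EWAF}.

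The main obstacle is step (i): one must carry out the EW square-loss analysis in the stochastic bandit setting, with comparators drawn from a $1/T$-net of $S_r$ rather than from $S_r$ itself, and one must upgrade the standard in-expectation EW second-moment inequality into a high-probability tail bound -- this is exactly what forces the calibration $\eta\asymp 1/\log(T/\delta)$ in the algorithm. Once that lemma (Lemma~\ref{lemma:EW}) is in place, the remaining steps are essentially a matrix-ified replay of the \citet{abbasi2012online} template, with the crucial gain being that $B_T$ scales with the effective low-rank dimension $(d_1+d_2)r$ rather than the ambient $d_1d_2$.
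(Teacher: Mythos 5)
Your proposal follows essentially the same route as the paper: a $(9T)^{(d_1+d_2+1)r}$-sized $1/T$-net of $S_r$ (Lemma~\ref{lemma:cover}), an exp-concavity-based EW regret bound of $\widetilde{O}((d_1+d_2)r\log(1/\delta))$ (Lemma~\ref{lemma:EW}), the online-to-confidence-set conversion (Lemma~\ref{lemma:LOC}), and the OFUL-style elliptical-potential analysis in dimension $d_1d_2$ (Lemma~\ref{lemma:regret_LowLOC}), combined exactly as in the paper. The only quibbles are cosmetic: the discretization error is linear rather than quadratic in the net resolution (the difference of squared losses is bounded by $\varepsilon$ times the residuals, giving $O(T\varepsilon\sqrt{\log(T/\delta)})$, still negligible at $\varepsilon=1/T$), and your $B_T$ conflates the online regret with the squared prediction error $\sum_t(\hat y_t-\langle X_t,\Theta^*\rangle)^2$, which are related only through the conversion step you correctly invoke afterwards.
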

Note that LowLOC achieves the desired goal of  outperforming the standard linear bandit approach with $\widetilde{O}(d_1d_2\sqrt{T})$ regret.
Furthermore, this bound does not depend on any other problem-dependent parameters such as least singular value of $\Theta^*$ and does not require any other assumption which appeared in \cite{jun2019bilinear}.
In the following sub-sections, we explain details of our algorithm design choices.

\subsection{OFU and Online-to-confidence-set Conversion}
This algorithm follows the standard Optimism in the Face of Uncertainty (OFU) principle.
We maintain a confidence set $\mathcal{C}_t$ at every round that contains the true parameter $\Theta^*$ with high probability and we choose the action $X_t$ according to
\begin{align*}
    (X_t,\widetilde{\Theta}_t) = \argmax_{(X,\Theta)\in \mathcal{X}\times \mathcal{C}_{t-1}}\langle X,\Theta\rangle.
\end{align*}

Typically, the faster $\mathcal{C}_t$ shrinks, the lower regret we have.
The main diffculty is to construct $\mathcal{C}_t$ that leverages the low-rank structure so that we only have $\widetilde{O}((d_1+d_2)^{3/2}\sqrt{rT})$ regret.
Our starting point is to use the online-to-confidence-set conversion framework proposed by \cite{abbasi2012online} who builds the confidence set based on an online predictor.
At each round, an online predictor receives $X_t$, predicts $\hat{y}_t$, based on historical data $\{(X_s,y_s)\}_{s=1}^{t-1}$, observes the true value $y_t$ and suffers a loss $\ell_t\left(\hat{y}_t\right) \triangleq \left(y_t-\hat{y}_t\right)^2$.
The performance of this online predictor is measured by comparing its cumulative loss to the cumulative loss of a fixed linear predictor using coefficient $\Theta$: 
\begin{align*}
    \rho_t(\Theta) = \sum_{s=1}^{t}\ell_s(\hat{y}_s)-\ell_s(\langle\Theta,X_s\rangle).
\end{align*}

The key idea of online-to-confidence-set conversion (adapted to our low-rank setting) is that if one can guarantee $\sup_{\left\|\Theta\right\|_F\leq 1,\text{rank}(\Theta)\leq r}\rho_t(\Theta)\leq B_t$ for some non-decreasing sequence $\{B_t\}_{t=1}^T$, we can use this information to construct the confidence interval for $\Theta^*$ as:
\begin{align}
	\mathcal{C}_{t} &= \{\Theta\in\mathbb{R}^{d_1\times d_2}:\notag\\
	&\left\|\Theta\right\|_F^2+\sum_{s=1}^{t}(\hat{y}_s-\langle\Theta,X_s\rangle)^2\leq 1+\beta_t(\delta)\},
	\label{equ:CI}
\end{align}
where $\beta_t(\delta) = 1+2B_t+32\log\left(\left(\sqrt{8}+\sqrt{1+B_t}\right)/\delta\right)$ and $\delta$ is the failure probability. 

Lemma~\ref{lemma:LOC} in appendix guarantees that $\Theta^*$ is contained in $\cap_{t\geq 1} \mathcal{C}_t$ with high probability and Lemma~\ref{lemma:regret_LowLOC} further guarantees the overall regret 
\begin{align}
\label{equ:EW_regret_incomplete}
    R_T = \widetilde{O}(\sqrt{d_1d_2\beta_{T-1}(\delta) T}) = \widetilde{O}\left(\left(d_1+d_2\right)\sqrt{B_{T-1}T}\right).
\end{align}

Therefore, the problem to achieve the $\widetilde{O}((d_1+d_2)^{3/2}\sqrt{rT})$ regret bound reduces to designing an online predictor which guarantees $\sup_{\left\|\Theta\right\|_F\leq 1,\text{rank}(\Theta)\leq r}\rho_t(\Theta)\leq B_t$ and $B_t = \widetilde{O}\left((d_1+d_2)r\right)$.
To achieve this rate, the key is to leverage the low-rank structure of $\Theta^*$.

\subsection{Online Low Rank Linear Prediction}
We adopt the classical \emph{exponentially weighted average forecaster}  (EW) framework~\citep{cesa2006prediction} which uses $N$ experts  to predict $\hat{y}_t$ with the following formula
\begin{align}
\widehat{y}_t = \frac{\sum_{i=1}^{N}e^{-\eta L_{i,t-1}}f_{i,t}}{\sum_{j=1}^{N}e^{-\eta L_{j,t-1}}}. \label{equ:exp_forecaster}
\end{align}
In above, $f_i$ denotes the $i$-th expert that makes a prediction $f_{i,t}$ at time $t$,
$L_{i,t-1}\triangleq\sum_{s=1}^{t-1}\ell_s(f_{i}\left(X_t\right))$ is the cumulative loss incurred by expert $i$, and $\eta$ is a tuning parameter.
By choosing $\eta$ carefully, one can guarantee that this predictor achieves $O\left(\log N \log(T/\delta)\right)$ regret comparing with the best expert among the expert set.
See backgrounds on the construction of EW in Section~\ref{sec:app_EW_prelim.tex} and Proposition 3.1 in~\citet{cesa2006prediction}.



In our setting, an expert can be viewed as a matrix $\Theta$ that satisfies $\|\Theta\|_F \le 1$ and $\text{rank}\left(\Theta\right) \le r$, and makes prediction according to $f_{\Theta,t}\triangleq\langle \Theta, X_t\rangle$.
There are infinitely many such experts and therefore we cannot directly use EW which requires finite number of experts.
Our main idea is to construct $N$ experts which guarantees $\log N$ is small and these $N$ experts can represent the original expert set $S_r\triangleq \{\Theta \in \mathbb{R}^{d_1 \times d_2}:\|\Theta\|_F \le 1, \text{rank}\left(\Theta\right) \le r\}$ well, and then apply EW using these $N$ experts.
We construct an $\varepsilon$-net $\bar{S}_r(\varepsilon)$, i.e., for any $\Theta\in S_r$, there exists a $\bar{\Theta}\in\bar{S}_r(\varepsilon)$, such that $\left\|\Theta-\bar{\Theta}\right\|_F\leq \epsilon$. 
We further prove that $|\bar{S}_r(\varepsilon)|\leq (9/\varepsilon)^{(d_1+d_2+1)r}$ in Lemma~\ref{lemma:cover}, so the number of experts $N$ in Equation~\eqref{equ:exp_forecaster} is at most $(9T)^{(d_1+d_2+1)r}$ if we set $\varepsilon = 1/T$.

The following lemma summarizes the performance of this online predictor.
\begin{lemma}[Regret of EW under Squared Loss] \label{lemma:EW}
	Let $\eta = \frac{1}{2(2+\sqrt{2\log\left(2T/\delta\right)})^2}$ in EW forecaster~\eqref{equ:exp_forecaster}. Then, for any $0<\delta<0.25$, with probability at least $1-\delta$, we have
\begin{align*}
 \sup_{\left\|\Theta\right\|_F\leq 1,\text{rank}(\Theta)\leq r}\rho_T(\Theta) = \widetilde{O}\left((d_1+d_2)r\log\left(\frac{1}{\delta}\right)\right).
\end{align*}
\end{lemma}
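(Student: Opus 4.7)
The plan is to combine three ingredients: a standard exp-concave/squared-loss regret guarantee for EW against the best expert in the finite pool $\bar S_r(1/T)$, a covering number estimate for $S_r$, and a discretization argument that transfers the bound to arbitrary $\Theta\in S_r$. The sub-Gaussian noise will be controlled by a single high-probability truncation at the start.

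First I would set the ``clean event'' $\mathcal{E}$: by a union bound over $t\le T$, with probability at least $1-\delta/2$ we have $|\eta_t|\le \sqrt{2\log(2T/\delta)}$ for every $t$. On $\mathcal{E}$, since $|\langle X_t,\Theta^*\rangle|\le\|X_t\|_F\|\Theta^*\|_F\le 1$, the observed rewards satisfy $|y_t|\le 1+\sqrt{2\log(2T/\delta)}$, while every expert $\Theta_i\in\bar S_r(1/T)$ produces $f_{\Theta_i,t}\in[-1,1]$ because $\|\Theta_i\|_F\le 1$. Consequently all squared-loss residuals $|y_t-\hat y_t|,|y_t-f_{\Theta_i,t}|$ are bounded by $B:=2+\sqrt{2\log(2T/\delta)}$. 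This is exactly the scaling for which the chosen $\eta=1/(2B^2)$ makes the squared loss $\eta$-exp-concave on the relevant range.

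Next I would invoke the classical EW/exp-concave bound (Proposition 3.1 of \citet{cesa2006prediction}, restated in Section~\ref{sec:app_EW_prelim.tex}): on $\mathcal{E}$, for every expert $\Theta_i\in\bar S_r(1/T)$,
\begin{align*}
\sum_{s=1}^T\ell_s(\hat y_s)-\sum_{s=1}^T\ell_s(f_{\Theta_i,s})\;\le\;\frac{\log N}{\eta}\;=\;2B^2\log N,
\end{align*}
where $N=|\bar S_r(1/T)|$. By Lemma~\ref{lemma:cover}, $\log N\le (d_1+d_2+1)r\log(9T)$, so the right-hand side is $\widetilde O((d_1+d_2)r\log(1/\delta))$. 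This handles the regret with respect to every \emph{covered} low-rank matrix.

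It then remains to pay the discretization error. For an arbitrary $\Theta\in S_r$ pick $\bar\Theta\in\bar S_r(1/T)$ with $\|\Theta-\bar\Theta\|_F\le 1/T$. Writing $a_s=y_s-\langle\bar\Theta,X_s\rangle$ and $b_s=\langle\Theta-\bar\Theta,X_s\rangle$, one has $\ell_s(\langle\bar\Theta,X_s\rangle)-\ell_s(\langle\Theta,X_s\rangle)=2a_sb_s-b_s^2$. On $\mathcal{E}$, $|a_s|\le B$ and $|b_s|\le\|X_s\|_F/T\le 1/T$ by Cauchy–Schwarz, so this quantity is at most $2B/T+1/T^2$ per round, and summing over $s\le T$ gives an error bounded by $2B+1/T=O(\sqrt{\log(T/\delta)})$. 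Adding this to the previous display yields the desired bound uniformly over $\{\|\Theta\|_F\le 1,\text{rank}(\Theta)\le r\}$, still on the event $\mathcal{E}$ of probability $\ge 1-\delta/2\ge 1-\delta$.

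The main obstacle is not the algebra but the interaction between the unbounded sub-Gaussian noise and the exp-concavity of the squared loss: without the preliminary truncation, one cannot justify the choice of $\eta$, because the required exp-concavity constant depends on an a priori bound for the residuals. Fortunately, a single union bound over the $T$ rounds suffices, and the logarithmic inflation of $B$ cleanly matches the $\log(1/\delta)$ factor appearing in the statement. A minor subtlety is that the comparator set $S_r$ is not convex, so the argument must pass through the covering rather than through any Jensen-type shortcut; Lemma~\ref{lemma:cover} is precisely what makes the covering cardinality $\log N=\widetilde O((d_1+d_2)r)$ small enough to give the advertised rate.
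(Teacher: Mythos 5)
Your proposal is correct and follows essentially the same route as the paper's proof: truncate the sub-Gaussian noise to get $|y_t|\le 1+\sqrt{2\log(2T/\delta)}$ on a high-probability event, verify exp-concavity of the squared loss at $\eta=1/(2B^2)$ so that Proposition 3.1 of Cesa-Bianchi and Lugosi gives regret $\log|\bar S_r(1/T)|/\eta$ against the covering, and then pay an $O(B)$ discretization cost via Lemma~\ref{lemma:cover} with $\varepsilon=1/T$. Your explicit $2a_sb_s-b_s^2$ computation of the discretization error is in fact slightly cleaner than the bound written in the paper, but it is the same argument.
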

To obtain Theorem~\ref{thm:LowLOC-EWAF}, one just needs to plug Lemma~\ref{lemma:EW} into Equation~\eqref{equ:EW_regret_incomplete} by defining $B_T$ as $\sup_{\left\|\Theta\right\|_F\leq 1,\text{rank}(\Theta)\leq r}\rho_T(\Theta)$.
\section{Low-rank Generalized Linear Bandit}\label{sec:LowGLOC}
\begin{algorithm}[t]
	\caption{Low-rank Generalized Linear Bandit with Online Computation (LowGLOC)}
	\label{algo:LowGLOC}
	\begin{algorithmic}[1]
	    
	    \STATE \textbf{Input: }arm set: $\mathcal{X}$, horizon: $T$, $\frac{1}{T}$-net for $S_r$: $\bar{S}_r(\frac{1}{T})$, failure rate $\delta$, EW constant $\eta \asymp \frac{1}{\log\left(T/\delta\right)}$, function $m(\cdot)$ in the generalized linear model.
		\STATE Initial confidence set\\
		$C_0 = \{\Theta\in\mathbb{R}^{d_1\times d_2}: \left\|\Theta\right\|_F^2\leq 1\}$.
		\FOR{$t=1,\ldots,T$}
		\STATE $(X_t,\widetilde{\Theta}_t):=\argmax_{(X,\Theta)\in \mathcal{X}\times C_{t-1}} \langle X,\Theta\rangle$.
		\STATE Pull arm $X_t$ and receive reward $y_t$.
		\STATE Compute EW predictor\\
		$\hat{y}_t = \frac{\sum_{i=1}^{|\bar{S}_r(\frac{1}{T})|}e^{-\eta L_{i,t-1}}f_{\Theta_i,t}}{\sum_{j=1}^{|\bar{S}_r(\frac{1}{T})|}e^{-\eta L_{j,t-1}}}$,\\ where $f_{\Theta_i,t} \triangleq \left\langle X_t,\Theta_i\right\rangle$ for $\Theta_i\in\bar{S}_r(\frac{1}{T})$.
		\STATE Update losses 
		$L_{i,t} = \sum_{s=1}^t-f_{\Theta_i,s}y_s+m(f_{\Theta_i,s})$, for $i=1,\ldots,|\bar{S}_r(\frac{1}{T})|$.
		\STATE Update $\mathcal{C}_t$ according to Equation~\eqref{equ:CI_GLB}, where $B_t^{\text{GLB}}$ is as defined in Lemma~\ref{lemma:EW_NLL}.
		\ENDFOR
	\end{algorithmic}
\end{algorithm}
We also study the low-rank generalized linear bandit setting. The main structure of our algorithm LowGLOC (Algorithm~\ref{algo:LowGLOC}) is similar to LowLOC, so we focus on the key differences in this section.


We still use EW to perform online predictions, but instead of the squared loss, we use negative log likelihood (NLL) loss $\ell_s(\hat{y}_s) = -\hat{y}_sy_s+m(\hat{y}_s)$ to construct the forecaster in Equation~\eqref{equ:exp_forecaster}, where $m(\cdot)$ is as defined in Section~\ref{sec:setup}.
Therefore, the performance of EW using NLL loss relative to a fixed linear predictor $\Theta$ is measured by:
\begin{align*}
	\rho_T^{\text{GLB}}(\Theta) &= \left(\sum_{t=1}^{T}-\hat{y}_ty_t+m(\hat{y}_t)\right)\\
	&-\left(\sum_{t=1}^{T}-\langle\Theta,X_t\rangle y_t+m(\langle\Theta,X_t\rangle)\right).
\end{align*}
If there exists a non-decreasing sequence $\{B_t^{\text{GLB}}\}_{t=1}^T$ such that $\sup_{\left\|\Theta\right\|_F\leq 1,\text{rank}(\Theta)\leq r}\rho_t^{\text{GLB}}(\Theta) \leq B_t^{\text{GLB}}$, we construct $\mathcal{C}_t^{\text{GLB}}$ in the following way:
\begin{align}
	\mathcal{C}_t^{\text{GLB}} &= \{\Theta\in\mathbb{R}^{d_1\times d_2}: \notag\\ &\left\|\Theta\right\|_F^2+\sum_{s=1}^{t}\left(\hat{y}_s-\langle\Theta^*,X_s\rangle\right)^2\leq \beta_t^{\text{GLB}}(\delta)\}, \label{equ:CI_GLB}
\end{align}
where
\begin{align*}
    \beta_t^{\text{GLB}}(\delta) &= 2+\frac{4}{\kappa_\mu}B_t^{\text{GLB}}\\
    &+\frac{32L_\mu}{\kappa_\mu^2}\log\left(\left(\sqrt{L_\mu}\sqrt{\frac{8}{\kappa_\mu^2}}+\sqrt{\frac{2}{\kappa_\mu}B_t^{\text{GLB}}+1}\right)\frac{1}{\delta}\right),
\end{align*}
and $\delta$ is the failure probability. 

Lemma~\ref{lemma:GLOC} guarantees that the true parameter $\Theta^*$ is contained in $\cap_{t\geq 1} \mathcal{C}_t^{\text{GLB}}$ with high probability.
Lemma~\ref{lemma:regret_LowGLOC} further guarantees that the overall regret of LowGLOC satisfies
\begin{align*}
    R_T = \widetilde{O}(\sqrt{d_1d_2\beta_{T-1}^{\text{GLB}}(\delta)T}) = \widetilde{O}((d_1+d_2)\sqrt{B_T^{\text{GLB}}T/\kappa_\mu}).
\end{align*}
Following the online-to-confidence-set conversion idea as used in LowLOC, we prove that
\begin{align*}
    B_T^{\text{GLB}} = O\left(\frac{L_\mu^2+c_\mu^2}{\kappa_\mu}(d_1+d_2)r\log T\log\left(\frac{T}{\delta}\right)\right)
\end{align*}
in Lemma~\ref{lemma:EW_NLL}.

We next present the regret of LowGLOC in the following theorem, which can be easily achieved by plugging Lemma~\ref{lemma:EW_NLL} into Lemma~\ref{lemma:regret_LowGLOC} as described in above paragraph.
\begin{thm}[Regret of LowGLOC] \label{thm:LowGLOC-EWAF}
	For $\forall \delta\in(0,0.25]$, with probability at least $1-\delta$, Algorithm~\ref{algo:LowGLOC} achieves regret:
	\begin{align*}
		R_T = \widetilde{O}\left((d_1+d_2)^{3/2}\sqrt{\frac{L_\mu^2+c_\mu^2}{\kappa_\mu^2}rT\log\left(\frac{1}{\delta}\right)}\right).
	\end{align*}
\end{thm}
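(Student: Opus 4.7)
The plan is to chain together the three intermediate results already foreshadowed in Section~\ref{sec:LowGLOC}. First I would verify Lemma~\ref{lemma:GLOC}, which states that the confidence set $\mathcal{C}_t^{\text{GLB}}$ defined by Equation~\eqref{equ:CI_GLB} satisfies $\mathbb{P}(\Theta^*\in \bigcap_{t\ge 1}\mathcal{C}_t^{\text{GLB}})\ge 1-\delta$. The intuition is the usual online-to-confidence-set conversion adapted to GLMs: a second-order expansion of $m$ around $\langle \Theta^*, X_s\rangle$ combined with $\inf\mu'\ge \kappa_\mu$ gives $\ell_s(\hat y_s)-\ell_s(\langle\Theta^*,X_s\rangle)\ge \tfrac{\kappa_\mu}{2}(\hat y_s-\langle\Theta^*,X_s\rangle)^2 - \eta_s(\hat y_s-\langle\Theta^*,X_s\rangle)$. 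Summing in $s$, bounding the linear-in-noise term by a self-normalized martingale tail inequality (using that $\eta_s$ is $\sqrt{L_\mu}$-sub-Gaussian), and rearranging produces exactly the quadratic inequality defining $\mathcal{C}_t^{\text{GLB}}$, with the $1/\kappa_\mu$ and $L_\mu/\kappa_\mu^2$ factors of $\beta_t^{\text{GLB}}$ tracing back to strong convexity of $m$ and the sub-Gaussian variance proxy.

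Second, conditional on $\Theta^*\in \mathcal{C}_{t-1}^{\text{GLB}}$ at every round, I would use the OFU choice of arm and apply Lemma~\ref{lemma:regret_LowGLOC} to get
\[
R_T=\widetilde{O}\!\left(\sqrt{d_1 d_2\, \beta_{T-1}^{\text{GLB}}(\delta)\, T}\right)=\widetilde{O}\!\left((d_1+d_2)\sqrt{B_T^{\text{GLB}}\, T/\kappa_\mu}\right).
\]
The first equality is the standard elliptical-potential argument for generalized linear bandits, where $d_1 d_2$ is the ambient dimension of $\mathbb{R}^{d_1\times d_2}$ and the $L_\mu$-Lipschitzness of $\mu$ turns the GLM regret into a linear-prediction regret; the second is purely algebraic simplification of the closed form of $\beta_{T-1}^{\text{GLB}}(\delta)$.

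Third, I would invoke Lemma~\ref{lemma:EW_NLL} to substitute $B_T^{\text{GLB}}=O\!\left(\tfrac{L_\mu^2+c_\mu^2}{\kappa_\mu}(d_1+d_2)r\log T\log(T/\delta)\right)$ into the previous display and collect the dependence on $L_\mu,c_\mu,\kappa_\mu,r,d_1,d_2,T,\delta$; this yields exactly the claimed $\widetilde{O}\!\big((d_1+d_2)^{3/2}\sqrt{(L_\mu^2+c_\mu^2)rT\log(1/\delta)/\kappa_\mu^2}\big)$ bound.

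The real obstacle is Lemma~\ref{lemma:EW_NLL}: unlike the squared-loss case of Lemma~\ref{lemma:EW}, the NLL loss $-zy+m(z)$ is not a priori bounded nor exp-concave in general, so the classical mix-loss analysis of EW does not apply directly. My plan to bypass this has four steps: (i) use Assumption~\ref{assump:GLOC} to show that on $z\in[-1,1]$ the loss is $L_\mu$-smooth and $\kappa_\mu$-strongly convex, so that each expert's instantaneous loss is bounded in absolute value by $O\!\left(\tfrac{L_\mu^2+c_\mu^2}{\kappa_\mu}\right)$ in high probability (via sub-Gaussianity of $y_t$); (ii) apply Lemma~\ref{lemma:cover} to replace the continuous class $S_r$ with a $\tfrac{1}{T}$-net satisfying $\log N=O((d_1+d_2)r\log T)$; (iii) run EW on this finite class and get a standard $O(\log N\cdot \log(T/\delta))$ regret scaled by the squared loss range from step (i), with the concentration of the cumulative losses handled by a Bernstein-type martingale inequality that justifies the choice $\eta\asymp 1/\log(T/\delta)$; and (iv) use $L_\mu$-Lipschitzness of $\ell_s(\cdot)$ on $[-1,1]$ together with $\|X_t\|_F\le 1$ to show that approximating any $\Theta\in S_r$ by its nearest net point inflates the cumulative NLL regret by at most $O(L_\mu)$ across all $T$ rounds. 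Multiplying these bounds gives the stated estimate on $B_T^{\text{GLB}}$ and, by the chain above, Theorem~\ref{thm:LowGLOC-EWAF}.
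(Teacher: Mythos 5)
Your first three paragraphs track the paper's proof exactly: Lemma~\ref{lemma:GLOC} is proved there by a second-order Taylor expansion of $\ell_s$ at $\langle\Theta^*,X_s\rangle$ using $\inf\mu'\ge\kappa_\mu$ and $\ell_s'(\langle\Theta^*,X_s\rangle)=-\eta_s$, followed by the same self-normalized martingale bound as in the linear case; Lemma~\ref{lemma:regret_LowGLOC} is the standard elliptical-potential argument with the $L_\mu$-Lipschitz link; and the final theorem is literally the substitution of Lemma~\ref{lemma:EW_NLL} into Lemma~\ref{lemma:regret_LowGLOC}. The covering step (your (ii)) and the net-approximation step (your (iv)) also match the paper.

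The genuine gap is in your step (iii). You correctly observe that the NLL loss is not exp-concave \emph{a priori}, but your proposed bypass --- bound the loss range on the high-probability event and then run a bounded-loss EW analysis with a Bernstein-type martingale inequality --- cannot deliver the bound you need. With only a bounded loss range $M$, the exponentially weighted forecaster's regret against $N$ experts is $\eta T M^2/8 + \log(N)/\eta$, which optimizes to $M\sqrt{T\log N/2}$; there is no choice of $\eta$ (and in particular not $\eta\asymp 1/\log(T/\delta)$) that makes this $O(\log N)$ uniformly in $T$. Plugging $B_T^{\text{GLB}}=\widetilde{O}(\sqrt{T(d_1+d_2)r})$ into Lemma~\ref{lemma:regret_LowGLOC} would give $R_T=\widetilde{O}((d_1+d_2)^{5/4}r^{1/4}T^{3/4})$, not the claimed $\sqrt{T}$ rate. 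The fast $O(\log N/\eta)$ online regret with a constant (in $T$, up to log factors) $\eta$ requires exp-concavity, and the paper's key observation is that exp-concavity \emph{does} hold here: under the event $G=\{\max_t|y_t|\le c_\mu+L_\mu+\sqrt{2R^2\log(2T/\delta)}\}$, a direct computation gives
\begin{align*}
F''(\hat y_t) = \eta e^{-\eta m(\hat y_t)+\eta\hat y_t y_t}\left(\eta\left(y_t-\mu(\hat y_t)\right)^2-\mu'(\hat y_t)\right)\le 0
\quad\text{whenever}\quad
\eta\le \frac{\kappa_\mu}{\left(\sqrt{2R^2\log\left(\frac{2T}{\delta}\right)}+2c_\mu+2L_\mu\right)^2},
\end{align*}
since $\mu'\ge\kappa_\mu$ and $|y_t-\mu(\hat y_t)|$ is controlled on $G$. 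With this $\eta$, Lemma~\ref{lemma:exp_cave} and Lemma~\ref{lemma:EWAF} give the $\log|\bar S_r|/\eta = O\left((d_1+d_2)r\log T\cdot\frac{L_\mu\log(T/\delta)+c_\mu^2+L_\mu^2}{\kappa_\mu}\right)$ bound on the net, and your step (iv) then closes the argument. So the missing idea is not a workaround for the lack of exp-concavity, but a proof that the truncated NLL loss is exp-concave with exactly the $\eta$ that produces the $\kappa_\mu$ and $(L_\mu^2+c_\mu^2)$ dependence in the theorem.
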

To the best of our knowledge, this is the first algorithm that achieves $o(d_1d_2\sqrt{T})$ regret bound for low-rank GLM bandits. 

\section{An Efficient Algorithm for the Linear Case}\label{sec:ESTR}
At every round, LowLOC and LowGLOC need to calculate exponentially weighted predictions, which involves calculating weights of the covering of low-rank matrices. These approaches have high computation complexity even though their regret is ideal. In this section, we propose a computationally efficient method LowESTR (Algorithm~\ref{algo:ESTR}) that also achieves
$\widetilde{O}((d_1+d_2)^{3/2}\sqrt{rT})$ regret under mild assumptions on the action set $\mathcal{X}$ as follows.
\begin{assumption}\label{assump:ESTR}
	There exists a sampling distribution $D$ over $\mathcal{X}$ with covariance matrix $\Sigma$, such that $\lambda_{min}(\Sigma)\asymp\frac{1}{d_1d_2}$ and $D$ is sub-Gaussian with parameter $\sigma^2\asymp\frac{1}{d_1d_2}$. (see Definition~\ref{def:subG} in Section~\ref{sec:app_ESTR} for the definition of sub-Gaussian random matrices.)
\end{assumption}
This assumption is easily satisfied in many arm sets.
To guarantee the existence of above sampling distribution $D$, we only need the convex hull of a subset of arms $\mathcal{X}_{sub}\subset\mathcal{X}$ contains a ball with radius $R\leq 1$, which does not scale with $d_1$ or $d_2$.
For example, if  $\mathcal{X}$ is the Euclidean unit ball/sphere in $\mathbb{R}^{d_1\times d_2}$, we can simply set $D$ to be the uniform distribution over $\mathcal{X}$. 
Notably, different choices of $D$ satisfying Assumption~\ref{assump:ESTR} do not affect the overall regret.

We extend the two-stage procedure "Explore Subspace Then Refine (ESTR)" proposed by~\cite{jun2019bilinear}. In stage 1, ESTR estimates the row and column subspaces of $\Theta^*$. In stage 2, ESTR transforms the original problem into a $d_1d_2$-dimensional linear bandit problem and invokes LowOFUL algorithm (Algorithm~\ref{algo:LowOFUL})~\citep{jun2019bilinear}, which leverages the estimated row/column subspaces of $\Theta^*$. 

\begin{algorithm*}[tbp]
	\caption{Low Rank Explore Subspace Then Refine (LowESTR)}
	\label{algo:ESTR}
	\begin{algorithmic}[1]
		\STATE \textbf{Input:} arm set $\mathcal{X}$, time horizon $T$, exploration length $T_1$, rank $r$ of $\Theta^*$, spectral bound $\omega_r$ of $\Theta^*$, sampling distribution for stage 1: $D$; parameters for LowOFUL in stage 2: $B,B_\perp,\lambda,\lambda_\perp$.
		\STATE \textbf{Stage 1: Explore the Low Rank Subspace}
		\STATE Pull $X_t\in\mathcal{X}$ according to distribution $D$ and observe reward $Y_t$, for $t = 1,\ldots,T_1$.
		\STATE Solve $\widehat{\Theta}$ using the problem below:
		\begin{align}
		\widehat{\Theta} = \argmin_{\Theta\in\mathbb{R}^{d_1\times d_2}} \frac{1}{2T_1}\sum_{t=1}^{T_1}\left(Y_t - \langle X_t, \Theta\rangle\right)^2+\lambda_{T_1}\left\|\Theta\right\|_{\text{nuc}} \label{equ:nuc}.
		\end{align}
		\STATE Let $\widehat{\Theta} = U\widehat{S}V^T$ be the SVD of $\widehat{\Theta}$. Take the first $r$ columns of $U$ as $\widehat{U}$, the first $r$ rows of $V$ as $\widehat{V}$. Let $\widehat{U}_\perp$ and $\widehat{V}_\perp$ be orthonormal bases of the complementary subspaces of $\widehat{U}$ and $\widehat{V}$.
		\STATE \textbf{Stage 2: Refine Standard Linear Bandit Algorithm}
		\STATE Rotate the arm feature set: $\mathcal{X'}:=\{[\widehat{U} \ \widehat{U}_\perp]^TX[\widehat{V} \ \widehat{V}_\perp]: X\in\mathcal{X}\}$.
		\STATE Define a vectorized arm feature set so that the last $(d_1-r)(d_2-r)$ components are from the complementary subspaces:
		\begin{align*}
		\mathcal{X'}_{\text{vec}}:=\{\text{vec}(X_{1:r,1:r}');\text{vec}(X_{r+1:d_1,1:r}');\text{vec}(X_{1:r,r+1:d_2}');\text{vec}(X_{r+1:d_1,r+1:d_2}'):X'\in\mathcal{X'}\}.
		\end{align*}
		\STATE For $T_2 = T-T_1$ rounds, invoke LowOFUL (Algorithm~\ref{algo:LowOFUL}) with arm set $\mathcal{X'}_{\text{vec}}$, the low dimension $k=(d_1+d_2)r-r^2$ and $\gamma(T_1) \asymp \frac{(d_1+d_2)^3r}{T_1\omega_r^2}$, $B,B_\perp,\lambda,\lambda_\perp$.
	\end{algorithmic}
\end{algorithm*}

\begin{algorithm}[h]
	\caption{LowOFUL~\citep{jun2019bilinear}}
	\label{algo:LowOFUL}
	\begin{algorithmic}[1]
		\STATE \textbf{Input: }$T,k$, arm set $\mathcal{A}\subset\mathbb{R}^{d_1\times d_2}$, failure rate $\delta$ and positive constants $B,B_\perp,\lambda,\lambda_\perp$.
		\STATE $\Lambda = \textbf{diag}(\lambda,\ldots,\lambda,\lambda_\perp,\ldots,\lambda_\perp)$, where $\lambda$ occupies the first $k$ diagonal entries. 
		\FOR{$t=1,\ldots,T$}
		\STATE Compute $a_t = \argmax_{a\in\mathcal{A}}\max_{\theta\in \mathcal{C}_{t-1}}\langle\theta,a\rangle$.
		\STATE Pull arm $a_t$ and receive reward $y_t$.
		\STATE Update $C_t = \{\theta:\left\|\theta-\hat{\theta}\right\|_{V_t}\leq \sqrt{\beta_t}\}$, \\
		where $\sqrt{\beta_t} = \sqrt{\log\frac{|V_t|}{|\Lambda|\delta^2}}+\sqrt{\lambda}B+\sqrt{\lambda_\perp}B_\perp$, \\
		$V_t = \Lambda+\sum_{s=1}^{t}a_ta_t^T$, 
		\\$\hat{\theta}_t = (\Lambda+A^TA)^{-1}A^T\mathbf{y}$. \\
		(Here $A = [a_1^T;\ldots;a_t^T]$ and $\mathbf{y}:=[y_1,\ldots,y_t]^T$).
		\ENDFOR
	\end{algorithmic}
\end{algorithm}

\subsection{Description for LowESTR}
LowESTR also proceeds with the two-stage framework as ESTR, but we use different estimation method in stage 1. 
\paragraph{Stage 1.}
We are inspired by a line of work on low-rank matrices recovery using nuclear-norm penalty with squared loss \citep{wainwright2019high}.
The learner pulls arm $X_t\in\mathcal{X}$ according to distribution $D$ and observes the reward $y_t$ up to a horizon $T_1$, then uses $\{X_t,y_t\}_{t=1}^{T_1}$ to solve a nuclear-norm penalized least square problem in \eqref{equ:nuc} and receives an estimated $\widehat{\Theta}$ for $\Theta^*$. 
Notably, instead of invoking an NP-hard problem in stage 1 as ESTR, the optimization problem \eqref{equ:nuc} in LowESTR is convex and thus can be solved easily using standard gradient based methods. 
Assumption~\ref{assump:ESTR} guarantees $\left\|\widehat{\Theta}-\Theta^*\right\|_F^2 \asymp \frac{(d_1+d_2)^3r}{T_1}$ in Theorem~\ref{thm:theta_conv} (Section~\ref{sec:app_theta_conv}).
We get the estimated row/column subspaces of $\Theta^*$ simply by running an SVD step. 

\paragraph{Stage 2.}
In stage 2, we apply LowOFUL algorithm (Algorithm~\ref{algo:LowOFUL}) proposed by~\citet{jun2019bilinear} in our setting. The key idea is reducing the problem to linear bandit and utilizing the estimated subspaces in the standard linear bandit method OFUL~\citep{abbasi2011improved}.

We now present the overall regret of Algorithm~\ref{algo:ESTR}.
\begin{thm}[Regret of LowESTR for Low Rank Bandit] \label{thm:ESTR_regret}
	Suppose we run LowESTR  in stage 1 with $T_1 \asymp (d_1+d_2)^{3/2}\sqrt{rT}\frac{1}{\omega_r}$ and $\lambda_{T_1}^2 \asymp \frac{1}{T_1\min\{d_1,d_2\}}$. We invoke LowOFUL (Algorithm~\ref{algo:LowOFUL}) in stage 2 with $k = r(d_1+d_2-r)$, $\lambda_\perp = \frac{T_2}{k\log\left(1+T_2/\lambda\right)}$, $B = 1$, $B_\perp = \gamma(T_1)$, and the rotated arm sets $\mathcal{X}_{\text{vec}}'$ defined in Algorithm~\ref{algo:ESTR}, the overall regret of LowESTR is, with prob at least $1-2\delta$,
	\begin{align*}
	R_T = \widetilde{O}\left((d_1+d_2)^{3/2}\sqrt{rT}\frac{1}{\omega_r}\right).
	\end{align*}
\end{thm}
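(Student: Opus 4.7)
The plan is to decompose the cumulative regret into a Stage 1 contribution (pure exploration) and a Stage 2 contribution (LowOFUL on the rotated arm set), bound each separately, and then tune $T_1$ to balance the two.

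\textbf{Stage 1 (trivial bound).} Since $\|\Theta^*\|_F\le 1$ and $\|X\|_F\le 1$ imply instantaneous regret at most $2$, the Stage 1 regret is at most $2T_1$. With the chosen $T_1\asymp(d_1+d_2)^{3/2}\sqrt{rT}/\omega_r$, this contribution already meets the target rate, so our task reduces to showing that Stage 2 incurs the same order.

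\textbf{Subspace recovery.} The bridge between the stages is Theorem~\ref{thm:theta_conv}, which, under Assumption~\ref{assump:ESTR} and with $\lambda_{T_1}^2\asymp 1/(T_1\min\{d_1,d_2\})$, gives $\|\widehat{\Theta}-\Theta^*\|_F^2\asymp (d_1+d_2)^3 r/T_1$. I would then apply Wedin's $\sin\Theta$ theorem to the SVDs of $\widehat{\Theta}$ and $\Theta^*$: provided $T_1$ is large enough that $\|\widehat{\Theta}-\Theta^*\|_F\le \omega_r/2$, both $\|\widehat{U}_\perp^\top U_*\|_F$ and $\|V_*^\top \widehat{V}_\perp\|_F$ are bounded by $\|\widehat{\Theta}-\Theta^*\|_F/\omega_r$. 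Writing $\Theta^*=U_*S_*V_*^\top$ and expanding, this yields
\begin{equation*}
\bigl\|\widehat{U}_\perp^\top\Theta^*\widehat{V}_\perp\bigr\|_F \;\le\; \|\widehat{U}_\perp^\top U_*\|_F\,\|S_*\|\,\|V_*^\top\widehat{V}_\perp\|_F \;\lesssim\; \frac{\|\widehat{\Theta}-\Theta^*\|_F^{\,2}}{\omega_r^{\,2}} \;\asymp\; \frac{(d_1+d_2)^3 r}{T_1\,\omega_r^{2}},
\end{equation*}
which is exactly the chosen value of $\gamma(T_1)$, and hence of $B_\perp$, in the LowOFUL invocation. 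The other three blocks of the rotated $\Theta^*$ have combined squared norm at most $\|\Theta^*\|_F^2\le 1$, giving $B=1$.

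\textbf{Stage 2 via LowOFUL.} After the rotation and vectorization, the problem becomes a $d_1 d_2$-dimensional linear bandit in which the parameter has at most $k=r(d_1+d_2)-r^2$ "dense" coordinates and the remaining $(d_1-r)(d_2-r)$ coordinates sum (in norm) to at most $B_\perp$. I would then invoke the LowOFUL regret guarantee of~\citet{jun2019bilinear}: with $\lambda_\perp=T_2/(k\log(1+T_2/\lambda))$, the regret over $T_2=T-T_1$ rounds is
\begin{equation*}
\widetilde{O}\!\left(\sqrt{k\,T_2}\,+\,B_\perp\sqrt{T_2\,\lambda_\perp}\right) \;=\; \widetilde{O}\!\left(\sqrt{(d_1+d_2)r\,T}\,+\,\frac{(d_1+d_2)^3 r}{T_1\,\omega_r^{2}}\sqrt{T}\right),
\end{equation*}
after absorbing logarithmic factors. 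Substituting the prescribed $T_1\asymp(d_1+d_2)^{3/2}\sqrt{rT}/\omega_r$ reduces the second term to $\widetilde{O}((d_1+d_2)^{3/2}\sqrt{rT}/\omega_r)$, which dominates the first.

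\textbf{Combining and main obstacle.} Summing the Stage 1 bound $O(T_1)$ with the Stage 2 bound above produces the claimed $\widetilde{O}((d_1+d_2)^{3/2}\sqrt{rT}/\omega_r)$ rate; a union bound over the two high-probability events (concentration for $\widehat{\Theta}$ and for LowOFUL) costs only a factor of $2$ in $\delta$. The main technical hurdle is the subspace-perturbation step: one must (i) verify that $T_1$ is indeed large enough to enter the Wedin regime $\|\widehat{\Theta}-\Theta^*\|_F\le \omega_r/2$ (requiring $T_1\gtrsim (d_1+d_2)^3 r/\omega_r^2$, which holds for sufficiently large $T$), and (ii) translate the Frobenius-norm control on the off-diagonal blocks into the exact $B_\perp$ parameter that LowOFUL requires, including checking that the bias contributed by the three non-perpendicular off-diagonal blocks is absorbed into $B=1$ rather than $B_\perp$. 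Everything else is bookkeeping once the subspace-error bound is in place.
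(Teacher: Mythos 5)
Your proposal follows essentially the same route as the paper: stage 1 contributes $O(T_1)$ trivially, Theorem~\ref{thm:theta_conv} combined with a $\sin\Theta$-type subspace perturbation bound (this is exactly the content of Corollary~\ref{cor:subspace}) yields $B_\perp=\gamma(T_1)\asymp (d_1+d_2)^3 r/(T_1\omega_r^2)$, the LowOFUL guarantee (Lemma~\ref{lemma:LowOFUL}) controls stage 2, and the prescribed $T_1$ balances the two contributions exactly as in Lemma~\ref{lemma:ESTR_sep}. The only blemish is a bookkeeping slip in your displayed stage-2 bound: Lemma~\ref{lemma:LowOFUL} gives the perpendicular contribution as $\sqrt{T}B_\perp\cdot\sqrt{T}=TB_\perp$, whereas you wrote $B_\perp\sqrt{T_2\lambda_\perp}$ and then simplified it to $B_\perp\sqrt{T}$ (neither of which matches the other, nor the lemma); your final substitution of $T_1\asymp(d_1+d_2)^{3/2}\sqrt{rT}/\omega_r$ is consistent only with the correct $TB_\perp$ term, so the conclusion stands as claimed.
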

We believe that this ``Explore-Subspace-Then-Refine" framework can also be extended to the generalized linear setting. In stage 1, an M-estimator that minimizes the negative log-likelihood plus nuclear norm penalty~\citep{fan2019generalized} can be used instead, while in stage 2, one can revise a standard generalized linear bandit algorithm such as GLM-UCB~\citep{filippi2010parametric} by leveraging the low-rank knowledge in the same way as LowOFUL. We leave this extension for future work.

\subsection{Computational Complexity}
Before we end this section, we note that the computational complexity of LowESTR is polynomial in the relevant quantities.
\begin{prop}[Computational complexity of LowESTR]\label{prop:complexity}
The computational complexity of LowESTR (Algorithm~\ref{algo:ESTR}) is at most
\begin{align*}
    O\left(d_1d_2(d_1+d_2)^3rT/\omega_r^2+d_1^2d_2^2T^2+d_1^3d_2^3T\right).
\end{align*}
\end{prop}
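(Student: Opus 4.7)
The plan is to split the accounting into stage 1 (subspace exploration) and stage 2 (LowOFUL), showing that stage 1 produces the first term of the bound and stage 2 produces the remaining two.

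Stage 1 performs three operations: (a) $T_1$ action pulls and reward reads, each in time $O(d_1 d_2)$, totaling $O(T_1 d_1 d_2)$; (b) solving the nuclear-norm regularized least squares \eqref{equ:nuc}; and (c) a single $d_1 \times d_2$ SVD to extract $\widehat{U},\widehat{V}$ in time $O(d_1 d_2(d_1+d_2))$. The dominant cost is (b). Using a standard proximal-gradient scheme, each outer iteration requires a full-batch gradient $\sum_{s=1}^{T_1} (\langle X_s,\Theta\rangle - Y_s) X_s$ of cost $O(T_1 d_1 d_2)$, followed by a singular-value soft-thresholding proximal step requiring one $d_1\times d_2$ SVD of cost $O(d_1 d_2(d_1+d_2))$. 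To drive the optimization error below the statistical noise level $\|\widehat{\Theta}-\Theta^*\|_F^2 \asymp (d_1+d_2)^3 r / T_1$ guaranteed by Theorem~\ref{thm:theta_conv}, it suffices to run at most $O(T_1)$ iterations (acceleration and restricted strong convexity would give far fewer, but a polynomial count is enough). Charging $O(T_1)$ iterations of cost $O(T_1 d_1 d_2)$ each produces $O(T_1^2 d_1 d_2)$. Substituting $T_1 \asymp (d_1+d_2)^{3/2}\sqrt{rT}/\omega_r$ from Theorem~\ref{thm:ESTR_regret} yields the first term $O(d_1 d_2(d_1+d_2)^3 r T/\omega_r^2)$.

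Stage 2 invokes LowOFUL on vectorized arms in ambient dimension $d := d_1 d_2$. At each round $t \in \{1,\ldots,T_2\}$ the algorithm forms $V_t = \Lambda + A^\top A$, inverts it, assembles $\hat{\theta}_t = V_t^{-1} A^\top \mathbf{y}$, and picks $a_t$ by maximizing the UCB $\langle \hat{\theta}_t, a\rangle + \sqrt{\beta_t}\,\|a\|_{V_t^{-1}}$. A from-scratch recomputation of the Gram matrix $A^\top A$ at round $t$ costs $O(t d^2)$; summed over $t=1,\ldots,T_2 \leq T$ this gives $O(T^2 d^2) = O(d_1^2 d_2^2 T^2)$, yielding the middle term. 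Freshly inverting the $d \times d$ matrix $V_t$ each round is $O(d^3)$, totaling $O(T d^3) = O(d_1^3 d_2^3 T)$, which is the third term. The remaining per-round work (the matrix-vector product producing $\hat{\theta}_t$, the log-determinant factor entering $\sqrt{\beta_t}$, and the UCB evaluation for each candidate arm at cost $O(d^2)$ per arm) is dominated by the inversion step and adds no new term. Summing stages 1 and 2 delivers the stated bound.

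The main obstacle I anticipate is step (b) of stage 1: justifying the iteration count of the convex solver rigorously. One has to argue that the nuclear-norm regularized program can be driven to a solution whose Frobenius-norm error is of the same order as the statistical rate of Theorem~\ref{thm:theta_conv} in $\mathrm{poly}(T_1)$ iterations. This is standard for proximal algorithms on smooth-plus-convex composite objectives, but cleanly pinning down the iteration count uses the restricted eigenvalue / restricted strong convexity implied by Assumption~\ref{assump:ESTR}, so that an early-stopped iterate shares the subspace-recovery guarantees of the exact minimizer. Once this is in hand, the rest of the proof reduces to elementary matrix-operation counting.
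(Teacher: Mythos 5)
Your proposal is correct and follows essentially the same accounting as the paper: stage 1 contributes $O(T_1^2 d_1 d_2)$ (the paper uses the subgradient method with target accuracy $\epsilon = 1/\sqrt{T_1}$, hence $O(T_1 d_1 d_2/\epsilon^2) = O(T_1^2 d_1 d_2)$, whereas you use proximal gradient with $O(T_1)$ iterations --- the same count), and stage 2 contributes $O(d_1^2 d_2^2 T^2 + d_1^3 d_2^3 T)$ from the per-round least-squares update. The iteration-count justification you flag as the main obstacle is left at the same informal level in the paper itself, so your argument is not weaker than the original on that point.
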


In stage 1, we solve a convex optimization problem with unknown $\Theta\in\mathbb{R}^{d_1\times d_2}$ using subgradient method, of which the complexity is $O(T_1d_1d_2/\epsilon^2)$ ($\epsilon$ refers to the target accuracy). 
The complexity of the SVD step at the end of stage 1 is $O(d_1d_2\min\{d_1,d_2\})$. 

In stage 2, LowOFUL algorithm (Algorithm~\ref{algo:LowOFUL}) dominates the computational complexity. 
In iteration $t$ of LowOFUL, usually $a_t = \argmax_{a\in\mathcal{A}}\max_{\theta\in\mathcal{C}_{t-1}}\langle\theta,a\rangle$ can be solved with an oracle in constant time, the complexity of least square estimation is $O(d_1^2d_2^2t+d_1^3d_2^3)$ due to matrix multiplication and Cholesky factorization. Thus, in $T_2\leq T$ iterations, the computational complexity of stage 2 is at most $O(d_1^2d_2^2T^2+d_1^3d_2^3T)$. 

Combining the complexity results in two stages, taking the target accuracy $\epsilon = 1/\sqrt{T_1}$ and $T_1 = O\left((d_1+d_2)^{3/2}\sqrt{rT}\frac{1}{\omega_r}\right)$ as stated in Theorem 4, the overall computational complexity in Proposition~\ref{prop:complexity} is achieved.

\section{Lower Bound for Low-rank Linear Bandit}\label{sec:lower}
In this section, we discuss the regret lower bound of the low-rank linear bandit model.
Suppose $d_1=d_2=d$, we first present a $\Omega(dr\sqrt{T})$ lower bound, which is a straightforward extension of the linear bandit lower bound~\citep{lattimore2018bandit}.



\begin{thm}[Lower Bound]\label{thm:lower}
	Assume $dr\leq 2T$ and let $\mathcal{X} = \{X\in\mathbb{R}^{d\times d}: \left\|X\right\|_F\leq 1\}$. Then $\exists\Theta\in\mathbb{R}^{d\times d}$, where $\left\|\Theta\right\|_F^2\leq \frac{d^2r^2}{128T}$, $\text{rank}(\Theta)\leq r$, s.t. 
	\begin{align*}
		\mathbb{E}\left[R_T(\Theta)\right] = \Omega(dr\sqrt{T}).
	\end{align*}
\end{thm}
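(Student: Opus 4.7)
My plan is to reduce the low-rank matrix bandit to a standard $D$-dimensional linear bandit on the Euclidean unit ball with $D = rd$, and then invoke the classical minimax lower bound from \citet{lattimore2018bandit}.

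The first step is to restrict attention to the ``row-supported'' subclass of parameters
\[
\mathcal{M} := \bigl\{\Theta \in \mathbb{R}^{d\times d} : \Theta_{i,\cdot} = 0 \text{ for all } i > r\bigr\},
\]
each of which automatically has rank at most $r$. I identify $\Theta \in \mathcal{M}$ with $\theta := \operatorname{vec}(\Theta_{1:r,\cdot}) \in \mathbb{R}^{rd}$. For any arm $X$ with $\|X\|_F \le 1$, only the first $r$ rows contribute to the reward, so writing $x := \operatorname{vec}(X_{1:r,\cdot})$ gives $\langle X,\Theta\rangle = \langle x,\theta\rangle$ and $\|x\|_2 \le \|X\|_F \le 1$. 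Conversely, every $x$ with $\|x\|_2 \le 1$ arises from some admissible $X$ by padding with zero rows, and the $1$-sub-Gaussian noise transfers verbatim. Hence the restricted problem is \emph{exactly} a $D$-dimensional stochastic linear bandit on the Euclidean unit ball.

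Next I invoke the standard Assouad-type lower bound: using the hypercube construction $\theta_\epsilon = \delta\epsilon$ for $\epsilon \in \{\pm 1\}^D$ with $\delta$ chosen as a suitable multiple of $\sqrt{D/T}$, any algorithm incurs minimax expected regret $\Omega(D\sqrt{T})$ on this $D$-dimensional instance. Plugging in $D = rd$ yields $\Omega(dr\sqrt{T})$, and the assumption $dr \le 2T$ is precisely what ensures $\delta \lesssim 1$ so that the construction is admissible (i.e.\ the per-round reward stays bounded). Lifting the worst-case $\theta^*$ back to the corresponding $\Theta^* \in \mathcal{M}$ preserves rank and Frobenius norm. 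Tuning the hidden constant so that $\delta^2 = r/(128T)$ produces $\|\Theta^*\|_F^2 = \delta^2 D = \delta^2 rd = d^2 r^2 /(128T)$ as required, at the cost of only a constant factor in the regret; since $\delta$ remains $\Theta(\sqrt{D/T})$ up to constants, the Assouad lower bound still delivers $\Omega(rd\sqrt{T})$ regret.

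I do not anticipate a genuine obstacle: the row-supported embedding preserves rank and Frobenius norm exactly, and the arm-set reduction is a bijection onto the $rd$-dimensional unit ball, so the argument is essentially a plug-in of the known $D$-dimensional lower bound with $D=rd$. The only mildly delicate step is tracking the explicit constant $1/128$ through the Assouad calculation so that the Frobenius bound matches the statement precisely, but this is routine bookkeeping rather than a conceptual difficulty.
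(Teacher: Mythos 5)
Your proposal takes essentially the same route as the paper: its proof also restricts to matrices supported on the first $r$ rows with entries $\pm\Delta$, $\Delta = \sqrt{dr/T}/(8\sqrt{3})$ --- exactly the $rd$-dimensional hypercube instance on the unit ball --- and then reruns the standard Pinsker/Assouad argument in full rather than citing the $D$-dimensional lower bound as a black box. One small slip in your bookkeeping: for $\left\|\Theta^*\right\|_F^2 = \delta^2 rd$ to equal $d^2r^2/(128T)$ you need $\delta^2 = dr/(128T)$, not $r/(128T)$; the former is also the choice that keeps $\delta = \Theta(\sqrt{D/T})$ and hence delivers the $\Omega(dr\sqrt{T})$ rate.
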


Above bound is tight when $r=d$ as it matches with the standard $d^2$-dimensional linear bandit lower bound, but for small $r$, our upper bound is larger than the lower bound by a factor of $\sqrt{d/r}$.

Nevertheless, we conjecture that $\Omega(d^{3/2}\sqrt{rT})$ is the correct lower bound for small $r$.
It is well-known that the regret lower bound for sparse linear bandit problem (dimension $d$, sparsity $s$) is $\Omega(\sqrt{sdT})$~\citep{lattimore2018bandit}.
Our low-rank linear bandit problem can be viewed as a $d^2$-dimensional linear bandit problem with $dr$ degrees of freedom in $\Theta^*$. Then, using the analogue of the degrees of freedom between sparse vectors and low-rank matrices, one can plug in $d^2$ for $d$ and $dr$ for $s$ in the sparse linear bandit regret lower bound and then achieve $\Omega(d^{3/2}\sqrt{rT})$ as our lower bound.

\section{Experiments}\label{sec:exp}
In this section, we compare the performance of OFUL and LowESTR to validate that it is crucial to utilize the low-rank structure. 

We run our simulation with $d_1 = d_2 = 10, r = 1$ and $d_1 = d_2 = 10, r = 3$.
In both settings, the true $\Theta^*\in\mathbb{R}^{d_1\times d_2}$ is a diagonal matrix. For $r = 1$, we set $\text{diag}(\Theta^*) = (0.5,0,\ldots,0)$ while for $r=3$, $\text{diag}(\Theta^*) = (0.5,0.5,0.5,0,\ldots,0)$.
For arms in both settings, we draw 256 vectors from $N(0,I_{d_1d_2})$ and standardize them by dividing their 2-norms, then we reshape all standardized $d_1d_2$-dimensional vectors to $d_1\times d_2$ matrices. 
We use these matrices as the arm set $\mathcal{X}$. 
For each arm $X\in\mathcal{X}$, the reward is generated by $y = \left\langle X,\Theta^*\right\rangle+\varepsilon$, where $\varepsilon\sim N(0,0.01^2)$.
We run both algorithms for $T = 3000$ rounds and repeat 100 times for each simulation setup to calculate the averaged regrets and their 1-standard deviation confidence intervals at every time step.

We leave the hyper-parameters of OFUL and LowESTR in the appendix (Section~\ref{sec:app_exp}). Regret comparison plots are displayed in Figure~\ref{fig:OFUL_ESTR}.

\begin{figure}[h]
	\centering
	\includegraphics[width=0.45\textwidth]{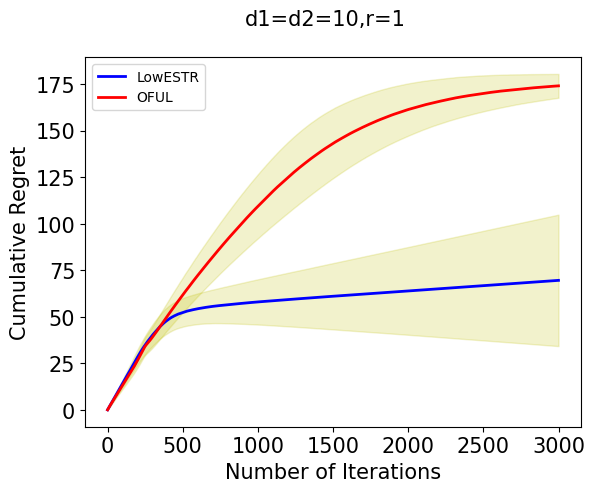}
	~
	\includegraphics[width=0.45\textwidth]{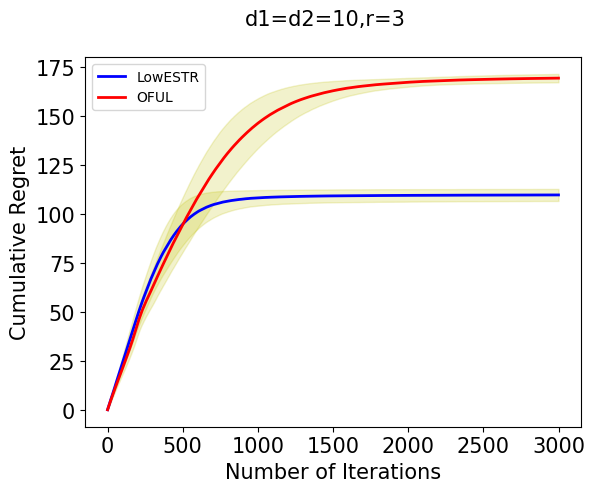}
	\caption{Regret comparison between OFUL and LowESTR for the two settings. We plot the averaged cumulative regret in red and blue curves, and 1-standard deviation for each method within the yellow shadow area.}
	\label{fig:OFUL_ESTR}
\end{figure}

We observe that in both plots, LowESTR incurs less regret comparing to OFUL within several hundreds of time steps. 
Further, as we increase the rank from $r=1$ to $r=3$, the cumulative regret gap between the two approaches becomes smaller. 
This phenomenon is compatible with our theory.

Other than the comparisons between OFUL and LowESTR, we also conduct simulations to see the sensitivity of LowESTR to the eigenvalue parameter $\omega_r$. We observe that LowESTR indeed performs better as $\omega_r$ goes larger, which again matches with our theory.
The detailed description and the plot for the sensitivity experiments are left to the appendix (Section~\ref{sec:app_exp}).

\section{Conclusion \& Future Work}\label{sec:conclusion}
In this paper, we studied the low-rank (generalized) linear bandit problem. 
We proposed LowLOC and LowGLOC algorithm for the linear and generalized linear setting, respectively.
Both of them enjoy $\tilde{O}((d_1+d_2)^{3/2}\sqrt{rT})$ regret. 
Further, our efficient algorithm LowESTR achieves $\tilde{O}((d_1+d_2)^{3/2}\sqrt{rT}/\omega_r)$ regret under mild conditions on the action set. 

There are several interesting directions we left for future work.
First, building on some preliminary ideas in Section~\ref{sec:ESTR} about how to extend LowESTR to the generalized linear setting, it should be possible to obtain a similar regret bound under certain regularity conditions on the link function.
Second, it will be interesting to investigate if one can design an efficient algorithm whose regret does not depend on $1/\omega_r$. 
Third, in Section~\ref{sec:lower}, we argued that $\tilde{O}((d_1+d_2)^{3/2}\sqrt{rT})$ should be a tight lower bound. 
It will be nice to formally prove this.

\section*{Acknowledgement}

AT acknowledges the support of NSF CAREER grant IIS-1452099 and an Adobe Data Science Research Award.
\bibliographystyle{apalike}
\bibliography{ref}
\newpage
\appendix
\section{Proof for Theorem~\ref{thm:LowLOC-EWAF}}\label{sec:app_LowLOC}
\begin{lemma}[Covering number for low-rank matrices, modified from~\citep{candes2011tight}] \label{lemma:cover}
	Let $S_r = \{\Theta\in\mathbb{R}^{d_1\times d_2}:\text{rank}(\Theta)\leq r, \left\|\Theta\right\|_F\leq 1\}$. Then there exists an $\epsilon-$net $\bar{S}_r$ for the Frobenius norm obeying
	\begin{align}
	|\bar{S}_r|\leq (9/\epsilon)^{(d_1+d_2+1)r}.
	\end{align}
\end{lemma}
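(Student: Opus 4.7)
The plan is to parametrize $S_r$ via the singular value decomposition and then build a net by covering each factor separately. Every $\Theta\in S_r$ can be written as $\Theta = U\Sigma V^\top$ with $U\in\mathbb{R}^{d_1\times r}$ and $V\in\mathbb{R}^{d_2\times r}$ having orthonormal columns, and with $\Sigma\in\mathbb{R}^{r\times r}$ diagonal, nonnegative, and satisfying $\|\Sigma\|_F = \|\Theta\|_F \le 1$. So $\Sigma$ lives in the unit Frobenius ball of a space of dimension $r$, while $U$ and $V$ live in subsets of the operator-norm unit balls of $\mathbb{R}^{d_1\times r}$ and $\mathbb{R}^{d_2\times r}$ respectively (since $\|U\|_{\mathrm{op}}=\|V\|_{\mathrm{op}}=1$).

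First I would invoke the standard volumetric covering number bound: the Euclidean (resp. operator-norm) unit ball in $\mathbb{R}^k$ admits an $\epsilon$-net of size at most $(1+2/\epsilon)^k \le (3/\epsilon)^k$ for $\epsilon\le 1$. Applying this gives nets $\bar D$ of size $(3/\epsilon)^r$ for the diagonal factors (a set in $\mathbb{R}^r$), $\bar U$ of size $(3/\epsilon)^{d_1 r}$ for the left factor (operator norm on $\mathbb{R}^{d_1\times r}$), and $\bar V$ of size $(3/\epsilon)^{d_2 r}$ for the right factor. Taking $\bar S_r := \{\bar U \bar\Sigma \bar V^\top : \bar U\in\bar U,\ \bar\Sigma\in\bar D,\ \bar V\in\bar V\}$ and possibly discarding duplicates gives a candidate net of cardinality at most $(3/\epsilon)^{(d_1+d_2+1)r}$.

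Next I would verify the approximation guarantee. Given $\Theta = U\Sigma V^\top$, pick $\bar U,\bar\Sigma,\bar V$ from the nets within $\epsilon$ of $U,\Sigma,V$ in the respective norms, and add and subtract terms:
\begin{align*}
\Theta - \bar U \bar \Sigma \bar V^\top
= (U-\bar U)\Sigma V^\top + \bar U(\Sigma-\bar\Sigma) V^\top + \bar U \bar\Sigma (V-\bar V)^\top.
\end{align*}
Using $\|ABC\|_F \le \|A\|_{\mathrm{op}}\|B\|_F\|C\|_{\mathrm{op}}$ together with $\|U\|_{\mathrm{op}}=\|V\|_{\mathrm{op}}=\|\bar U\|_{\mathrm{op}}\le 1$ (after rescaling, which only loses a constant) and $\|\Sigma\|_F,\|\bar\Sigma\|_F\le 1$, each summand is bounded by $\epsilon$, yielding $\|\Theta-\bar U\bar\Sigma\bar V^\top\|_F \le 3\epsilon$. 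Replacing $\epsilon$ by $\epsilon/3$ at the start produces an $\epsilon$-net of size $(9/\epsilon)^{(d_1+d_2+1)r}$, which is the claimed bound.

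The main subtlety is handling the $\bar U$ and $\bar V$ factors cleanly: the Stiefel manifold is not itself a nice ball, so the net is built inside the larger operator-norm ball $\{A:\|A\|_{\mathrm{op}}\le 1\}$ and one must ensure the approximants still have operator norm at most $1$ (or control the constant blow-up if they do not) so that the factored inequality above applies. This is precisely where the bounds $(3/\epsilon)^{d_1 r}$ and $(3/\epsilon)^{d_2 r}$ for operator-norm covers appear, which follow because the operator norm ball embeds into the Euclidean ball of the $d_1 r$- and $d_2 r$-dimensional ambient spaces of appropriate radius. Aside from that, the computation is routine and gives exactly the stated exponent $(d_1+d_2+1)r$, where the extra $+1$ accounts for the singular-value factor.
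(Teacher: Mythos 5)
Your proof is correct and follows essentially the same route as the paper's: factor $\Theta$ via the SVD, cover the singular-value factor in Frobenius norm and the two orthonormal factors separately with volumetric $(1+2/\epsilon)^k$ bounds, and combine via the three-term telescoping decomposition. The only difference is cosmetic — you cover $U$ and $V$ in the operator norm and use $\|ABC\|_F\le\|A\|_{\mathrm{op}}\|B\|_F\|C\|_{\mathrm{op}}$, whereas the paper covers them in the max-column-norm $\|\cdot\|_{1,2}$ and uses the analogous inequality for diagonal $\Sigma$; both yield the stated $(9/\epsilon)^{(d_1+d_2+1)r}$ bound.
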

\begin{proof}
	Use SVD decomposition: $\Theta = U\Sigma V^T$ of any $\Theta\in S_r$ obeying $\left\|\Sigma\right\|_F \leq 1$. We will construct an $\epsilon-$net for $S_r$ by covering the set of permissible $U,V$ and $\Sigma$.
	Let $D$ be the set of diagonal matrices with nonnegative diagonal entries and Frobenius norm less than or equal to one. We take $\bar{D}$ to be an $\epsilon/3$-net for $D$ with $|\bar{D}|\leq(9/\epsilon)^r$. Next, let $O_{d_1,r} = \{U\in\mathbb{R}^{d_1\times r}:U^TU=I\}$. To cover $O_{d_1,r}$, we use the $\left\|\cdot\right\|_{1,2}$ norm defined as
	\begin{align}
	\left\|U\right\|_{1,2} = \max_i \left\|U_i\right\|_{\ell_2},
	\end{align}
	where $U_i$ denotes the $i$th column of $\Theta$. Let $Q_{d_1,r} = \{U\in\mathbb{R}^{d_1\times r}:\left\|U\right\|_{1,2}\leq 1\}$. It is easy to see that $O_{d_1,r}\subset Q_{d_1,r}$ since the columns of an orthogonal matrix are unit normed. We see that there is an $\epsilon/3$-net $\bar{O}_{d_1,r}$ for $O_{d_1,r}$ obeying $|\bar{O}_{d_1,r}|\leq (9/\epsilon)^{d_1r}$. Similarly, let $P_{d_2,r} = \{V\in\mathbb{R}^{d_2\times r}:V^TV = I\}$. Define $R_{d_2,r} = \{V\in\mathbb{R}^{d_2\times r}:\left\|V\right\|_{1,2}\leq 1\}$, we have $P_{d_2,r}\subset R_{d_2,r}$. By the same argument, there is an $\epsilon/3$-net $\bar{P}_{d_2,r}$ for $P_{d_2,r}$ obeying $|\bar{P}_{d_2,r}|\leq (9/\epsilon)^{d_2r}$. We now let $\bar{S}_r = \{\bar{U}\bar{\Sigma}\bar{V}^T: \bar{U}\in O_{d_1,r}, \bar{V}\in P_{d_2,r}, \bar{\Sigma}\in\bar{D}\}$, and remark that $|\bar{S}_r|\leq |\bar{O}_{d_1,r}|^2|\bar{D}||\bar{P}_{d_2,r}|\leq (9/\epsilon)^{(d_1+d_2+1)r}$. It remains to show that for all $\Theta\in S_r$, there exists $\bar{\Theta}\in\bar{S}_r$ with $\left\|\Theta-\bar{\Theta}\right\|_F\leq \epsilon$.
	
	Fix $\Theta\in S_r$ and decompose it as $\Theta = U\Sigma V^T$. Then there exists $\bar{\Theta} = \bar{U}\bar{\Sigma}\bar{V}^T\in\bar{S}_r$ with $\bar{U}\in O_{d_1,r}$, $\bar{V}\in P_{d_2,r}$, $\bar{\Sigma}\in\bar{D}$ satisfying $\left\|U-\bar{U}\right\|_{1,2}\leq \epsilon/3$, $\left\|V-\bar{V}\right\|_{1,2}\leq \epsilon/3$ and $\left\|\Sigma-\bar{\Sigma}\right\|_F\leq \epsilon/3$. This gives
	\begin{align}
	\left\|\Theta-\bar{\Theta}\right\|_F &= \left\|U\Sigma V^T-\bar{U}\bar{\Sigma}\bar{V}^T\right\|_F\\
	&= \left\|U\Sigma V^T-\bar{U}\Sigma V^T+\bar{U}\Sigma V^T-\bar{U}\bar{\Sigma}V^T+\bar{U}\bar{\Sigma}V^T-\bar{U}\bar{\Sigma}\bar{V}^T\right\|_F\\
	&\leq \left\|(U-\bar{U})\Sigma V^T\right\|_F+\left\|\bar{U}(\Sigma-\bar{\Sigma})V^T\right\|_F+\left\|\bar{U}\bar{\Sigma}(V-\bar{V})^T\right\|_F.
	\end{align}
	For the first term, since $V$ is an orthogonal matrix, 
	\begin{align}
	\left\|(U-\bar{U})\Sigma V^T\right\|_F^2 &= \left\|(U-\bar{U})\Sigma\right\|_F^2\\
	&\leq \left\|\Sigma\right\|_F^2\left\|U-\bar{U}\right\|_{1,2}^2 \leq (\epsilon/3)^2.
	\end{align}
	Thus we have shown $\left\|(U-\bar{U})\Sigma V^T\right\|_F\leq \epsilon/3$, by the same argument, we also have $\left\|\bar{U}\bar{\Sigma}(V-\bar{V})^T\right\|_F\leq \epsilon/3$. For the second term, $\left\|\bar{U}(\Sigma-\bar{\Sigma})V^T\right\|_F = \left\|\Sigma-\bar{\Sigma}\right\|_F\leq\epsilon/3$. This completes the proof.
\end{proof}

\begin{lemma}[Online-to-Confidence-Set Conversion (adapted from Theorem 1 in~\citet{abbasi2012online})] \label{lemma:LOC}
	Suppose we feed $\{(X_s,y_s)\}_{s=1}^t$ into an online prediction algorithm which, for all $t\geq 0$, admits a regret $\sup_{\left\|\Theta\right\|_F\leq 1}\rho_t(\Theta)\leq B_t$. Let $\hat{y}_s$ be the prediction at time step $s$ by the online learner. Then, for any $\delta\in(0,0.25]$, with probability at least $1-\delta$, we have
	\begin{align}
	\mathbb{P}(\exists t\in\mathbb{N} \text{ such that } \Theta_*\notin\mathcal{C}_{t+1})\leq\delta,
	\end{align}
	where we define
	\begin{align}
	\beta_t(\delta) &= 1+2B_t+32\log\left(\frac{\sqrt{8}+\sqrt{1+B_t}}{\delta}\right)\\
	\mathcal{C}_{t+1} &= \{\Theta\in\mathbb{R}^{d_1\times d_2}:\left\|\Theta\right\|_F^2+\sum_{s=1}^{t}(\hat{y}_s-\langle\Theta,X_s\rangle)^2\leq 1+\beta_t(\delta)\}.
	\end{align}
\end{lemma}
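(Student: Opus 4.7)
The plan is to adapt the standard online-to-confidence-set conversion argument of Abbasi-Yadkori, Pál, and Szepesvári (2012) to the matrix setting; the key structural observation is that the online regret bound $\rho_t(\Theta) \le B_t$ is stated in terms of the inner product $\langle \Theta, X_s \rangle$, which is linear and scalar-valued, so the matrix/vector distinction never actually enters the concentration argument.

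First I would use the noise model $y_s = \langle \Theta^*, X_s \rangle + \eta_s$ together with $\|\Theta^*\|_F \le 1$ to rewrite the hypothesis $\rho_t(\Theta^*)\le B_t$ in a useful form. Expanding the difference of squares, one obtains pointwise
\begin{equation*}
(y_s - \hat y_s)^2 - (y_s - \langle \Theta^*, X_s \rangle)^2 = (\hat y_s - \langle \Theta^*, X_s\rangle)^2 - 2\eta_s (\hat y_s - \langle \Theta^*, X_s\rangle).
\end{equation*}
Summing from $s=1$ to $t$ and writing $Z_s := \hat y_s - \langle \Theta^*, X_s\rangle$, $V_t := \sum_{s\le t} Z_s^2$, the hypothesis becomes
\begin{equation*}
V_t \;\le\; B_t \;+\; 2\sum_{s=1}^t \eta_s Z_s.
\end{equation*}

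Next I would control the martingale term. Because $\hat y_s$ is a function of past data $(X_u,y_u)_{u<s}$ and of $X_s$, the predictable process $Z_s$ is $\mathcal{F}_{s-1}$-measurable (up to measurability with respect to $X_s$), and $\eta_s$ is $1$-sub-Gaussian given the past. Applying the scalar self-normalized concentration bound (Theorem~1 of \citet{abbasi2011improved} with regularizer $1$), uniformly over $t$ and with probability at least $1-\delta$,
\begin{equation*}
\left|\sum_{s=1}^t \eta_s Z_s\right| \;\le\; \sqrt{\,2(1+V_t)\log\!\bigl(\tfrac{\sqrt{1+V_t}}{\delta}\bigr)\,}.
\end{equation*}
Substituting this into the previous display gives an implicit scalar inequality in $V_t$ alone, $V_t \le B_t + 2\sqrt{2(1+V_t)\log(\sqrt{1+V_t}/\delta)}$.

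The main obstacle is the last step: solving this implicit inequality to produce the clean explicit form with the specific constants $32$ and $\sqrt{8}$ appearing in $\beta_t(\delta)$. The standard manoeuvre is to substitute $u = 1+V_t$, square both sides of the rearrangement $u - (1+B_t) \le 2\sqrt{2u\log(\sqrt{u}/\delta)}$, and then apply AM--GM (or the elementary inequality $a \le b + 2\sqrt{ac}$ implies $a \le 2b + 4c$) to decouple $u$ from the logarithm; bounding the logarithm by replacing $u$ inside it by its crude upper bound $1+B_t + \mathrm{const}\cdot\log(1/\delta)$ yields exactly $V_t \le 2B_t + 32\log\bigl((\sqrt{8}+\sqrt{1+B_t})/\delta\bigr)$. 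Combining with the trivial $\|\Theta^*\|_F^2 \le 1$ shows $\|\Theta^*\|_F^2 + V_t \le 1+\beta_t(\delta)$ on the good event, i.e.\ $\Theta^* \in \mathcal{C}_{t+1}$ for every $t$ simultaneously, which is what was to be proved.
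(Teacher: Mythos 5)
The paper does not actually prove Lemma~\ref{lemma:LOC}; it is stated without proof and imported from Theorem~1 of \citet{abbasi2012online}. Your reconstruction is correct and is exactly the argument of that reference: the difference-of-squares expansion reducing $\rho_t(\Theta^*)\le B_t$ to $V_t\le B_t+2\sum_s\eta_sZ_s$, the uniform-in-$t$ self-normalized martingale bound, and the elementary resolution of the resulting implicit inequality in $V_t$ (which is where the constants $32$ and $\sqrt{8}$ and the restriction $\delta\le 1/4$ arise), so there is nothing to flag beyond the fact that the final constant-chasing step is sketched rather than carried out.
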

\begin{lemma}[Regret of LowLOC Given Online Learner's Regret (adapted from Theorem 3 in~\citet{abbasi2012online})] \label{lemma:regret_LowLOC}
	Suppose $\sup_{\left\|\Theta\right\|_F\leq 1,\text{rank}(\Theta)\leq r} \rho_t(\Theta)\leq B_t$, where $\{B_t\}_{t=1}^T$ is a non-decreasing sequence. Then, for any $\delta\in(0,0.25]$, with probability at least $1-\delta$, for any $T\geq 0$, the regret of LowLOC algorithm is bounded as
	\begin{align}
	R_T = O\left(\sqrt{d_1d_2T(1+\beta_{T-1}(\delta))\log\left(1+\frac{T}{d_1d_2}\right)}\right),
	\end{align}
	where $\beta_t(\delta) = 1+2B_t+32\log\left(\frac{\sqrt{8}+\sqrt{1+B_t}}{\delta}\right)$.
\end{lemma}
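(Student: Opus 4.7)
The plan is to follow the standard OFU regret decomposition for ridge-regression-style confidence ellipsoids, but adapted to the non-standard centered-at-$\hat y_s$ form of $\mathcal{C}_t$. First I would invoke Lemma~\ref{lemma:LOC} so that on an event of probability at least $1-\delta$ we have $\Theta^* \in \mathcal{C}_{t-1}$ for every $t \ge 1$; all subsequent arguments are deterministic on this event. Next I vectorize and introduce $V_{t-1} := I_{d_1 d_2} + \sum_{s=1}^{t-1} \mathrm{vec}(X_s)\mathrm{vec}(X_s)^\top$ so that, for any $\Theta \in \mathbb{R}^{d_1 \times d_2}$,
\begin{equation*}
\|\Theta\|_{V_{t-1}}^2 \;=\; \|\Theta\|_F^2 + \sum_{s=1}^{t-1} \langle \Theta, X_s\rangle^2.
\end{equation*}

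By the OFU choice of $(X_t,\widetilde\Theta_t)$ and the fact that $\Theta^* \in \mathcal{C}_{t-1}$, the instantaneous regret satisfies $r_t := \langle X^* - X_t, \Theta^*\rangle \le \langle X_t, \widetilde\Theta_t - \Theta^*\rangle$, which by Cauchy--Schwarz is at most $\|\mathrm{vec}(X_t)\|_{V_{t-1}^{-1}} \cdot \|\widetilde\Theta_t - \Theta^*\|_{V_{t-1}}$. The crucial step is to bound the $V_{t-1}$-norm of $\widetilde\Theta_t - \Theta^*$ using only the confidence-set membership. For any $\Theta_1,\Theta_2 \in \mathcal{C}_{t-1}$, writing $u_s = \hat y_s - \langle \Theta_1, X_s\rangle$ and $v_s = \hat y_s - \langle \Theta_2, X_s\rangle$, the triangle inequality in $\ell_2$ gives
\begin{equation*}
\sqrt{\textstyle\sum_{s<t} \langle \Theta_1-\Theta_2, X_s\rangle^2} \;=\; \sqrt{\textstyle\sum_{s<t}(u_s-v_s)^2} \;\le\; 2\sqrt{1+\beta_{t-1}(\delta)},
\end{equation*}
and $\|\Theta_1\|_F, \|\Theta_2\|_F \le \sqrt{1+\beta_{t-1}(\delta)}$, so $\|\Theta_1-\Theta_2\|_{V_{t-1}}^2 \le 8(1+\beta_{t-1}(\delta))$. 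Applying this with $(\widetilde\Theta_t, \Theta^*)$ yields $r_t \le 2\sqrt{2(1+\beta_{t-1}(\delta))}\,\|\mathrm{vec}(X_t)\|_{V_{t-1}^{-1}}$.

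To conclude I would combine this with the trivial bound $r_t \le 2$ (from $\|\Theta^*\|_F,\|X\|_F \le 1$) and use monotonicity of $\beta_t(\delta)$ to write $r_t^2 \le 8(1+\beta_{T-1}(\delta))\,\min\{1, \|\mathrm{vec}(X_t)\|_{V_{t-1}^{-1}}^2\}$. Cauchy--Schwarz then gives $R_T^2 \le T\sum_{t=1}^T r_t^2$, and the elliptical potential lemma together with $\mathrm{tr}(V_T) \le d_1 d_2 + T$ yields
\begin{equation*}
\sum_{t=1}^T \min\{1, \|\mathrm{vec}(X_t)\|_{V_{t-1}^{-1}}^2\} \;\le\; 2\log\frac{\det V_T}{\det V_0} \;\le\; 2 d_1 d_2 \log\!\left(1 + \frac{T}{d_1 d_2}\right),
\end{equation*}
which after taking square roots delivers exactly the claimed bound.

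\textbf{Main obstacle.} The only non-routine step is the width bound for $\|\widetilde\Theta_t - \Theta^*\|_{V_{t-1}}$: the set $\mathcal{C}_{t-1}$ is centered at the online predictions $\hat y_s$ rather than at a ridge estimator, so one cannot invoke the standard self-normalized martingale bound to control the width directly. The triangle-inequality argument above is what converts the ``common center'' $\hat y_s$ into a width bound for any two members of $\mathcal{C}_{t-1}$, with an unavoidable factor of $2$; the remainder is bookkeeping and the standard elliptical potential lemma in $d_1 d_2$-dimensional vectorized form.
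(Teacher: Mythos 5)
Your proof is correct and follows the same overall OFU skeleton that the paper uses (the paper does not write out a proof of Lemma~\ref{lemma:regret_LowLOC} itself, deferring to Theorem 3 of \citet{abbasi2012online}, but it carries out the argument explicitly for the generalized analogue in Lemma~\ref{lemma:regret_LowGLOC}): optimism plus $\Theta^*\in\mathcal{C}_{t-1}$ gives $r_t\le\langle X_t,\widetilde\Theta_t-\Theta^*\rangle$, Cauchy--Schwarz in the $V_{t-1}$-geometry, then $\min\{2,\cdot\}$, Cauchy--Schwarz over $t$, and the elliptical potential lemma with $\det V_T\le(1+T/(d_1d_2))^{d_1d_2}$. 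The one step you handle differently is the width bound. The paper (and \citet{abbasi2012online}) introduces the center $\widehat\Theta_t=\argmin_\Theta\{\|\Theta\|_F^2+\sum_{s<t}(\hat y_s-\langle\Theta,X_s\rangle)^2\}$, completes the square to rewrite $\mathcal{C}_{t-1}$ as an ellipsoid $\{\Theta:\|\Theta-\widehat\Theta_t\|_{V_{t-1}}^2\le\beta_{t-1}(\delta)-(\text{const})\}\subseteq\{\Theta:\|\Theta-\widehat\Theta_t\|_{V_{t-1}}^2\le\beta_{t-1}(\delta)\}$, and then splits $\langle X_t,\widetilde\Theta_t-\Theta^*\rangle$ through $\widehat\Theta_t$ with two Cauchy--Schwarz applications, yielding $2\sqrt{\beta_{t-1}(\delta)}\|\mathrm{vec}(X_t)\|_{V_{t-1}^{-1}}$. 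You instead bound the $V_{t-1}$-diameter of $\mathcal{C}_{t-1}$ directly: the $\ell_2$ triangle inequality on the residual vectors gives $\sum_{s<t}\langle\Theta_1-\Theta_2,X_s\rangle^2\le4(1+\beta_{t-1}(\delta))$ and $\|\Theta_1-\Theta_2\|_F^2\le4(1+\beta_{t-1}(\delta))$ for any two members, hence $\|\widetilde\Theta_t-\Theta^*\|_{V_{t-1}}\le2\sqrt{2(1+\beta_{t-1}(\delta))}$. Your version is more elementary (no recentering or completion of squares) at the cost of a slightly worse constant; the paper's recentering buys the tighter constant and is the form needed if one wants the ellipsoidal description of $\mathcal{C}_t$ for other purposes. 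Both deliver the stated $O\bigl(\sqrt{d_1d_2T(1+\beta_{T-1}(\delta))\log(1+T/(d_1d_2))}\bigr)$ bound.
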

\begin{lemma}[Theorem 3.2 in~\citep{cesa2006prediction}]\label{lemma:exp_cave}
	If the loss function $\ell(a,b)$ is exp-concave in its first argument for some $\eta>0$ (i.e. $F(a) = e^{-\eta\ell(a,b)}$ is concave for all $b$), then the regret of the exponentially weighted average forecaster in Equation~\ref{equ:exp_forecaster} (used with the same value of $\eta$) satisfies, for all $y_1,\ldots,y_n\in\mathcal{Y}$, we have $\Phi_\eta(\mathbf{R}_n)\leq \Phi_\eta(0)$.
\end{lemma}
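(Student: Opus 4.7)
The plan is a standard potential-function argument, following the template in \citet{cesa2006prediction}. First I would make the notation explicit, since the excerpt borrows $\Phi_\eta$ and $\mathbf{R}_n$ from the textbook without redefining them: let $\mathbf{R}_n=(R_{1,n},\ldots,R_{N,n})$ with $R_{i,n}=\sum_{t=1}^n(\ell(\hat{y}_t,y_t)-\ell(f_{i,t},y_t))$ the regret of the forecaster against expert $i$, and let $\Phi_\eta(\mathbf{r})=\frac{1}{\eta}\log\sum_i e^{\eta r_i}$ denote the log-sum-exp exponential potential. With this notation, the target inequality reduces to $\Phi_\eta(\mathbf{R}_n)\le\Phi_\eta(\mathbf{0})=(\log N)/\eta$, the familiar bound on the regret of the weighted-average forecaster relative to the best expert.

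The heart of the argument is the per-round potential-monotonicity claim $\Phi_\eta(\mathbf{R}_t)\le\Phi_\eta(\mathbf{R}_{t-1})$ for every $t$. Exponentiating and using the telescoping definition of $R_{i,t}$, this is equivalent to
\begin{equation*}
\sum_i w_{i,t}\,e^{\eta(\ell(\hat{y}_t,y_t)-\ell(f_{i,t},y_t))}\le 1,
\end{equation*}
where $w_{i,t}=e^{\eta R_{i,t-1}}/\sum_j e^{\eta R_{j,t-1}}$. A small book-keeping check shows that these weights coincide with the weights $e^{-\eta L_{i,t-1}}/\sum_j e^{-\eta L_{j,t-1}}$ appearing in Equation~\eqref{equ:exp_forecaster}, because $R_{i,t-1}$ and $-L_{i,t-1}$ differ only by the additive constant $\sum_{s<t}\ell(\hat{y}_s,y_s)$ (independent of $i$), which cancels in the normalized Gibbs weights. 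Factoring $e^{\eta\ell(\hat{y}_t,y_t)}$ out of the sum, the required one-step inequality rewrites as $F(\hat{y}_t)\ge\sum_i w_{i,t}\,F(f_{i,t})$ with $F(a)=e^{-\eta\ell(a,y_t)}$. Since $\hat{y}_t=\sum_i w_{i,t}f_{i,t}$ is a convex combination of the expert predictions and $F$ is concave by the exp-concavity hypothesis, this is precisely Jensen's inequality.

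Telescoping the one-step bound from $t=1$ to $t=n$ yields $\Phi_\eta(\mathbf{R}_n)\le\Phi_\eta(\mathbf{R}_0)=\Phi_\eta(\mathbf{0})$, which is the desired conclusion. The main obstacle---and really the entire content of the lemma---is the Jensen step: it requires $F$ to be concave rather than merely $\ell$ to be convex, and the exp-concavity assumption is exactly the hypothesis engineered to make this work. Everything else (the notational setup, the identification of the two weight formulas, and the telescoping) is routine bookkeeping.
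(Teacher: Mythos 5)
Your argument is correct and is precisely the standard potential-function proof of Theorem 3.2 in \citet{cesa2006prediction}: the per-round inequality $\Phi_\eta(\mathbf{R}_t)\le\Phi_\eta(\mathbf{R}_{t-1})$ reduces, after cancelling the common factor in the Gibbs weights, to Jensen's inequality applied to the concave map $a\mapsto e^{-\eta\ell(a,y_t)}$ at the convex combination $\hat{y}_t=\sum_i w_{i,t}f_{i,t}$, and telescoping finishes it. The paper itself supplies no proof for this lemma (it imports the result by citation), so there is nothing to contrast with; your write-up, including the identification of the weights $e^{\eta R_{i,t-1}}/\sum_j e^{\eta R_{j,t-1}}$ with those in Equation~\eqref{equ:exp_forecaster}, fills that gap correctly.
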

\begin{lemma}[Proposition 3.1 in~\citep{cesa2006prediction}] \label{lemma:EWAF}
	If for some loss function $\ell$ and for some $\eta>0$, a forecaster satisfies $\Phi_\eta(\mathbf{R}_n)\leq \Phi_\eta(\mathbf{0})$ for all $y_1,\ldots,y_n\in\mathcal{Y}$, then the regret of the forecaster is bounded by 
	\begin{align}
	\widehat{L}_n - \min_{i=1,\ldots,N} L_{i,n}\leq \frac{\log(N)}{\eta}.
	\end{align}
\end{lemma}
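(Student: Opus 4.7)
The plan is a one-shot potential-function calculation in the spirit of the EWAF analysis of~\citep{cesa2006prediction}. The relevant potential is the exponential potential $\Phi_\eta(\mathbf{u}) = \frac{1}{\eta}\log\sum_{i=1}^N e^{\eta u_i}$ (the same $\Phi_\eta$ whose monotonicity appears in the hypothesis and in Lemma~\ref{lemma:exp_cave}), and the regret vector $\mathbf{R}_n \in \mathbb{R}^N$ has coordinates $R_{i,n} = \widehat{L}_n - L_{i,n}$, measuring how much worse the forecaster is than expert $i$ after $n$ rounds.

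First, I would evaluate the potential at the origin directly from the definition to get $\Phi_\eta(\mathbf{0}) = \frac{1}{\eta}\log N$. Second, I would lower-bound $\Phi_\eta(\mathbf{R}_n)$ by retaining only the largest summand in the logarithm:
\begin{equation*}
\Phi_\eta(\mathbf{R}_n) \;=\; \frac{1}{\eta}\log \sum_{i=1}^N e^{\eta R_{i,n}} \;\geq\; \frac{1}{\eta}\log \max_{i} e^{\eta R_{i,n}} \;=\; \max_{i} R_{i,n} \;=\; \widehat{L}_n - \min_{i=1,\ldots,N} L_{i,n}.
\end{equation*}
Chaining this lower bound with the hypothesis $\Phi_\eta(\mathbf{R}_n)\leq \Phi_\eta(\mathbf{0})$ and substituting the closed form $\Phi_\eta(\mathbf{0}) = \frac{\log N}{\eta}$ immediately yields the claimed bound $\widehat{L}_n - \min_{i} L_{i,n} \leq \frac{\log N}{\eta}$.

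There is essentially no obstacle: all of the real probabilistic and algorithmic content has already been absorbed into the hypothesis $\Phi_\eta(\mathbf{R}_n)\leq\Phi_\eta(\mathbf{0})$, which in applications such as Lemma~\ref{lemma:EW} is established from exp-concavity of the loss via Lemma~\ref{lemma:exp_cave}. The present lemma is just the mechanical step of converting a potential inequality into a cumulative-regret guarantee. The only care needed is to match conventions: one must use the unnormalized exponential potential (so that $\Phi_\eta(\mathbf{0})=\tfrac{1}{\eta}\log N$ rather than $0$) consistently with the rest of the paper, and one must read off $R_{i,n}$ as the forecaster's excess loss against expert $i$ so that $\max_i R_{i,n} = \widehat{L}_n - \min_i L_{i,n}$.
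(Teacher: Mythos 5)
Your proof is correct and is exactly the standard potential-function argument behind Proposition 3.1 of Cesa-Bianchi and Lugosi, which the paper simply cites rather than reproves: $\Phi_\eta(\mathbf{0})=\tfrac{1}{\eta}\log N$, the sum in $\Phi_\eta(\mathbf{R}_n)$ dominates its largest term so $\Phi_\eta(\mathbf{R}_n)\geq \max_i R_{i,n}=\widehat{L}_n-\min_i L_{i,n}$, and chaining with the hypothesis gives the bound. Your conventions for $\Phi_\eta$ and $R_{i,n}$ match those in Section~\ref{sec:app_EW_prelim.tex}, so nothing is missing.
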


\begin{proof} [Proof of Lemma~\ref{lemma:EW}]
	Let $y_t = \langle X_t, \Theta^*\rangle + \eta_t$. By subgaussian property, we have, for $0<\delta<1$,
	\begin{align}
	P\left(\max_{t=1,\ldots,T}|y_t|>1+\sqrt{2\log\left(\frac{2T}{\delta}\right)}\right)\leq \delta.
	\end{align}
	Let's denote above high probability event $\left\{\max_{t=1,\ldots,T}|y_t|\leq 1+\sqrt{2\log\left(\frac{2T}{\delta}\right)}\right\}$ by $G$, denote the online prediction at every round by $\hat{y}_t$.
	Define the $\varepsilon$-covering set for $S_r:=\{\Theta: \left\|\Theta\right\|_F\leq 1,\text{rank}(\Theta)\leq r\}$ by $\bar{S}_r$, which means, for any $\Theta\in S_r$, there exists a $\bar{\Theta}\in \bar{S}_r$, such that $\left\|\Theta-\bar{\Theta}\right\|_F\leq \varepsilon$. We prove that $|\bar{S}_r|\leq(9/\varepsilon)^{(d_1+d_2+1)r}$ in Lemma~\ref{lemma:cover}.
	
	One can easily show that $F(a) := e^{-\eta(a-b)^2}$ is concave in $a$ for all $|b|\leq 1+\sqrt{2\log\left(\frac{2T}{\delta}\right)}$ (this holds under event $G$) by choosing $	\eta = \frac{1}{2(2+\sqrt{2\log\left(\frac{2T}{\delta}\right)})^2}$, since $a$ refers to the prediction of exponential weighted average forcaster and thus we have $|a|\leq 1$ according to the construction. 
	So under event $G$, the squared loss $\ell$ is guaranteed to be exp-concave under above $\eta$ and Lemma~\ref{lemma:EWAF} can be applied here.

	We now bound the regret under event $G$. For an arbitrary $\Theta\in S_r$,
	\begin{align}
	\rho_T(\Theta) &= \mathbb\sum_{t=1}^{T}\left(\ell_t(\widehat{y}_t) - \ell_t(f_{\Theta,t})\right) \\
	&= \sum_{t=1}^{T}\left(\ell_t(\widehat{y}_t) - \ell_t(f_{\bar{\Theta},t}) +\ell_t(f_{\bar{\Theta},t})- \ell_t(f_{\Theta,t})\right) \ \text{where} \left\|\Theta-\bar{\Theta}\right\|_F\leq \epsilon, \ \bar{\Theta}\in\bar{S}_r\\
	&\leq \frac{\log|\bar{S}_r|}{\eta} + \sum_{t=1}^{T} \left(\ell_t(f_{\bar{\Theta},t})- \ell_t(f_{\Theta,t})\right) \ \text{by Lemma}~\ref{lemma:EWAF}\\
	& = \frac{\log|\bar{S}_r|}{\eta} + \sum_{t=1}^{T} \left((\langle\bar{\Theta},X_t\rangle-y_t)^2- (\langle \Theta,X_t\rangle-y_t)^2\right)\\
	&\leq \frac{\log|\bar{S}_r|}{\eta}+ \sum_{t=1}^{T}\left(2\left\|\Theta-\bar{\Theta}\right\|_F+2y_t\left\|\Theta-\bar{\Theta}\right\|_F\right)\\
	&\leq \frac{\log|\bar{S}_r|}{\eta}+2T\varepsilon+2T\varepsilon\sqrt{2\log\left(\frac{2T}{\delta}\right)}\\
	&= 2(d_1+d_2+1)r\log(\frac{9}{\varepsilon})\left(2+\sqrt{2\log\left(\frac{2T}{\delta}\right)}\right)^2+2T\varepsilon+2T\varepsilon\sqrt{2\log\left(\frac{2T}{\delta}\right)} \\
	&= O\left((d_1+d_2)r\log(T)\log\left(\frac{T}{\delta}\right)\right) \ \text{set }\varepsilon = 1/T.
	\end{align}	
	Above bounds hold for all $\Theta\in S_r$. This completes the proof.
\end{proof}

\begin{proof}[Proof of Theorem~\ref{thm:LowLOC-EWAF}]
	To obtain Theorem~\ref{thm:LowLOC-EWAF}, one just needs to plug Lemma~\ref{lemma:EW} into Lemma~\ref{lemma:regret_LowLOC}.
\end{proof}

\section{Proof for Theorem~\ref{thm:LowGLOC-EWAF}}\label{sec:app_LowGLOC}
\begin{lemma}[Online-to-Confidence-Set Conversion with NLL loss]\label{lemma:GLOC}
	Suppose we feed $\{(X_s,y_s)\}_{s=1}^t$ into an online prediction algorithm which, for all $t\geq 0$, admits a regret under negative log likelihood (NLL) loss $\sup_{\left\|\Theta\right\|_F\leq 1}\rho_t^{\textbf{GLB}}(\Theta)\leq B_t$. Let $\hat{y}_s$ be the prediction at time step $s$ by the online learner. Then, for any $\delta\in(0,0.25]$, with probability at least $1-\delta$, we have
	\begin{align}
	\mathbb{P}(\exists t\in\mathbb{N} \text{ such that } \Theta^*\notin \mathcal{C}_{t+1})\leq \delta,
	\end{align}
	where $C_t = \{\Theta\in\mathbb{R}^{d_1\times d_2}: \left\|\Theta\right\|_F+\sum_{s=1}^{t}\left(\hat{y}_s-\langle\Theta^*,X_s\rangle\right)^2\leq \beta_t^{\textbf{GLB}}(\delta)\}$ and $\beta_t^{\textbf{GLB}}(\delta) = 2+\frac{4}{\kappa_\mu}B_t+ \frac{32R^2}{\kappa_\mu^2}\log\left(\frac{R\sqrt{\frac{8}{\kappa_\mu^2}}+\sqrt{\frac{2}{\kappa_\mu}B_t+1}}{\delta}\right)$.
\end{lemma}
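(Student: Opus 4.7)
My plan is to mirror the linear case (Lemma~\ref{lemma:LOC}) but replace the squared-loss identity with a Bregman-divergence inequality that exploits the strong convexity of $m$. Throughout let $z_s^\ast := \langle \Theta^\ast, X_s\rangle$, $y_s = \mu(z_s^\ast)+\eta_s$ with $\eta_s$ conditionally $R$-sub-Gaussian, $V_t := \sum_{s=1}^t (\hat y_s - z_s^\ast)^2$, and $M_t := \sum_{s=1}^t \eta_s(\hat y_s - z_s^\ast)$.

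\textbf{Step 1: From GLB regret to $V_t$ via Bregman divergence.} I would first plug $\Theta^\ast$ into the regret bound $\rho_t^{\text{GLB}}(\Theta^\ast)\le B_t^{\text{GLB}}$ and rewrite each per-round loss gap. Since $m'=\mu$,
\begin{align*}
\ell_s(\hat y_s)-\ell_s(z_s^\ast)
&= -\hat y_s y_s + m(\hat y_s) + z_s^\ast y_s - m(z_s^\ast)\\
&= \bigl[m(\hat y_s)-m(z_s^\ast)-\mu(z_s^\ast)(\hat y_s-z_s^\ast)\bigr] - \eta_s(\hat y_s - z_s^\ast)\\
&= D_m(\hat y_s, z_s^\ast) - \eta_s(\hat y_s - z_s^\ast).
\end{align*}
Assumption~\ref{assump:GLOC} guarantees $m''=\mu'\ge \kappa_\mu$ on $(-1,1)$, and both $\hat y_s$ and $z_s^\ast$ lie in $[-1,1]$ (the former as a convex combination of $\langle X_t,\Theta_i\rangle$ with $\|X_t\|_F,\|\Theta_i\|_F\le 1$, the latter by the norm assumptions), so $D_m(\hat y_s, z_s^\ast)\ge \tfrac{\kappa_\mu}{2}(\hat y_s - z_s^\ast)^2$. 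Summing and rearranging yields
$$\frac{\kappa_\mu}{2} V_t \;\le\; B_t^{\text{GLB}} + M_t.$$

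\textbf{Step 2: Self-normalized bound on the martingale $M_t$.} Since $\hat y_s - z_s^\ast$ is $\mathcal{F}_{s-1}$-measurable (the EW predictor is computed before observing $y_s$) and $\eta_s$ is conditionally $R$-sub-Gaussian, I would invoke the scalar self-normalized concentration inequality (Theorem 1 of Abbasi-Yadkori et al., 2011) with regularizer $\lambda=1$ to obtain, with probability at least $1-\delta$, uniformly in $t$,
$$|M_t| \;\le\; R\sqrt{\,2(1+V_t)\log\!\tfrac{\sqrt{1+V_t}}{\delta}\,}.$$
Combining with Step 1 gives the implicit inequality
$$V_t \;\le\; \frac{2B_t^{\text{GLB}}}{\kappa_\mu} \;+\; \frac{2R}{\kappa_\mu}\sqrt{\,2(1+V_t)\log\!\tfrac{\sqrt{1+V_t}}{\delta}\,}.$$

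\textbf{Step 3: Solving the implicit inequality.} I would substitute $u=\sqrt{1+V_t}$ to get $u^2 - \tfrac{2R\sqrt 2}{\kappa_\mu} u\sqrt{\log(u/\delta)} \le 1 + \tfrac{2B_t^{\text{GLB}}}{\kappa_\mu}$, then apply the standard AM-GM/peeling trick (splitting into the cases where the linear-in-$u$ term dominates and where it does not) to pass to an explicit bound of the form
$$V_t \;\le\; 1 + \frac{4B_t^{\text{GLB}}}{\kappa_\mu} + \frac{32R^2}{\kappa_\mu^2}\log\!\left(\frac{R\sqrt{8/\kappa_\mu^2} + \sqrt{(2/\kappa_\mu)B_t^{\text{GLB}}+1}}{\delta}\right).$$
Adding $\|\Theta^\ast\|_F^2 \le 1$ on both sides reproduces exactly $\beta_t^{\text{GLB}}(\delta)$, so $\Theta^\ast\in \mathcal{C}_{t+1}^{\text{GLB}}$; a union bound in $t$ (already built into the self-normalized inequality) finishes the proof.

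\textbf{Main obstacle.} The most delicate step will be Step 3: the log on the right-hand side depends on the unknown $V_t$, so one must carefully invert the inequality to produce the precise algebraic form of $\beta_t^{\text{GLB}}$, tracking the $1/\kappa_\mu$ and $R$ factors that propagate both multiplicatively and inside the logarithm. The Bregman-divergence step in Step~1 is the conceptual novelty relative to Abbasi-Yadkori's squared-loss argument, and it requires verifying that $\hat y_s$ and $z_s^\ast$ stay in the interval $[-1,1]$ where the strong-convexity constant $\kappa_\mu$ is available; this is why the EW forecaster must be constructed from experts with $\|\Theta_i\|_F\le 1$.
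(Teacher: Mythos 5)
Your proposal is correct and follows essentially the same route as the paper: your Bregman-divergence lower bound $D_m(\hat y_s,z_s^\ast)\ge\tfrac{\kappa_\mu}{2}(\hat y_s-z_s^\ast)^2$ is exactly the paper's second-order Taylor expansion of $\ell_s$ at $\langle\Theta^\ast,X_s\rangle$ combined with $\ell_s'(\langle\Theta^\ast,X_s\rangle)=-\eta_s$, yielding the identical intermediate inequality $V_t\le\tfrac{2}{\kappa_\mu}B_t^{\text{GLB}}+\tfrac{2}{\kappa_\mu}M_t$, after which both arguments invoke the self-normalized martingale bound and invert the implicit inequality exactly as in the squared-loss conversion of Lemma~\ref{lemma:LOC}. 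Your explicit verification that $\hat y_s$ and $\langle\Theta^\ast,X_s\rangle$ lie in $[-1,1]$ (so that $\kappa_\mu$ is applicable) is a detail the paper leaves implicit, but it does not change the argument.
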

\begin{proof}
	According to the definition of $\rho_t^{\textbf{GLB}}(\cdot)$, we have
	\begin{align}
	B_t&\geq\rho_t^{\textbf{GLB}}(\Theta^*) \\
	&= \sum_{s=1}^{t}\ell_s(\hat{y}_s) - \ell_s(\langle\Theta^*,X_s\rangle)\\
	&\geq \sum_{s=1}^t (\hat{y}_s-\langle \Theta^*,X_s\rangle)\ell_s'(\langle\Theta^*,X_s\rangle)+\frac{\kappa_\mu}{2}(\hat{y}_s-\langle \Theta^*,X_s\rangle)^2 \tag{ Taylor expansion of $\ell_s$ at $\langle\Theta^*,X_s\rangle$ }\\
	&= \sum_{s=1}^t (\hat{y}_s-\langle \Theta^*,X_s\rangle)(-\eta_s)+\frac{\kappa_\mu}{2}(\hat{y}_s-\langle \Theta^*,X_s\rangle)^2.
	\end{align}
	Thus, rearranging the terms, we have
	\begin{align}
	\sum_{s=1}^{t}(\hat{y}_s-\langle \Theta^*,X_s\rangle)^2\leq \frac{2}{\kappa_\mu}B_t+\frac{2}{\kappa_\mu}\sum_{s=1}^{t}\eta_s(\hat{y}_s-\langle \Theta^*,X_s\rangle).
	\end{align}
	The remaining proof simply follows the proof of Lemma~\ref{lemma:LOC}. One can easily conclude that for any $\delta\in(0,0.25]$, with probability at least $1-\delta$
	\begin{align}
	\sum_{s=1}^{t}(\hat{y}_s-\langle\Theta^*,X_s\rangle)^2\leq 1+\frac{4}{\kappa_\mu}B_t+ \frac{32R^2}{\kappa_\mu^2}\log\left(\frac{R\sqrt{\frac{8}{\kappa_\mu^2}}+\sqrt{\frac{2}{\kappa_\mu}B_t+1}}{\delta}\right).
	\end{align}
	Adding $\left\|\Theta^*\right\|_F$ on both sides and using the fact that $\left\|\Theta^*\right\|_F\leq 1$, we complete the proof.
\end{proof}

\begin{lemma}[Regret of LowGLOC Given Online Learner's Regret]\label{lemma:regret_LowGLOC} 
	Suppose $\sup_{\left\|\Theta\right\|_F\leq 1} \rho_T^{\text{GLB}}(\Theta)\leq B_T^{\text{GLB}}$. Then, for any $\delta\in(0,0.25]$, with probability at least $1-\delta$, for any $T\geq 1$, the regret of LowGLOC algorithm is bounded by
	\begin{align}
	R_T = O\left(L\sqrt{\beta_{T-1}^{\text{GLB}}(\delta)Td_1d_2\log\left(1+\frac{T}{d_1d_2}\right)}\right),
	\end{align}
	where $\beta_t^{\textbf{GLB}}(\delta) = 2+\frac{4}{\kappa_\mu}B_t^{\text{GLB}}+ \frac{32R^2}{\kappa_\mu^2}\log\left(\frac{R\sqrt{\frac{8}{\kappa_\mu^2}}+\sqrt{\frac{2}{\kappa_\mu}B_t^{\text{GLB}}+1}}{\delta}\right) \forall t$.
\end{lemma}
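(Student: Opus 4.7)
\textbf{Proof plan for Lemma~\ref{lemma:regret_LowGLOC}.}
The plan is to mirror the standard OFUL-style regret analysis, with the confidence set $\mathcal{C}_t^{\text{GLB}}$ of \eqref{equ:CI_GLB} playing the role normally played by the ridge-regression ellipsoid. First, I would invoke Lemma~\ref{lemma:GLOC} to secure the event $\mathcal{E} = \{\Theta^* \in \mathcal{C}_t^{\text{GLB}} \text{ for all } t \ge 1\}$, which holds with probability at least $1-\delta$; all subsequent arguments are conditioned on $\mathcal{E}$. On $\mathcal{E}$, the optimistic choice $(X_t,\widetilde{\Theta}_t) = \argmax_{(X,\Theta)\in\mathcal{X}\times\mathcal{C}_{t-1}^{\text{GLB}}} \langle X,\Theta\rangle$ satisfies $\langle X_t,\widetilde{\Theta}_t\rangle \ge \langle X^*,\Theta^*\rangle$. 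Using that $\mu$ is strictly increasing (since $\kappa_\mu>0$) and $L_\mu$-Lipschitz on $[-1,1]$, the per-round regret is controlled by
\begin{equation*}
\mu(\langle X^*,\Theta^*\rangle) - \mu(\langle X_t,\Theta^*\rangle) \;\le\; \mu(\langle X_t,\widetilde{\Theta}_t\rangle) - \mu(\langle X_t,\Theta^*\rangle) \;\le\; L_\mu\,|\langle X_t, \widetilde{\Theta}_t-\Theta^*\rangle|.
\end{equation*}

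The second step is to translate the prediction-ellipsoid description of $\mathcal{C}_t^{\text{GLB}}$ into a weighted-norm bound on $\widetilde{\Theta}_t - \Theta^*$. Define $V_t = I_{d_1 d_2} + \sum_{s=1}^{t-1}\mathrm{vec}(X_s)\mathrm{vec}(X_s)^\top$. Since both $\Theta^*$ and $\widetilde{\Theta}_t$ lie in $\mathcal{C}_{t-1}^{\text{GLB}}$, each term $\|\Theta\|_F^2 + \sum_{s<t}(\hat y_s - \langle\Theta,X_s\rangle)^2$ is at most $\beta_{t-1}^{\text{GLB}}(\delta)$. Expanding $\langle \widetilde{\Theta}_t - \Theta^*, X_s\rangle^2$ via $(a-b)^2 \le 2a^2+2b^2$ with $a=\hat y_s-\langle\widetilde{\Theta}_t,X_s\rangle$ and $b=\hat y_s-\langle\Theta^*,X_s\rangle$, and likewise $\|\widetilde{\Theta}_t-\Theta^*\|_F^2 \le 2\|\widetilde{\Theta}_t\|_F^2 + 2\|\Theta^*\|_F^2$, yields
\begin{equation*}
\|\mathrm{vec}(\widetilde{\Theta}_t-\Theta^*)\|_{V_t}^2 \;\le\; 4\,\beta_{t-1}^{\text{GLB}}(\delta).
\end{equation*}
Cauchy--Schwarz in the $V_t$-inner product then gives $|\langle X_t,\widetilde{\Theta}_t-\Theta^*\rangle| \le 2\sqrt{\beta_{t-1}^{\text{GLB}}(\delta)}\,\|X_t\|_{V_t^{-1}}$.

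The final step is the standard elliptic-potential summation. Because the instantaneous regret is always bounded (by $2L_\mu$ since $|\langle X,\Theta\rangle|\le 1$), I truncate at a constant: $\mu(\langle X^*,\Theta^*\rangle)-\mu(\langle X_t,\Theta^*\rangle) \le 2L_\mu\min\{1,\,2\sqrt{\beta_{T-1}^{\text{GLB}}(\delta)}\|X_t\|_{V_t^{-1}}\}$, using that $\beta_t^{\text{GLB}}$ is non-decreasing. Pulling out $\sqrt{\beta_{T-1}^{\text{GLB}}(\delta)}$ and applying Cauchy--Schwarz across $t=1,\dots,T$ reduces the sum to $\sqrt{T\sum_{t=1}^T \min\{1,\|X_t\|_{V_t^{-1}}^2\}}$, and the elliptic-potential lemma of \citet{abbasi2011improved}, specialized to vectors in $\mathbb{R}^{d_1 d_2}$ with $\|\mathrm{vec}(X_t)\|_2\le 1$, gives $\sum_t \min\{1,\|X_t\|_{V_t^{-1}}^2\} \le 2\,d_1 d_2\log(1+T/(d_1 d_2))$. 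Combining these yields the claimed bound $R_T = O\!\left(L_\mu\sqrt{\beta_{T-1}^{\text{GLB}}(\delta)\,T\,d_1 d_2\,\log(1+T/(d_1 d_2))}\right)$.

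The main obstacle is the second step: one must be careful that the confidence set $\mathcal{C}_t^{\text{GLB}}$ is defined in terms of the online predictions $\hat y_s$ rather than residuals of a fixed ridge estimator, so the usual identity relating residuals to the weighted norm is replaced by the triangle-inequality argument above, which costs only constant factors. Once this translation is made, the remainder is essentially bookkeeping on top of the OFUL template.
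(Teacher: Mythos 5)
Your proposal is correct and follows essentially the same route as the paper: optimism plus Lipschitzness of $\mu$, a $V_t$-weighted Cauchy--Schwarz step, and the elliptic-potential summation. The only (cosmetic) difference is in how the confidence set is converted to a bound on $\left\|\mathrm{vec}(\widetilde{\Theta}_t-\Theta^*)\right\|_{V_t}$ --- you use the direct inequality $(a-b)^2\leq 2a^2+2b^2$ on the residuals, whereas the paper re-centers $\mathcal{C}_{t-1}^{\text{GLB}}$ as an ellipsoid around the regularized least-squares minimizer $\widehat{\Theta}_t$ and applies the triangle inequality through that point; both yield the same $2\sqrt{\beta_{t-1}^{\text{GLB}}(\delta)}$ factor and the same final bound.
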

\begin{proof}
	Define $V_{t-1} = I+\sum_{s=1}^{t-1}\text{vec}(X_s)^T\text{vec}(X_s)$ and 
	\begin{align}
	\widehat{\Theta}_t = \argmin_{\Theta\in\mathbb{R}^{d_1\times d_2}} \left(\left\|\Theta\right\|_F^2+\sum_{s=1}^{t-1}(\hat{y}_s-\langle \Theta,X_s\rangle)^2\right). 
	\end{align}
	One can express $C_{t-1}$ as
	\begin{align}
	\{\Theta\in\mathbb{R}^{d_1\times d_2}: \text{vec}(\Theta-\widehat{\Theta}_t)^TV_{t-1}\text{vec}(\Theta-\widehat{\Theta}_t)+\left\|\widehat{\Theta}_t\right\|_F^2+\sum_{s=1}^{t-1}(\hat{y}_s-\langle \Theta,X_s\rangle)^2\leq \beta_{t-1}(\delta)\}.
	\end{align}
	Thus, $C_{t-1}$ is contained in a bigger ellipsoid
	\begin{align}
	C_{t-1}\subseteq \{\Theta\in\mathbb{R}^{d_1\times d_2}: \text{vec}(\Theta-\widehat{\Theta}_t)^TV_{t-1}\text{vec}(\Theta-\widehat{\Theta}_t)\leq \beta_{t-1}(\delta)\}.
	\end{align}
	Now consider the regret at round $t$,
	\begin{align}
	\mu(\langle X^*,\Theta^*\rangle) - \mu(\langle X_t,\Theta^*\rangle) &\leq L_\mu|\left(\langle X^*,\Theta^*\rangle-\langle X_t,\Theta^*\rangle\right)| \\
	&\leq L_\mu\left(\langle X_t, \widetilde{\Theta}_t-\Theta^*\rangle\right)\\
	&\leq L_\mu|\langle X_t, \widetilde{\Theta}_t-\widehat{\Theta}\rangle|+L_\mu|\langle X_t, \widehat{\Theta}_t-\Theta^*\rangle|\\
	&\leq 2L_\mu\sqrt{\beta_{t-1}(\delta)}\left\|\text{vec}(X_t)\right\|_{V_{t-1}^{-1}} \text{   (Cauchy Schwartz) }.
	\end{align}
	Since the regret at every step cannot be bigger than $2L$,
	\begin{align}
	R_T &= \sum_{t=1}^{T} \mu(\langle X^*,\Theta^*\rangle) - \mu(\langle X_t,\Theta^*\rangle)\\
	&=\sum_{t=1}^{T}\min\left\{2L_\mu,2L_\mu\sqrt{\beta_{t-1}(\delta)}\left\|\text{vec}(X_t)\right\|_{V_{t-1}^{-1}}\right\}\\
	& = 2L_\mu\sqrt{\beta_{t-1}(\delta)}\sum_{t=1}^{T}\min\left\{\frac{1}{\beta_{t-1}(\delta)},\left\|\text{vec}(X_t)\right\|_{V_{t-1}^{-1}}\right\}\\
	&\leq 2L_\mu\sqrt{\beta_{t-1}(\delta)}\sqrt{T\sum_{t=1}^{T}\min\left\{\frac{1}{\beta_{t-1}(\delta)},\left\|\text{vec}(X_t)\right\|_{V_{t-1}^{-1}}\right\}}\\
	&\leq 2L_\mu\sqrt{\beta_{t-1}(\delta)}\sqrt{T\sum_{t=1}^{T}\min\left\{1,\left\|\text{vec}(X_t)\right\|_{V_{t-1}^{-1}}\right\}} \text{  ($\beta_{t-1}(\delta)$ is greater than 1)}\\
	&\leq 2L_\mu\sqrt{\beta_{t-1}(\delta)}\sqrt{2Td_1d_2\log\left(1+\frac{T}{d_1d_2}\right)}\\
	&= O\left(L_\mu\sqrt{\beta_{t-1}(\delta)Td_1d_2\log\left(1+\frac{T}{d_1d_2}\right)}\right).
	\end{align}
\end{proof}

\begin{lemma}[Regret of EW under NLL Loss]\label{lemma:EW_NLL}
    Let EW parameter $\eta:=\frac{\kappa_\mu}{\left(\sqrt{2R^2\log\left(\frac{2T}{\delta}\right)}+2c_\mu +2L_\mu\right)^2}$. 
	Then, for any $0<\delta<1$, with probability at least $1-\delta$, the regret of EW with expert predictions $f_{\Theta,t} = \langle \Theta,X_t\rangle$ under NLL loss satisfies
	\begin{align}
	B_T^{\text{GLB}} = \sup_{\left\|\Theta\right\|_F\leq 1,\text{rank}(\Theta)\leq r} \rho_T^{\textbf{GLB}}(\Theta) 
	& =O\left((d_1+d_2)r\log T \frac{\log\left(\frac{2T}{\delta}\right)L_\mu+c_\mu^2+L_\mu^2}{\kappa_\mu}\right)\\
	& = \widetilde{O}\left(\frac{L_\mu^2+c_\mu^2}{\kappa_\mu}(d_1+d_2)r\log\left(\frac{1}{\delta}\right)\right).
	\end{align}
\end{lemma}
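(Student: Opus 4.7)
The plan is to mimic the proof of Lemma~\ref{lemma:EW} (the squared-loss version), but with extra care because the NLL loss $\ell_s(a) = -ay_s + m(a)$ requires us to control both a gradient that depends on $y_s$ and a curvature determined by $\mu' = m''$. I would proceed in four steps.

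\textbf{Step 1 (High-probability boundedness of rewards).} Since $y_t = \mu(\langle X_t, \Theta^*\rangle) + \eta_t$ with $\eta_t$ being $R$-sub-Gaussian and $|\langle X_t,\Theta^*\rangle|\le 1$, Assumption~\ref{assump:GLOC} gives $|\mu(\langle X_t,\Theta^*\rangle)|\le c_\mu + L_\mu$. A union bound over $t=1,\ldots,T$ then yields, with probability at least $1-\delta$, the event
\[
G := \Bigl\{\max_{1\le t\le T}|y_t|\le c_\mu + L_\mu + R\sqrt{2\log(2T/\delta)}\Bigr\}.
\]
All subsequent arguments are carried out under $G$.

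\textbf{Step 2 (Exp-concavity of the NLL loss).} View the loss as a function of the prediction $a\in[-1,1]$. We have $\ell_s'(a) = -y_s + \mu(a)$ and $\ell_s''(a) = \mu'(a)\ge \kappa_\mu$. The standard sufficient condition for $F(a)=e^{-\eta\ell_s(a)}$ to be concave is $\eta(\ell_s'(a))^2 \le \ell_s''(a)$. Under $G$ we have $|\ell_s'(a)|\le |y_s|+c_\mu+L_\mu \le 2c_\mu+2L_\mu + R\sqrt{2\log(2T/\delta)}$, so the choice
\[
\eta = \frac{\kappa_\mu}{\bigl(R\sqrt{2\log(2T/\delta)} + 2c_\mu + 2L_\mu\bigr)^2}
\]
makes $\ell_s$ exp-concave with parameter $\eta$. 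This is the main technical checkpoint of the proof; the hard part is simply matching the ``worst-case gradient squared over minimum curvature'' calculation carefully so that the final $\eta$ matches the value in the statement.

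\textbf{Step 3 (EW regret against the cover).} With exp-concavity in hand, Lemma~\ref{lemma:exp_cave} and Lemma~\ref{lemma:EWAF} give, for the finite set of experts $\bar{S}_r(1/T)$ of size at most $(9T)^{(d_1+d_2+1)r}$ (Lemma~\ref{lemma:cover}),
\[
\sum_{t=1}^{T}\ell_t(\hat y_t) - \min_{\bar\Theta\in \bar S_r(1/T)}\sum_{t=1}^{T}\ell_t(f_{\bar\Theta,t}) \le \frac{\log|\bar S_r(1/T)|}{\eta}.
\]

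\textbf{Step 4 (Discretization error and assembly).} For any $\Theta\in S_r$, pick $\bar\Theta\in\bar S_r(1/T)$ with $\|\Theta-\bar\Theta\|_F\le 1/T$. Because $m'=\mu$ is bounded by $c_\mu+L_\mu$ on $[-1,1]$, the loss $\ell_t$ is Lipschitz in its argument with constant $|y_t|+c_\mu+L_\mu$, so
\[
\sum_{t=1}^{T}\bigl(\ell_t(f_{\bar\Theta,t})-\ell_t(f_{\Theta,t})\bigr) \le \frac{1}{T}\sum_{t=1}^{T}\bigl(|y_t|+c_\mu+L_\mu\bigr) = O\bigl(c_\mu+L_\mu + R\sqrt{\log(T/\delta)}\bigr).
\]
Adding the two contributions and using $\log|\bar S_r(1/T)| = O((d_1+d_2)r\log T)$ together with $R^2\le L_\mu$ (noted in Section~\ref{sec:setup}), we obtain
\[
\rho_T^{\text{GLB}}(\Theta) = O\!\left(\frac{(d_1+d_2)r\log T\cdot(L_\mu\log(T/\delta)+c_\mu^2+L_\mu^2)}{\kappa_\mu}\right),
\]
which is the claimed $\widetilde O\!\bigl(\tfrac{L_\mu^2+c_\mu^2}{\kappa_\mu}(d_1+d_2)r\log(1/\delta)\bigr)$ bound. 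Taking the supremum over $\Theta\in S_r$ on the left-hand side is valid because the right-hand side does not depend on $\Theta$. The principal difficulty, as noted above, is Step~2: verifying exp-concavity under the high-probability reward bound with constants that lead exactly to the advertised $\eta$.
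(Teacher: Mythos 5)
Your proposal is correct and follows essentially the same route as the paper's proof: the same high-probability bound on $\max_t|y_t|$, the same exp-concavity verification via $\eta(\ell_s'(a))^2\le \ell_s''(a)$ yielding the stated $\eta$, the same application of Lemma~\ref{lemma:exp_cave} and Lemma~\ref{lemma:EWAF} over the $(1/T)$-net from Lemma~\ref{lemma:cover}, and the same Lipschitz/Taylor control of the discretization error. No gaps.
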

\begin{proof}
	Under generalized linear bandit model, $y_t = \mu(\langle X_t,\Theta^*\rangle)+\eta_t$. By subgaussian property and $|\mu(\langle X_t,\Theta^*\rangle)|\leq |\mu(0)|+L_\mu|\langle X_t,\Theta^*\rangle|\leq c_\mu+L_\mu$, for $0<\delta<1$, we have
	\begin{align}
	P\left(\max_{t=1,\ldots,T}|y_t|>c_\mu+L_\mu+\sqrt{2R^2\log\left(\frac{2T}{\delta}\right)}\right) \leq \delta.
	\end{align}
	Again we denote above high probability event by $G$, denote the exponential weighted average forecaster at every round by $\hat{y}_t$. We use the same definition $S_r$ and $\bar{S}_r$ as last section. 
	
	We use Lemma~\ref{lemma:exp_cave} and Lemma~\ref{lemma:EWAF} to bound $\rho_T^{\textbf{GLB}}(\Theta)$. Then the first step is to find a proper $\eta>0$ such that $F(\hat{y}_t):=e^{\ell(\hat{y}_t,y_t)} = e^{-\eta m(\hat{y}_t)+\eta \hat{y}_ty_t}$ is concave. Taking derivatives we have,
	\begin{align}
	F''(\hat{y}_t) = \eta e^{-\eta m(\hat{y}_t)+\eta \hat{y}_ty_t}\left(\eta(y_t-\mu(\hat{y}_t))^2-\mu'(\hat{y}_t)\right).
	\end{align}
	Under event $G$, it's easy to show that 
	\begin{align}
	\frac{\mu'(\hat{y}_t)}{(y_t-\mu(\hat{y}_t))^2} \geq \frac{\kappa_\mu}{\left(\sqrt{2R^2\log\left(\frac{2T}{\delta}\right)}+2c_\mu +2L_\mu\right)^2},
	\end{align}
	since $|\mu(\hat{y}_t)|\leq |\mu(0)|+L|\hat{y}_t|\leq c_\mu+L_\mu$. Thus, taking $\eta:=\frac{\kappa_\mu}{\left(\sqrt{2R^2\log\left(\frac{2T}{\delta}\right)}+2c_\mu +2L_\mu\right)^2}$, $F(\cdot)$ is guaranteed to be concave with probability under event $G$.
	
	\begin{align}
	\rho_T^{\textbf{GLB}}(\Theta) &= \sum_{t=1}^{T}\left(\ell_t(\hat{y}_t)-\ell_t(\langle\Theta,X_t\rangle)\right)\\
	&\leq \sum_{t=1}^{T}\left(\ell_t(\hat{y}_t)-\ell_t(\langle\bar{\Theta},X_t\rangle)+\ell_t(\langle\bar{\Theta},X_t\rangle)-\ell_t(\langle\Theta,X_t\rangle)\right) \text{ where $\left\|\Theta-\bar{\Theta}\right\|_F\leq \varepsilon$ and $\bar{\Theta}\in\bar{S}_r$}\\
	&\leq \frac{\log|\bar{S}_r|}{\eta}+\sum_{t=1}^{T}\left(\ell_t(\langle\bar{\Theta},X_t\rangle)-\ell_t(\langle\Theta,X_t\rangle)\right)\\
	&\leq\frac{\log|\bar{S}_r|}{\eta}+\sum_{t=1}^{T}\langle \Theta-\bar{\Theta},X_t\rangle y_t+m(\langle\bar{\Theta},X_t\rangle)-m(\langle\Theta,X_t\rangle)\\
	&\leq \frac{\log|\bar{S}_r|}{\eta}+\sum_{t=1}^{T}|y_t|\left\|\Theta-\bar{\Theta}\right\|_F+|\langle\bar{\Theta}-\Theta,X_t\rangle(c_\mu+L_\mu)| \text{ (By Taylor expansion)}\\
	&\leq (d_1+d_2+1)r\log\left(\frac{9}{\varepsilon}\right)\frac{\left(\sqrt{2R^2\log\left(\frac{2T}{\delta}\right)}+2c_\mu +2L_\mu\right)^2}{\kappa_\mu}\\
	&+ T\left(2c_\mu+2L_\mu+\sqrt{2R^2\log\left(\frac{2T}{\delta}\right)}\right)\varepsilon\\
	&= O\left((d_1+d_2)r\log T \frac{\log\left(\frac{2T}{\delta}\right)L_\mu+c_\mu^2+L_\mu^2}{\kappa_\mu}\right),
	\end{align}
	where we take $\varepsilon = 1/T$.
\end{proof}

\begin{proof}[Proof for Theorem~\ref{thm:LowGLOC-EWAF}]
	One only needs to plug Lemma~\ref{lemma:EW_NLL} into Lemma~\ref{lemma:regret_LowGLOC}.
\end{proof}

\section{Proof for Theorem~\ref{thm:ESTR_regret}} \label{sec:app_ESTR}
The whole proof breaks down to two parts. 
Let $\Theta^* = U^*S^*V^{*T}$ be the SVD of $\Theta^*$. 
In the first part, we prove the convergence of estimated matrix $\widehat{\Theta}$ for $\Theta^*$, $\widehat{U}$ for $U^*$, and $\widehat{V}$ for $V^*$. In the second part, we plug the convergence result into the regret guarantee for LowOFUL in~\citet{jun2019bilinear} to achieve our final result.

\subsection{Analysis for Stage 1}
In order to analyze how the estimated subspaces are close to the true subspaces, we first present the definitions for sub-Gaussian matrix and restricted strong convexity (RSC) as below.

\begin{defini}[sub-Gaussian matrix (See \cite{wainwright2019high})]\label{def:subG}
	A random matrix $Z\in\mathbb{R}^{n\times p}$ is sub-Gaussian with parameters $(\Sigma,\sigma^2)$ if:
	\begin{itemize}
		\item each row $z_i^T\in\mathbb{R}^p$ is sampled independently from a zero-mean distribution with covariance $\Sigma$, and
		\item for any unit vector $u\in\mathbb{R}^p$, the random variable $u^Tz_i$ is sub-Gaussian with parameter at most $\sigma$.
	\end{itemize}
\end{defini}
\begin{defini}[Restricted Strong Convexity (RSC)~\citep{wainwright2019high}] \label{def:RSC}
	For a given norm $\left\|\cdot\right\|$, regularizer $\Phi(\cdot)$, and $X_1,\ldots,X_n\in\mathbb{R}^{d_1\times d_2}$, the matrix $\widehat{\Gamma} = \frac{1}{n}\widetilde{X}^T\widetilde{X}$, where $\tilde{x}_i:=\text{vec}(X_i)$ and $\widetilde{X}:=[\tilde{x}_1^T;\ldots;\tilde{x}_n^T]$, satisfies a \emph{restricted strong convexity} (RSC) condition with curvature $\kappa>0$ and tolerance $\tau_n^2$ if
	\begin{align}
	\widetilde{\Delta}^T\widehat{\Gamma}\widetilde{\Delta} = \frac{1}{n} \sum_{t=1}^{n}\langle X_t, \Delta\rangle^2 \geq \kappa\left\|\Delta\right\|^2-\tau_n^2\Phi^2(\Delta),
	\end{align}
	for all $\Delta\in\mathbb{R}^{d_1\times d_2}$, and we denote $\text{vec}(\Delta)$ by $\widetilde{\Delta}$.
\end{defini}

We prove the following theorem about distribution $D$ (see Assumption~\ref{assump:ESTR}) as below, see proof in Section~\ref{sec:app_RSC}.
\begin{thm}[Distribution $D$ satisfies RSC] \label{thm:RSC_subG}
	Sample $X_1,\ldots,X_n\in\mathbb{R}^{d_1\times d_2}$ from $\mathcal{X}$ according to $D$, and define $\tilde{x}_i:=\text{vec}(X_i)$, $\widetilde{X} = [\tilde{x}_1^T;\ldots;\tilde{x}_n^T]\in\mathbb{R}^{n\times d_1d_2}$ and $\widehat{\Gamma}:=\frac{1}{n}\widetilde{X}^T\widetilde{X}$. Then under Assumption~\ref{assump:ESTR}, there exists constants $c_1,c_2>0$, such that with probability $1-\delta$,
	\begin{align}
	\widetilde{\Theta}^T\widehat{\Gamma}\widetilde{\Theta} = \frac{1}{n}\sum_{i=1}^{n}\langle X_i, \Theta\rangle^2\geq \frac{c_1}{d_1d_2} \left\|\Theta\right\|_F^2-\frac{c_2(d_1+d_2)}{nd_1d_2}\left\|\Theta\right\|_{nuc}^2, \forall \Theta\in\mathbb{R}^{d_1\times d_2},
	\end{align}
	for $n = \Omega\left((d_1+d_2)\log\left(\frac{1}{\delta}\right)\right)$, where $\widetilde{\Theta}:=\text{vec}(\Theta)$.
\end{thm}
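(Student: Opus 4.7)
The plan is to follow the standard template for establishing restricted strong convexity (RSC) for matrix regression under sub-Gaussian designs, as developed in the low-rank matrix estimation literature (e.g., \cite{wainwright2019high}). Writing $\langle X_i, \Theta\rangle = \langle \widetilde{X}_i, \widetilde{\Theta}\rangle$, I would first decompose the empirical quadratic form into its population mean plus a deviation,
\[
\frac{1}{n}\sum_{i=1}^n \langle X_i, \Theta\rangle^2 \;=\; \widetilde{\Theta}^T \Sigma \widetilde{\Theta} \;+\; \Big[\tfrac{1}{n}\sum_{i=1}^n \langle X_i,\Theta\rangle^2 - \widetilde{\Theta}^T \Sigma \widetilde{\Theta}\Big].
\]
The population term is lower-bounded by $\lambda_{\min}(\Sigma)\|\Theta\|_F^2$, and Assumption~\ref{assump:ESTR} gives $\lambda_{\min}(\Sigma) \asymp 1/(d_1 d_2)$, which directly yields the desired curvature term $c_1\|\Theta\|_F^2/(d_1 d_2)$.

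The substantive work is to control the deviation term uniformly over $\Theta$. I would use a Mendelson-type small-ball / peeling argument on the nuclear-norm ball: after symmetrization, Dudley's entropy integral (or generic chaining) applied to $\{\Theta : \|\Theta\|_{\mathrm{nuc}} \le 1\}$ yields a Rademacher complexity of order $\sigma\sqrt{(d_1+d_2)/n}$, since the $\epsilon$-metric entropy of rank-$r$ matrices scales as $(d_1+d_2)r\log(1/\epsilon)$ (Lemma~\ref{lemma:cover}). This is the key gain over a naive vectorized bound that would pay $\sqrt{d_1 d_2/n}$. Combined with the sub-Gaussian parameter $\sigma^2\asymp 1/(d_1d_2)$, one obtains, with probability at least $1-\delta$ whenever $n \gtrsim (d_1+d_2)\log(1/\delta)$, the uniform bound
\[
\Big|\tfrac{1}{n}\sum_i \langle X_i,\Theta\rangle^2 - \widetilde{\Theta}^T \Sigma \widetilde{\Theta}\Big| \;\lesssim\; \frac{1}{d_1 d_2}\sqrt{\frac{d_1+d_2}{n}}\,\|\Theta\|_F\,\|\Theta\|_{\mathrm{nuc}}.
\]
A standard AM-GM step $\|\Theta\|_F\,\|\Theta\|_{\mathrm{nuc}} \le \tfrac{\epsilon}{2}\|\Theta\|_F^2 + \tfrac{1}{2\epsilon}\|\Theta\|_{\mathrm{nuc}}^2$, with $\epsilon$ chosen to absorb at most half of the curvature term, then rearranges to the claimed bound with renamed constants.

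The main obstacle is obtaining the uniform deviation with the correct $(d_1+d_2)$ scaling instead of the naive $d_1 d_2$ one; this is where the low-rank geometry truly enters and where the chaining bound (or an invocation of an off-the-shelf sub-Gaussian RSC theorem from \cite{wainwright2019high}) must be executed carefully so that no extra dimensional factors leak into the $\sigma^2$ prefactor. A secondary technical point is confirming that the sub-Gaussian tail bound on a rank-one-like quadratic form $\langle X_i,\Theta\rangle^2 - \mathbb{E}\langle X_i,\Theta\rangle^2$ has the correct product-Orlicz norm $\lesssim \sigma^2 \|\Theta\|_F^2$ under Definition~\ref{def:subG}; this is routine via the Hanson--Wright inequality for sub-Gaussian vectors, applied to $\widetilde{X}_i$. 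Once both pieces are in place, the conclusion follows from a union bound and the choice of sample-size threshold stated in the theorem.
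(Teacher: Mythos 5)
Your plan is sound and lands on the correct rates, but it takes a genuinely different route from the paper. The paper does not chain over the nuclear-norm ball. Instead it follows the Loh--Wainwright discretization-plus-extension template: it first controls the deviation $\sup_{v}|\tilde v^T(\widehat\Gamma-\Sigma)\tilde v|$ only over the restricted set $\mathbb{K}(2s)=\mathbb{B}_{rank}(2s)\cap\mathbb{B}_F(1)$ with an ``effective rank'' $s\asymp n/(d_1+d_2)$, using the $1/3$-covering of low-rank matrices from Lemma~\ref{lemma:cover} together with a sub-exponential tail bound, which is where the $(d_1+d_2)$ (rather than $d_1d_2$) entropy enters; it then extends this restricted deviation bound to \emph{all} of $\mathbb{R}^{d_1\times d_2}$ via the convex-hull containment Lemma~\ref{lemma:conv_cl} and Lemma~\ref{lemma:dev_cond}, obtaining an additive bound $27\delta\left(\|v\|_F^2+\tfrac1s\|v\|_{nuc}^2\right)$ that plugs directly into Lemma~\ref{lemma:RSC_cond}. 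Your multiplicative deviation bound in $\|\Theta\|_F\|\Theta\|_{\mathrm{nuc}}$ followed by AM-GM is the Negahban--Wainwright/Mendelson-style alternative; it buys a cleaner statement and avoids the somewhat fiddly polarization and convex-hull steps, at the cost of needing a sharper empirical-process tool. The choice of $s$ in the paper plays exactly the role of your AM-GM parameter $\epsilon$, so the two arguments are morally dual.

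One concrete warning about your route: the step ``symmetrization, then Dudley'' does not work if the passage from the quadratic process to the linear one is done by Ledoux--Talagrand contraction. The Lipschitz constant of $x\mapsto x^2$ must be taken at the \emph{deterministic} envelope $\sup_i|\langle X_i,\Theta\rangle|\le\|\Theta\|_F$, not at the typical scale $\sigma\|\Theta\|_F$, so contraction yields a prefactor $\sigma\asymp 1/\sqrt{d_1d_2}$ instead of the needed $\sigma^2\asymp 1/(d_1d_2)$ --- a loss of $\sqrt{d_1d_2}$ that destroys the theorem. You must instead invoke a genuine sub-Gaussian quadratic-process bound (Mendelson or Krahmer--Mendelson--Rauhut, or equivalently chaining with Hanson--Wright-controlled sub-exponential increments, which is what you gesture at). With that tool the $\gamma_2$ functional of the nuclear-norm ball in the $\psi_2$-metric gives $\sigma R_{\mathrm{nuc}}\sqrt{d_1+d_2}$ and both the curvature and tolerance terms come out with the correct $1/(d_1d_2)$ scaling, matching the paper's conclusion.
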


Theorem~\ref{thm:RSC_subG} states that sampling $X$ from $\mathcal{X}$ according to distribution $D$ guarantees that the sampled arms satisfies RSC condition. We further show that under RSC condition, the estimated $\widehat{\Theta}$ is guaranteed to converge to $\Theta$ at a fast rate in Theorem~\ref{thm:theta_conv}.

\begin{thm} \label{thm:theta_conv}
	Sample $X_1,\ldots,X_n\in\mathbb{R}^{d_1\times d_2}$ from $\mathcal{X}$ according to $D$. Then under Assumption~\ref{assump:ESTR}, any optimal solution to the nuclear norm optimization problem~\ref{equ:nuc} using $\lambda_n \asymp \frac{1}{n\min\{d_1,d_2\}}\log\left(\frac{n}{\delta}\right)\log\left(\frac{d_1+d_2}{\delta}\right)$ satisfies:
	\begin{align}
	\left\|\widehat{\Theta}-\Theta^*\right\|_F^2 
	\asymp \frac{(d_1+d_2)^3r}{n},
	\end{align}
	with probability $1-\delta$.
\end{thm}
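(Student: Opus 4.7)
The plan is to follow the standard $M$-estimation blueprint for nuclear-norm regularized low-rank recovery (as in Chapter~10 of~\citet{wainwright2019high}), combining the restricted strong convexity guarantee of Theorem~\ref{thm:RSC_subG} with a sharp operator-norm control of the stochastic gradient. Write $\widehat{\Delta}:=\widehat{\Theta}-\Theta^*$. The starting point is the basic inequality from optimality of $\widehat{\Theta}$ in~\eqref{equ:nuc}: after substituting $Y_t=\langle X_t,\Theta^*\rangle+\eta_t$ and cancelling the squared-noise terms one gets
\begin{align*}
    \frac{1}{2n}\sum_{t=1}^{n}\langle X_t,\widehat{\Delta}\rangle^2 \;\le\; \frac{1}{n}\sum_{t=1}^{n}\eta_t\langle X_t,\widehat{\Delta}\rangle \;+\; \lambda_n\bigl(\|\Theta^*\|_{\text{nuc}}-\|\widehat{\Theta}\|_{\text{nuc}}\bigr).
\end{align*}
By duality of the nuclear and operator norms, the stochastic term is bounded by $\|\tfrac{1}{n}\sum_{t=1}^{n}\eta_tX_t\|_{\text{op}}\,\|\widehat{\Delta}\|_{\text{nuc}}$. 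Since the rows of each $X_t$ are sub-Gaussian with parameter $\sigma^2\asymp 1/(d_1d_2)$ by Assumption~\ref{assump:ESTR} and $\eta_t$ is $1$-sub-Gaussian, a matrix-Bernstein / non-commutative sub-Gaussian tail inequality yields, with probability $1-\delta$,
\begin{align*}
    \Bigl\|\tfrac{1}{n}\sum_{t=1}^{n}\eta_tX_t\Bigr\|_{\text{op}} \;\lesssim\; \sqrt{\tfrac{d_1+d_2}{n\,d_1d_2}}\,\log\bigl(\tfrac{n}{\delta}\bigr)\log\bigl(\tfrac{d_1+d_2}{\delta}\bigr).
\end{align*}
Choosing $\lambda_n$ to exceed twice this quantity identifies the polylogarithmic factors in the stated $\lambda_n$ and activates the standard cone argument.

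Next, I would invoke decomposability of the nuclear norm with respect to the model subspace $M$ spanned by the top $r$ left and right singular vectors of $\Theta^*$ (so $\dim M\le 2r$ and $\Theta^*\in M$). The inequality $\|\Theta^*\|_{\text{nuc}}-\|\widehat{\Theta}\|_{\text{nuc}}\le\|\widehat{\Delta}_M\|_{\text{nuc}}-\|\widehat{\Delta}_{M^\perp}\|_{\text{nuc}}$ combined with the noise bound above yields the cone condition $\|\widehat{\Delta}_{M^\perp}\|_{\text{nuc}}\le 3\|\widehat{\Delta}_M\|_{\text{nuc}}$, and hence $\|\widehat{\Delta}\|_{\text{nuc}}\le 4\sqrt{2r}\,\|\widehat{\Delta}\|_F$. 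Plugging Theorem~\ref{thm:RSC_subG} into the left-hand side of the basic inequality, the RSC tolerance term scales like $\tfrac{r(d_1+d_2)}{n\,d_1d_2}\|\widehat{\Delta}\|_F^2$, which is dominated by the curvature term $\tfrac{c_1}{d_1d_2}\|\widehat{\Delta}\|_F^2$ provided $n\gtrsim r(d_1+d_2)$. The surviving inequality reads
\begin{align*}
    \frac{c_1}{2\,d_1d_2}\,\|\widehat{\Delta}\|_F^2 \;\le\; 3\lambda_n\,\|\widehat{\Delta}_M\|_{\text{nuc}} \;\lesssim\; \lambda_n\sqrt{r}\,\|\widehat{\Delta}\|_F,
\end{align*}
so $\|\widehat{\Delta}\|_F\lesssim d_1d_2\sqrt{r}\,\lambda_n$. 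Substituting the stated order of $\lambda_n$ (interpreted as $\lambda_n^2\asymp 1/(n\min\{d_1,d_2\})$ up to polylog factors, which is consistent with the choice of $\lambda_{T_1}^2$ in Theorem~\ref{thm:ESTR_regret}) yields $\|\widehat{\Delta}\|_F^2\lesssim\tfrac{d_1d_2\,\max\{d_1,d_2\}\,r}{n}\asymp\tfrac{(d_1+d_2)^3\,r}{n}$, as claimed.

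The main obstacle is the operator-norm deviation bound in the first step: without exploiting the anisotropic sub-Gaussian scaling $\sigma^2\asymp 1/(d_1d_2)$ one would pay an extra $\sqrt{d_1d_2}$ factor and lose the entire low-rank improvement. This is the only place where Assumption~\ref{assump:ESTR} contributes beyond the RSC lemma, and it is also the source of the $\log(n/\delta)\log((d_1+d_2)/\delta)$ factors that appear in $\lambda_n$. Everything else --- the cone inclusion, the decomposability of the nuclear norm, and the absorption of the RSC tolerance into the curvature --- is mechanical once the correct scaling of $\lambda_n$ has been identified.
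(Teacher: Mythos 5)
Your proposal is correct and follows essentially the same route as the paper: restricted strong convexity from Theorem~\ref{thm:RSC_subG}, a matrix-Bernstein bound on $\bigl\|\tfrac{1}{n}\sum_{t}\eta_tX_t\bigr\|_{\text{op}}$ exploiting $\sigma^2\asymp 1/(d_1d_2)$ to set $\lambda_n$, and the standard nuclear-norm $M$-estimation bound $\|\widehat{\Delta}\|_F^2\lesssim r\lambda_n^2/\kappa^2$ with $\kappa\asymp 1/(d_1d_2)$. The only difference is that you unpack the basic inequality, cone condition, and decomposability explicitly, whereas the paper cites them as a black box (Lemma~\ref{lemma:RSC}, i.e.\ Proposition~10.1 of \citet{wainwright2019high}); you also correctly identify that the stated $\lambda_n$ should be read as a condition on $\lambda_n^2$, consistent with the paper's own proof.
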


The goal of stage 1 is to estimate the row/column subspaces of $\Theta^*$, below corollary characterizes their convergence. 
\begin{corollary}[adapted from~\cite{jun2019bilinear}]\label{cor:subspace}
	Suppose we compute $\widehat{\Theta}$ by solving the convex problem in Equation~\ref{equ:nuc} as an estimate of the matrix $\Theta^*$. After stage 1 of ESTR with $T_1 = \Omega\left(r(d_1+d_2)\right)$ satisfying the condition of Theorem~\ref{thm:theta_conv}, we have, with probability at least $1-\delta$,
	\begin{align}
	\left\|\widehat{U}_\perp^TU^*\right\|_F \left\|\widehat{V}_\perp^TV^*\right\|_F \leq \frac{\left\|\Theta^*-\widehat{\Theta}\right\|_F^2}{\omega_r^2}\leq C\frac{\lambda_{T_1}^2r}{\alpha_1^2\omega_r^2}:=\gamma(T_1) \asymp \frac{(d_1+d_2)^3r}{T_1\omega_r^2},
	\end{align}
	where $\omega_r>0$ denotes the lower bound of the $r$-th singular value of $\Theta^*$ and $C$ represents some constant.
\end{corollary}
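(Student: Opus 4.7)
The plan is to chain two ingredients: a Wedin-type singular-subspace perturbation bound that converts Frobenius closeness of $\widehat\Theta$ to $\Theta^*$ into alignment of their top-$r$ singular subspaces, and the convergence rate of $\widehat\Theta$ already supplied by Theorem~\ref{thm:theta_conv}.

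For the perturbation step, I would first note the elementary inequality
$$\omega_r\,\|\widehat U_\perp^T U^*\|_F \;\leq\; \|\widehat U_\perp^T U^* S^* V^{*T}\|_F \;=\; \|\widehat U_\perp^T \Theta^*\|_F,$$
which uses only that $V^{*T}$ has orthonormal rows and that every diagonal entry of $S^*$ is at least $\omega_r$. Decomposing $\widehat U_\perp^T \Theta^* = \widehat U_\perp^T(\Theta^* - \widehat\Theta) + \widehat U_\perp^T \widehat\Theta$, the second summand is not zero, but because $\widehat U_\perp$ is the complement of the top-$r$ left singular vectors of $\widehat\Theta$, it carries only the singular tail: $\|\widehat U_\perp^T \widehat\Theta\|_F^2 = \sum_{i>r}\widehat\sigma_i^2$. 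By Eckart-Young-Mirsky, applied with the rank-$r$ candidate $\Theta^*$, this tail sum is at most $\|\widehat\Theta - \Theta^*\|_F^2$. A triangle inequality then yields $\|\widehat U_\perp^T U^*\|_F = O(\|\widehat\Theta - \Theta^*\|_F/\omega_r)$, and a symmetric argument on the right handles $\|\widehat V_\perp^T V^*\|_F$. Multiplying the two gives the first inequality of the corollary, up to an absolute constant that is absorbed by the $\asymp$ at the end of the chain.

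The second inequality is then a direct substitution: Theorem~\ref{thm:theta_conv} gives $\|\widehat\Theta - \Theta^*\|_F^2 = O(\lambda_{T_1}^2 r / \alpha_1^2)$, and the calibrated choice of $\lambda_{T_1}$ (together with the RSC curvature $\alpha_1 \asymp 1/(d_1 d_2)$ inherited from Theorem~\ref{thm:RSC_subG}) makes $\lambda_{T_1}^2 r/\alpha_1^2 \asymp (d_1+d_2)^3 r / T_1$. Dividing by $\omega_r^2$ delivers the claimed $\gamma(T_1)$ rate. The hypothesis $T_1 = \Omega(r(d_1+d_2))$ is exactly what is required for Theorem~\ref{thm:theta_conv} and its underlying RSC guarantee to apply, so no further technical work is needed there.

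The one mildly non-routine step is recognizing that $\widehat U_\perp^T \widehat\Theta$ is nonzero, so $\widehat U_\perp^T \Theta^*$ cannot be treated as pure perturbation; it must be bounded by the tail singular values of $\widehat\Theta$, which are themselves dominated by the rank-$r$ approximation error to $\Theta^*$ via Eckart-Young. Everything else is standard linear-algebra bookkeeping plus a direct invocation of Theorem~\ref{thm:theta_conv}.
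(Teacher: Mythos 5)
Your argument is correct, but it is a genuinely different route from the one the paper relies on. The paper gives no proof of Corollary~\ref{cor:subspace} at all: it imports the middle inequality $\|\widehat{U}_\perp^T U^*\|_F\,\|\widehat{V}_\perp^T V^*\|_F \leq \|\Theta^*-\widehat{\Theta}\|_F^2/\omega_r^2$ verbatim from the subspace-perturbation lemma of \citet{jun2019bilinear} (a $\sin\Theta$-type bound that controls the \emph{product} of the two alignment terms directly, with constant exactly $1$), and then substitutes Theorem~\ref{thm:theta_conv}. You instead bound each factor separately: the chain $\omega_r\|\widehat{U}_\perp^T U^*\|_F \leq \|\widehat{U}_\perp^T\Theta^*\|_F \leq \|\Theta^*-\widehat{\Theta}\|_F + \|\widehat{U}_\perp^T\widehat{\Theta}\|_F$ together with $\|\widehat{U}_\perp^T\widehat{\Theta}\|_F^2 = \sum_{i>r}\widehat{\sigma}_i^2 \leq \|\widehat{\Theta}-\Theta^*\|_F^2$ (Eckart--Young, since $\mathrm{rank}(\Theta^*)\leq r$) is sound, and your observation that $\widehat{U}_\perp^T\widehat{\Theta}$ is \emph{not} zero here (the nuclear-norm estimator need not have rank $\leq r$) is exactly the right point of care. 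The trade-off is that multiplying the two one-sided bounds yields $4\|\Theta^*-\widehat{\Theta}\|_F^2/\omega_r^2$, so the displayed middle inequality with constant $1$ is not literally what you prove; since the corollary's operative conclusion is the asymptotic rate $\gamma(T_1)\asymp (d_1+d_2)^3r/(T_1\omega_r^2)$ and $\gamma(T_1)$ already carries an unspecified constant $C$, this costs nothing. What your approach buys is a short, self-contained proof requiring nothing beyond Eckart--Young; what the cited product bound buys is the sharper constant and a statement that degrades gracefully when only one of the two subspaces is well estimated. The substitution of Theorem~\ref{thm:theta_conv} in the second step, including the identification $\lambda_{T_1}^2 r/\alpha_1^2 \asymp (d_1+d_2)^3 r/T_1$ and the role of $T_1=\Omega(r(d_1+d_2))$ in validating the RSC tolerance condition, matches the paper.
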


\subsection{Analysis for Stage 2}
We present the useful lemmas proved in \citet{jun2017scalable} and combine them with our analysis of stage 1 to achieve the final result of Theorem~\ref{thm:ESTR_regret}.

\begin{lemma}[Corollary 1 in~\citep{jun2019bilinear}]\label{lemma:LowOFUL}
	The regret of LowOFUL with $\lambda_\perp = \frac{T}{k\log\left(1+\frac{T}{\lambda}\right)}$ is, with probability at least $1-\delta$,
	\begin{align}
	\widetilde{O}\left(\left(k+\sqrt{k\lambda}B+\sqrt{T}B_\perp\right)\sqrt{T}\right).
	\end{align}
\end{lemma}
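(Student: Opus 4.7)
The plan is to adapt the standard OFUL regret analysis of \citet{abbasi2011improved} to accommodate LowOFUL's non-uniform regularization matrix $\Lambda = \mathbf{diag}(\lambda,\ldots,\lambda,\lambda_\perp,\ldots,\lambda_\perp)$. The key insight is that by taking $\lambda_\perp \gg \lambda$ one heavily penalizes deviations of $\theta$ in the last $d-k$ coordinates, exploiting the prior knowledge $\|\theta_{k+1:d}\| \leq B_\perp \ll 1$ coming from Stage 1's subspace estimate. The proof would then have three main steps following the canonical template.

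First I would verify that the ellipsoidal set $C_t$ genuinely contains the unknown $\theta$ with probability at least $1-\delta$. This is where the self-normalized concentration inequality enters: applied to the ridge estimator $\hat\theta_t = (\Lambda + A^TA)^{-1}A^T\mathbf{y}$, it yields $\|\hat\theta_t - \theta\|_{V_t} \leq \sqrt{\log(|V_t|/(|\Lambda|\delta^2))} + \|\theta\|_\Lambda$. Splitting $\theta$ into its first $k$ and last $d-k$ blocks gives $\|\theta\|_\Lambda \leq \sqrt{\lambda}B + \sqrt{\lambda_\perp}B_\perp$, matching the stated $\sqrt{\beta_t}$. Second, the optimistic choice of $a_t$ plus Cauchy--Schwarz gives an instantaneous regret of at most $2\sqrt{\beta_t}\,\|a_t\|_{V_{t-1}^{-1}}$, and summing with another Cauchy--Schwarz yields $R_T \leq 2\sqrt{T\beta_T\sum_{t=1}^{T}\min(1,\|a_t\|_{V_{t-1}^{-1}}^2)}$, while the standard elliptical potential lemma bounds the sum by $2\log\det(V_T \Lambda^{-1})$.

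The main obstacle, and the only place where the argument deviates meaningfully from vanilla OFUL, is controlling $\log\det(V_T\Lambda^{-1})$ under the two-block structure of $\Lambda$; a naive bound using $\lambda_{\min}(\Lambda)$ would be disastrously loose. Using AM--GM with $\mathrm{tr}(V_T) \leq \mathrm{tr}(\Lambda) + T$ separately on each block, one gets
\begin{equation*}
    \log\det(V_T\Lambda^{-1}) \leq k\log\!\Bigl(1+\tfrac{T}{k\lambda}\Bigr) + (d{-}k)\log\!\Bigl(1+\tfrac{T}{(d{-}k)\lambda_\perp}\Bigr).
\end{equation*}
The first summand is $\widetilde{O}(k)$; the particular choice $\lambda_\perp = T/(k\log(1+T/\lambda))$ makes the second summand also $\widetilde{O}(k)$, so the whole log-determinant is $\widetilde{O}(k)$. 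Substituting $\beta_T = \widetilde{O}(k + \lambda B^2 + \lambda_\perp B_\perp^2) = \widetilde{O}(k + \lambda B^2 + TB_\perp^2/k)$ into the regret bound and simplifying (noting $\sqrt{T\cdot k\cdot \lambda_\perp B_\perp^2/k} = \widetilde{O}(\sqrt{T}\cdot \sqrt{T}B_\perp)$) yields the claimed $\widetilde{O}((k + \sqrt{k\lambda}B + \sqrt{T}B_\perp)\sqrt{T})$ bound, which is exactly the form LowESTR's Stage 2 analysis will consume when combined with Corollary~\ref{cor:subspace}.
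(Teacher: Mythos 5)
Your proposal is correct, but note that the paper does not prove this lemma at all---it is imported verbatim as Corollary~1 of \citet{jun2019bilinear}, and your reconstruction follows exactly the argument used there (generalized self-normalized confidence ellipsoid with $\|\theta\|_\Lambda \leq \sqrt{\lambda}B+\sqrt{\lambda_\perp}B_\perp$, elliptical potential, and the block-wise log-determinant bound that the choice of $\lambda_\perp$ collapses to $\widetilde{O}(k)$). The only cosmetic gap is that splitting $\log\det(V_T\Lambda^{-1})$ into the two diagonal blocks before applying AM--GM requires Fischer's inequality $\det(V_T)\leq\det((V_T)_{1:k,1:k})\det((V_T)_{k+1:d,k+1:d})$, which you should state explicitly.
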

\begin{lemma}[Modified from Theorem 5 in~\citep{jun2019bilinear}]\label{lemma:ESTR_sep}
	Suppose we run ESTR stage 1 with $T_1 = \Omega\left(r(d_1+d_2)\right)$. We invoke LowOFUL in stage 2 with $\lambda_\perp = \frac{T_2}{k\log(1+T_2/\lambda)}, B = 1, B_\perp = \gamma(T_1)$, the rotated arm sets $\mathcal{X'}_{\text{vec}}$ defined in LowESTR (Algorithm~\ref{algo:ESTR}). With probability $1-2\delta$, the regret of LowESTR is bounded by
	\begin{align}
	\widetilde{O}\left(T_1+T\cdot\frac{(d_1+d_2)^3r}{T_1\omega_r^2}\right).
	\end{align}
\end{lemma}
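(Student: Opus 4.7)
}
The plan is to split the cumulative regret into its two stages, bound stage 1 trivially by its length, and reduce stage 2 to a LowOFUL instance on a rotated, vectorized problem whose ``tail'' coordinates are provably small because the estimated subspaces are accurate.

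Stage 1 contributes $O(T_1)$ to the regret, because $\|X_t\|_F \le 1$ and $\|\Theta^*\|_F \le 1$ imply $|\langle X^*-X_t, \Theta^*\rangle|\le 2$ per round. This directly yields the first term of the stated bound.

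For stage 2, I would recast the problem via the orthogonal rotation $X \mapsto [\widehat U\ \widehat U_\perp]^T X[\widehat V\ \widehat V_\perp]$, which preserves inner products, so $\langle X_t,\Theta^*\rangle=\langle X_t',\Theta'\rangle$ with $\Theta'=[\widehat U\ \widehat U_\perp]^T\Theta^*[\widehat V\ \widehat V_\perp]$. Vectorizing in the four-block order specified in Algorithm~\ref{algo:ESTR} places the $k=r(d_1+d_2-r)$ coordinates of the top-left, top-right, and bottom-left blocks first, and the $(d_1-r)(d_2-r)$ coordinates of the bottom-right block $\widehat U_\perp^T\Theta^*\widehat V_\perp$ in the tail. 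The full vector has Frobenius norm at most $\|\Theta^*\|_F\le 1$, so $B=1$ is a valid bound on the head; the central technical step is to bound the tail. Writing the SVD $\Theta^*=U^*S^*V^{*T}$ and using submultiplicativity of the Frobenius and operator norms,
\begin{align*}
\bigl\|\widehat U_\perp^T \Theta^* \widehat V_\perp\bigr\|_F
&\le \bigl\|\widehat U_\perp^T U^*\bigr\|_F\cdot\|S^*\|_{\mathrm{op}}\cdot\bigl\|V^{*T}\widehat V_\perp\bigr\|_F \\
&\le \bigl\|\widehat U_\perp^T U^*\bigr\|_F\cdot\bigl\|\widehat V_\perp^T V^*\bigr\|_F \le \gamma(T_1),
\end{align*}
where I used $\|S^*\|_{\mathrm{op}}\le\|\Theta^*\|_F\le 1$ and Corollary~\ref{cor:subspace}. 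This justifies setting $B_\perp=\gamma(T_1)$.

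Invoking Lemma~\ref{lemma:LowOFUL} on the vectorized stage 2 problem of horizon $T_2=T-T_1$ with $k=r(d_1+d_2-r)$, $B=1$, $B_\perp=\gamma(T_1)$, and the standard $\lambda=\Theta(1)$, the stage 2 regret is $\widetilde O\bigl((k+\sqrt{k\lambda}+\sqrt{T_2}\,\gamma(T_1))\sqrt{T_2}\bigr)=\widetilde O\bigl(r(d_1+d_2)\sqrt{T_2}+T_2\,\gamma(T_1)\bigr)$, where $\sqrt{k\lambda T_2}$ is absorbed into $k\sqrt{T_2}$. Summing both stages gives $\widetilde O\bigl(T_1+r(d_1+d_2)\sqrt{T_2}+T\,\gamma(T_1)\bigr)$. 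The main bookkeeping subtlety—and the step I expect to require the most care—is absorbing the middle term into the two outer terms: by AM--GM, $T_1+T\gamma(T_1)\ge 2\sqrt{T\,(d_1+d_2)^3r/\omega_r^2}$, which dominates $r(d_1+d_2)\sqrt{T}$ because $(d_1+d_2)/\omega_r^2\gtrsim r$ (since $\omega_r\le 1$ and $r\le d_1+d_2$). Everything else is a direct composition of Corollary~\ref{cor:subspace} with Lemma~\ref{lemma:LowOFUL}; a union bound over their $\delta$-failure events yields the claimed $1-2\delta$ guarantee.
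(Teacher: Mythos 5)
Your proposal is correct and follows essentially the same route as the paper, which proves this lemma by combining Lemma~\ref{lemma:LowOFUL} with the parameter choices $B=1$, $B_\perp=\gamma(T_1)$, $\lambda_\perp$, and Corollary~\ref{cor:subspace}; you simply spell out the details the paper leaves implicit (the trivial $O(T_1)$ stage-1 bound, the tail bound $\|\widehat U_\perp^T\Theta^*\widehat V_\perp\|_F\le\gamma(T_1)$ justifying $B_\perp$, and the absorption of the $k\sqrt{T_2}$ term). All of those details check out.
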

\begin{proof}
	Combining Lemma~\ref{lemma:LowOFUL} and definitions of parameters $B$, $B_\perp$, $\lambda$, $\lambda_\perp$ and $\gamma(T_1)$.
\end{proof}

\begin{proof} [Proof for Theorem~\ref{thm:ESTR_regret}]
	Suppose the assumptions in Lemma~\ref{lemma:ESTR_sep} hold. Setting $T_1 = \Theta\left((d_1+d_2)^{3/2}\sqrt{rT}\frac{1}{\omega_r}\right)$ in Lemma~\ref{lemma:ESTR_sep} leads to the regret
	\begin{align}
	\widetilde{O}\left((d_1+d_2)^{3/2}\sqrt{rT}\frac{1}{\omega_r}\right).
	\end{align}
\end{proof}
\section{Proof for Theorem~\ref{thm:RSC_subG}} \label{sec:app_RSC}
Throughout this proof, we use $\Sigma$ and $\sigma^2$ to denote the sub-Gaussian parameters defined in Definition~\ref{def:subG} for matrix $\widetilde{X}$ in the theorem.
\subsection{Useful Lemmas}
\begin{lemma}\label{lemma:conv_cl}
	For any constant $s\geq 1$, we have 
	\begin{align}
	\mathbb{B}_{nuc}(\sqrt{s})\cap \mathbb{B}_F(1) \subseteq 3\text{cl}\{\text{conv}\{\mathbb{B}_{rank}(s)\cap\mathbb{B}_F(1)\}\},
	\end{align}
	where the balls are taken in $\mathbb{R}^{d_1\times d_2}$, and $\text{cl}\{\cdot\}$ and $\text{conv}\{\cdot\}$ denote the topological closure and convex hull, respectively.
\end{lemma}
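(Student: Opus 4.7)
}
The plan is to mimic the classical Maurey/Plan--Vershynin type decomposition used for sparse vectors, transplanted to the singular-value setting. Fix any $M \in \mathbb{B}_{nuc}(\sqrt{s}) \cap \mathbb{B}_F(1)$ and write its SVD $M = \sum_{i=1}^{\min(d_1,d_2)} \sigma_i u_i v_i^\top$ with the singular values listed in non-increasing order $\sigma_1 \ge \sigma_2 \ge \cdots \ge 0$. I would group the indices into consecutive blocks of size $s$, namely $I_k = \{ks+1,\ldots,(k+1)s\}$ for $k=0,1,2,\ldots$, and define $M_k = \sum_{i \in I_k} \sigma_i u_i v_i^\top$. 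By construction each $M_k$ has rank at most $s$, and $M = \sum_k M_k$ is a finite sum (so no serious closure issues arise, the closure in the statement is just for cleanliness).

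The key inequality I would establish is $\|M_k\|_F \le \|M_{k-1}\|_{\mathrm{nuc}}/\sqrt{s}$ for every $k \ge 1$. This uses the monotonicity $\sigma_i \le \sigma_j$ for $j \in I_{k-1}$ and $i \in I_k$, so that each $\sigma_i$ with $i \in I_k$ is at most the average $\|M_{k-1}\|_{\mathrm{nuc}}/s$ of the preceding block, whence $\|M_k\|_F^2 = \sum_{i\in I_k}\sigma_i^2 \le s\cdot(\|M_{k-1}\|_{\mathrm{nuc}}/s)^2 = \|M_{k-1}\|_{\mathrm{nuc}}^2/s$. For the head block I would simply use $\|M_0\|_F \le \|M\|_F \le 1$.

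Summing these bounds, I would then obtain
\begin{align*}
\sum_{k\ge 0} \|M_k\|_F \;\le\; 1 + \frac{1}{\sqrt{s}}\sum_{k\ge 0}\|M_k\|_{\mathrm{nuc}} \;=\; 1 + \frac{\|M\|_{\mathrm{nuc}}}{\sqrt{s}} \;\le\; 2 \;\le\; 3.
\end{align*}
With the weights $\alpha_k := \|M_k\|_F$ and the normalized rank-$s$ atoms $\widehat{M}_k := M_k/\alpha_k$ (skipping $k$ with $\alpha_k=0$), each $\widehat{M}_k$ lies in $\mathbb{B}_{\mathrm{rank}}(s)\cap\mathbb{B}_F(1)$. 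Thus $\tfrac{1}{3}M = \sum_k (\alpha_k/3)\widehat{M}_k$ with non-negative coefficients summing to at most $2/3 \le 1$. Since the zero matrix also lies in $\mathbb{B}_{\mathrm{rank}}(s)\cap\mathbb{B}_F(1)$, I can top up the remaining mass $1 - \sum_k \alpha_k/3 \ge 0$ using the zero atom, exhibiting $M/3$ as a genuine convex combination of elements of $\mathbb{B}_{\mathrm{rank}}(s)\cap\mathbb{B}_F(1)$ and concluding $M \in 3\,\mathrm{conv}(\mathbb{B}_{\mathrm{rank}}(s)\cap\mathbb{B}_F(1))$, which is contained in the right-hand side.

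The only mild obstacle is keeping the bookkeeping honest: making sure the chain of inequalities chains telescope correctly (in particular the index shift $k-1 \to k$ in $\|M_k\|_F \le \|M_{k-1}\|_{\mathrm{nuc}}/\sqrt{s}$) and handling the edge cases $\alpha_k = 0$ and the leftover convex weight cleanly. The factor $3$ in the statement is in fact generous -- my construction gives $2$ -- which leaves plenty of slack and obviates the need for any sharper averaging argument.
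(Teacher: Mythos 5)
Your argument is correct, but it takes a genuinely different route from the paper. The paper proves the inclusion by duality: since the right-hand set is closed and convex, it suffices to compare support functions, and for any $z$ with top-$\lfloor s\rfloor$ singular-value block $z_S$ one bounds $\phi_A(z)\leq \|z_S\|_F+\sqrt{s}\,\|z_{S^c}\|_{op}\leq 3\|z_S\|_F=\phi_B(z)$ via H\"older. You instead give a primal, constructive Maurey-type decomposition of $M$ into singular-value blocks of size $s$, prove the block-chaining bound $\|M_k\|_F\leq \|M_{k-1}\|_{\mathrm{nuc}}/\sqrt{s}$, and exhibit $M/3$ explicitly as a convex combination of rank-$s$ atoms (padding with the zero matrix). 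Your version is more elementary, makes the closure visibly unnecessary (the combination is finite), and shows the constant $3$ is slack --- you get $2$; the dual version is shorter once the support-function equivalence is granted, and is the one needed verbatim when only a one-sided comparison of suprema is available. One small piece of bookkeeping you should fix: the lemma allows any real $s\geq 1$, so the blocks must have size $\lfloor s\rfloor$ (as the paper does), each block then has rank at most $\lfloor s\rfloor\leq s$, and the chaining bound becomes $\|M_k\|_F\leq \|M_{k-1}\|_{\mathrm{nuc}}/\sqrt{\lfloor s\rfloor}$, so the total mass is at most $1+\sqrt{s/\lfloor s\rfloor}\leq 1+\sqrt{2}$, which still fits comfortably inside the factor $3$.
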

\begin{proof}
	Note that when $s>\min\{d_1,d_2\}$, the statement is trivial, since the right-hand set equals $\mathbb{R}_F(3)$, and the left-hand set is contained in $\mathbb{B}_F(1)$. Hence, we will assume $1\leq s\leq \min\{d_1,d_2\}$.
	
	Let $A,B\subseteq\mathbb{R}^{d_1\times d_2}$ be closed convex sets, with support function given by $\phi_A(z) = \sup_{\Theta\in A}\langle \Theta,z\rangle$ and $\phi_B$ similarly defined. It is well-known that $\phi_A(z)\leq \phi_B(z)$ if and only if $A\subseteq B$. We will now check this condition for the pair of sets $A = \mathbb{B}_{nuc}(\sqrt{s})\cap \mathbb{B}_F(1)$ and $B = 3\text{cl}\{\text{conv}\{\mathbb{B}_{rank}(s)\cap\mathbb{B}_F(1)\}\}$.
	
	For any $z\in\mathbb{R}^{d_1\times d_2}$, take $r:=\min\{d_1,d_2\}$, we have $z = U\Sigma V^T$ by SVD, where $U\in\mathbb{R}^{d_1\times r}$, $\Sigma\in\mathbb{R}^{r\times r}$, and $V\in\mathbb{R}^{d_2\times r}$. Let $S\subseteq \{1,\ldots,r\}$ be subset indexes for the top $\lfloor s \rfloor$ elements of $\text{diag}(\Sigma)$. We use $U_S$ and $V_S$ to denote submatrices of $U$ and $V$ with columns of indices in $S$ and use $\Sigma_S$ to denote the submatrix of $\Sigma$ with columns and rows of indices in $S$. Then we can write $z = U_S\Sigma_SV_S^T+U_S^\perp\Sigma_S^\perp V_S^{\perp T}$.
	
	Consider $\phi_A(z)$ below:
	\begin{align}
	\phi_A(z) &= \sup_{\Theta\in A} \langle \Theta, U_S\Sigma_SV_S^T+U_S^\perp\Sigma_S^\perp V_S^{\perp T}\rangle\\
	&\leq \sup_{\left\|U_SU_S^T\Theta\right\|_F\leq 1} \langle U_SU_S^T\Theta, U_S\Sigma_S V_S^T\rangle+\sup_{\left\|U_S^\perp U_S^{\perp T}\Theta\right\|_{nuc}\leq \sqrt{s}} \langle U_S^\perp U_S^{\perp T}\Theta, U_S^\perp\Sigma_S^\perp V_S^{\perp T}\rangle\\
	&\leq \left\|U_S\Sigma_SV_S^T\right\|_F+\sqrt{s}\left\|U_S^\perp\Sigma_S^\perp V_S^{\perp T}\right\|_{op} \ \text{by Holder inequality}\\
	&\leq \left\|U_S\Sigma_SV_S^T\right\|_F+\sqrt{s}\frac{1}{\lfloor s\rfloor}\left\|U_S\Sigma_SV_S^T\right\|_{nuc} \leq 3\left\|U_S\Sigma_SV_S^T\right\|_F.
	\end{align}
	Finally, note that $\phi_B(z) = \sup_{\Theta\in B}\langle \Theta, z\rangle = 3\max_{|S| = \lfloor s\rfloor}\sup_{\left\|U_SU_S^T\Theta\right\|_F\leq 1} \langle U_SU_S^T\Theta,U_S\Sigma_SV_S^T\rangle = 3\left\|U_S\Sigma_SV_S^T\right\|_F$, from which the claim follows.
\end{proof}

\begin{defini} \label{def:set}
	Define $\mathbb{K}(s):=\mathbb{B}_{rank}(s)\cap\mathbb{B}_F(1)$ and the cone set $\mathbb{C}(s):=\{v:\left\|v\right\|_{nuc}\leq\sqrt{s}\left\|v\right\|_F\}$, all matrices defined in these sets are in $\mathbb{R}^{d_1\times d_2}$.
\end{defini}
\begin{lemma} \label{lemma:dev_cond}
	For a fixed matrix $\Gamma\in\mathbb{R}^{d_1d_2\times d_1d_2}$, parameter $s\geq 1$, and tolerance $\delta>0$, suppose we have the deviation condition $(\tilde{v}:=\text{vec}(v))$
	\begin{align}
	|\tilde{v}^T\Gamma \tilde{v}|\leq \delta, \forall v\in\mathbb{K}(2s),
	\end{align}
	where $\mathbb{K}(2s)$ is defined in Definition~\ref{def:set}. Then
	\begin{align}
	|\tilde{v}^T\Gamma \tilde{v}|\leq 27\delta(\left\|v\right\|_F^2+\frac{1}{s}\left\|v\right\|_{nuc}^2), \forall v\in\mathbb{R}^{d_1\times d_2}. \label{inequ:dev_cond}
	\end{align}
\end{lemma}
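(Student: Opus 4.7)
The plan is to combine the convex-hull inclusion from Lemma~\ref{lemma:conv_cl} with a polarization argument, reducing the general-$v$ bound to the hypothesis available on $\mathbb{K}(2s)$. Let $Q(v):=\tilde v^T\Gamma\tilde v$, and observe that $Q$ depends only on the symmetric part $(\Gamma+\Gamma^T)/2$, so I may assume $\Gamma$ is symmetric without loss of generality. Since $Q$ is homogeneous of degree $2$, it suffices to prove the bound after rescaling $v$ so that $\|v\|_F^2+\tfrac{1}{s}\|v\|_{nuc}^2=1$ (the case $v=0$ being trivial). In this normalized regime $\|v\|_F\le 1$ and $\|v\|_{nuc}\le\sqrt{s}$, so $v\in\mathbb{B}_{nuc}(\sqrt{s})\cap\mathbb{B}_F(1)$, which by Lemma~\ref{lemma:conv_cl} sits inside $3\,\mathrm{cl}\{\mathrm{conv}(\mathbb{K}(s))\}$.

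Next, I would write $v=3w$ with $w\in\mathrm{cl}\{\mathrm{conv}(\mathbb{K}(s))\}$ and approximate $w$ in Frobenius norm by finite convex combinations $w_n=\sum_i\alpha_i^{(n)}u_i^{(n)}$, where $\alpha_i^{(n)}\ge 0$, $\sum_i\alpha_i^{(n)}=1$, and each $u_i^{(n)}\in\mathbb{K}(s)$. By continuity of the quadratic form, $Q(v)=9\,Q(w)=9\lim_n Q(w_n)$, and each $Q(w_n)$ expands bilinearly as $\sum_{i,j}\alpha_i^{(n)}\alpha_j^{(n)}B(u_i^{(n)},u_j^{(n)})$, where $B(u,u'):=\tilde u^T\Gamma\tilde{u}'$ is the symmetric bilinear form associated with $Q$.

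The crux is the pointwise bound $|B(u,u')|\le 2\delta$ for all $u,u'\in\mathbb{K}(s)$, which I would obtain by polarization. The matrices $\tfrac12(u\pm u')$ lie in $\mathbb{K}(2s)$, because $\mathrm{rank}(u\pm u')\le\mathrm{rank}(u)+\mathrm{rank}(u')\le 2s$ by rank subadditivity, and $\|\tfrac12(u\pm u')\|_F\le 1$ by the triangle inequality. Applying the hypothesis then gives $|Q(\tfrac12(u\pm u'))|\le\delta$, i.e.\ $|Q(u\pm u')|\le 4\delta$, and the identity $B(u,u')=\tfrac14(Q(u+u')-Q(u-u'))$ yields $|B(u,u')|\le 2\delta$.

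Assembling the pieces, $|Q(w_n)|\le 2\delta\sum_{i,j}\alpha_i^{(n)}\alpha_j^{(n)}=2\delta$; passing to the limit gives $|Q(w)|\le 2\delta$, and hence $|Q(v)|\le 18\delta$ for normalized $v$. Undoing the normalization yields $|\tilde v^T\Gamma\tilde v|\le 18\delta(\|v\|_F^2+\|v\|_{nuc}^2/s)\le 27\delta(\|v\|_F^2+\|v\|_{nuc}^2/s)$ for all $v\in\mathbb{R}^{d_1\times d_2}$, which is the claimed inequality \eqref{inequ:dev_cond}. The main obstacle is the polarization step: one has to verify that $(u\pm u')/2$ genuinely lands in $\mathbb{K}(2s)$ so that the hypothesis actually applies, and this rests on the rank-subadditivity observation above (the Frobenius bound is immediate from the triangle inequality). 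The remaining ingredients—the homogeneity reduction, the convex-hull representation of $v$, and the continuity of $Q$—are routine.
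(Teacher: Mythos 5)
Your proof is correct, and while it rests on the same two pillars as the paper's argument---the convex-hull inclusion of Lemma~\ref{lemma:conv_cl} and a polarization bound on cross terms over a convex combination of elements of $\mathbb{K}(s)$---it is organized differently in two ways that are worth noting. First, the paper splits $\mathbb{R}^{d_1\times d_2}$ into the cone $\mathbb{C}(s)$ and its complement and proves the bounds $27\delta\|v\|_F^2$ and $\tfrac{27\delta}{s}\|v\|_{nuc}^2$ separately on the two pieces; your normalization $\|v\|_F^2+\tfrac{1}{s}\|v\|_{nuc}^2=1$ places every nonzero $v$ directly into $\mathbb{B}_{nuc}(\sqrt{s})\cap\mathbb{B}_F(1)$ and subsumes both cases in one pass, which is cleaner. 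Second, the paper uses the three-evaluation identity $B(u,u')=\tfrac12\left(Q(u+u')-Q(u)-Q(u')\right)$ applied to $v_i/3$, $v_j/3$, $(v_i+v_j)/6$, giving $\tfrac12(36+9+9)\delta=27\delta$ per cross term, whereas your two-evaluation identity $B(u,u')=\tfrac14\left(Q(u+u')-Q(u-u')\right)$ on unit-scale elements of $\mathbb{K}(s)$ gives $2\delta$ per cross term and hence $18\delta$ after reinstating the factor of $9$; your argument therefore proves a slightly sharper constant than stated (which of course still implies the claim with $27\delta$). Your explicit treatment of the closure via limits of finite convex combinations, and the observation that one may symmetrize $\Gamma$ so that the polarization identity is legitimate, also make rigorous two points the paper passes over with ``by continuity.'' The key verification---that $\tfrac12(u\pm u')\in\mathbb{K}(2s)$ by rank subadditivity and the triangle inequality---is exactly the step the paper's version also relies on (there in the form $(v_i+v_j)/6\in\mathbb{K}(2s)$), and you have it right.
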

\begin{proof}
	We begin by establishing the inequalities
	\begin{align}
	|\tilde{v}^T\Gamma\tilde{v}|&\leq 27\delta \left\|v\right\|_F^2, \forall v\in\mathbb{C}(s),\label{inequ:setC}\\
	|\tilde{v}^T\Gamma\tilde{v}|&\leq\frac{27\delta}{s}\left\|v\right\|_{nuc}^2, \forall v\notin \mathbb{C}(s) \label{inequ:setnotC},
	\end{align}
	where $\mathbb{C}(s)$ is defined in Definition~\ref{def:set}, the statement of this lemma then follows immediately.
	By rescaling, inequality~\ref{inequ:setC} follows if we can show that
	\begin{align}
	|\tilde{v}^T\Gamma\tilde{v}|&\leq 27\delta \ \text{for all $v$ such that $\left\|v\right\|_F = 1$ and $\left\|v\right\|_{nuc}\leq\sqrt{s}$} \label{inequ:setC_scale}.
	\end{align}
	By Lemma~\ref{lemma:conv_cl} and continuity, we further reduce the problem to proving the bound~\ref{inequ:setC_scale} for all vectors $v\in3\text{conv}\{\mathbb{K}(s)\} = \text{conv}\{\mathbb{B}_{rank}(s)\cap\mathbb{B}_F(3)\}$. Consider a weighted lienar combination of the form $v = \sum_i\alpha_iv_i$, with weights $\alpha_i\geq 0$ such that $\sum_i\alpha_i = 1$, and $rank(v_i)\leq s$ and $\left\|v_i\right\|_F\leq 3$ for each $i$. We can write
	\begin{align}
	\tilde{v}\Gamma \tilde{v} = \sum_{i,j}\alpha_i\alpha_j(\tilde{v}_i^T\Gamma\tilde{v}_j).
	\end{align}
	Applying inequality~\ref{inequ:dev_cond} to the vectors $v_i/3$, $v_j/3$ and $(v_i+v_j)/6$, we have
	\begin{align}
	|\tilde{v}_i^T\Gamma \tilde{v}_j| = \frac{1}{2}|(\tilde{v}_i+\tilde{v}_j)^T\Gamma(\tilde{v}_i+\tilde{v}_j)-\tilde{v}_i^T\Gamma \tilde{v}_i-\tilde{v}_j^T\Gamma \tilde{v}_j| \leq \frac{1}{2}(36+9+9)\delta = 27\delta
	\end{align}
	for all $i,j$, and hence $|\tilde{v}^T\Gamma \tilde{v}|\leq \sum_{i,j}\alpha_i\alpha_j(27\alpha) = 27\delta\left\|\alpha\right\|_2^2 = 27\delta$, establishing inequality~\ref{inequ:setC}.
	Now let's turn to inequality~\ref{inequ:setnotC}, note that $v\notin\mathbb{C}(s)$, we have
	\begin{align}
	\frac{|\tilde{v}^T\Gamma \tilde{v}|}{\left\|v\right\|_{nuc}^2} \leq \frac{1}{s}\sup_{\left\|u\right\|_{nuc}\leq\sqrt{s},\left\|u\right\|_F\leq 1} |u^T\Gamma u| \leq \frac{27\delta}{s},
	\end{align}
	where the first inequality follows by the substitution $u = \sqrt{s}\frac{v}{\left\|\right\|_{nuc}}$, the second follows by the same argument used for inequality~\ref{inequ:setC}. Rearrange above inequality, we establish inequality~\ref{inequ:setnotC}.
\end{proof}
\begin{lemma}[RSC condition] \label{lemma:RSC_cond}
	Suppose $s\geq 1$ and $\widehat{\Gamma}$ is an estimator of $\Sigma$ satisfying the deviation condition ($\tilde{v}:=\text{vec}(v)$)
	\begin{align}
	|\tilde{v}^T(\widehat{\Gamma}-\Sigma)\tilde{v}|\leq\frac{\lambda_{min}(\Sigma)}{54}, \forall v\in\mathbb{K}(2s),
	\end{align}
	where $\mathbb{K}(2s)$ is defined in Definition~\ref{def:set}. Then we have the RSC condition
	\begin{align}
	\tilde{v}^T\widehat{\Gamma}\tilde{v} \geq \frac{\lambda_{min}(\Sigma)}{2}\left\|v\right\|_F^2-\frac{\lambda_{min}(\Sigma)}{2s}\left\|v\right\|_{nuc}^2.
	\end{align}
\end{lemma}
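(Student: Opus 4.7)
The plan is to obtain the RSC inequality by a two-step decomposition: first use the hypothesis together with Lemma~\ref{lemma:dev_cond} to upgrade the deviation bound $|\tilde v^T(\widehat\Gamma-\Sigma)\tilde v|\le\lambda_{min}(\Sigma)/54$, which is stated only on $\mathbb{K}(2s)$, into a global quadratic bound valid for every $v\in\mathbb{R}^{d_1\times d_2}$; then combine this with the trivial lower bound $\tilde v^T\Sigma\tilde v\ge\lambda_{min}(\Sigma)\|v\|_F^2$ via the identity $\widehat\Gamma=\Sigma+(\widehat\Gamma-\Sigma)$.

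For the first step, I will apply Lemma~\ref{lemma:dev_cond} with $\Gamma=\widehat\Gamma-\Sigma$ and $\delta=\lambda_{min}(\Sigma)/54$. The hypothesis of the present lemma is precisely the hypothesis of that lemma, so the conclusion gives
\begin{equation*}
\bigl|\tilde v^T(\widehat\Gamma-\Sigma)\tilde v\bigr|\ \le\ 27\cdot\frac{\lambda_{min}(\Sigma)}{54}\Bigl(\|v\|_F^2+\tfrac{1}{s}\|v\|_{nuc}^2\Bigr)\ =\ \frac{\lambda_{min}(\Sigma)}{2}\Bigl(\|v\|_F^2+\tfrac{1}{s}\|v\|_{nuc}^2\Bigr)
\end{equation*}
for every $v\in\mathbb{R}^{d_1\times d_2}$. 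Note that the constant $54$ in the hypothesis is clearly calibrated so that $27/54=1/2$, which is what makes the final RSC constants come out cleanly.

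For the second step, write
\begin{equation*}
\tilde v^T\widehat\Gamma\tilde v\ =\ \tilde v^T\Sigma\tilde v+\tilde v^T(\widehat\Gamma-\Sigma)\tilde v\ \ge\ \lambda_{min}(\Sigma)\|v\|_F^2-\bigl|\tilde v^T(\widehat\Gamma-\Sigma)\tilde v\bigr|,
\end{equation*}
and substitute the bound from the first step to obtain
\begin{equation*}
\tilde v^T\widehat\Gamma\tilde v\ \ge\ \lambda_{min}(\Sigma)\|v\|_F^2-\frac{\lambda_{min}(\Sigma)}{2}\|v\|_F^2-\frac{\lambda_{min}(\Sigma)}{2s}\|v\|_{nuc}^2\ =\ \frac{\lambda_{min}(\Sigma)}{2}\|v\|_F^2-\frac{\lambda_{min}(\Sigma)}{2s}\|v\|_{nuc}^2,
\end{equation*}
which is the claimed RSC condition.

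I do not anticipate any real obstacle: the substantive work is already packed into Lemma~\ref{lemma:dev_cond}, whose proof in turn leans on the convex-hull comparison of Lemma~\ref{lemma:conv_cl} between the nuclear/Frobenius ball and the rank-constrained Frobenius ball. The present lemma is simply the mechanical combination of that extended deviation inequality with the spectral lower bound on $\Sigma$, with the only mildly nontrivial thing being the bookkeeping of the numeric constants, which is why the hypothesis is phrased with the seemingly ad hoc factor $1/54$.
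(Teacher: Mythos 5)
Your proposal is correct and is essentially identical to the paper's own proof: both apply Lemma~\ref{lemma:dev_cond} with $\Gamma=\widehat\Gamma-\Sigma$ and $\delta=\lambda_{min}(\Sigma)/54$ to get the global bound $\frac{\lambda_{min}(\Sigma)}{2}\left(\|v\|_F^2+\frac{1}{s}\|v\|_{nuc}^2\right)$, and then subtract it from $\tilde v^T\Sigma\tilde v\ge\lambda_{min}(\Sigma)\|v\|_F^2$. No differences worth noting.
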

\begin{proof}
	This result follows easily from Lemma~\ref{lemma:dev_cond}. Set $\Gamma = \widehat{\Gamma}-\Sigma$ and $\delta = \frac{\lambda_{min}(\Sigma)}{54}$, we have the bound
	\begin{align}
	|\tilde{v}^T(\widehat{\Gamma}-\Sigma)\tilde{v}|\leq\frac{\lambda_{min}(\Sigma)}{2}\left(\left\|v\right\|_F^2+\frac{1}{s}\left\|v\right\|_{nuc}^2\right).
	\end{align}
	Then
	\begin{align}
	\tilde{v}^T\widehat{\Gamma}\tilde{v}&\geq \tilde{v}^T\Sigma \tilde{v}-\frac{\lambda_{min}(\Sigma)}{2}\left(\left\|v\right\|_F^2+\frac{1}{s}\left\|v\right\|_{nuc}^2\right)\\
	&\geq \frac{\lambda_{min}(\Sigma)}{2}\left\|v\right\|_F^2-\frac{\lambda_{min}(\Sigma)}{2s}\left\|v\right\|_{nuc}^2,
	\end{align}
	where the last inequality follows from $\tilde{v}^T\Sigma \tilde{v}\geq \lambda_{min}(\Sigma)\left\|v\right\|_F^2$.
\end{proof}
\subsection{Proof for the Theorem~\ref{thm:RSC_subG}}
\begin{proof}
	Using the results in Lemma~\ref{lemma:RSC_cond}, together with the substitutions 
	\begin{align}
	\widehat{\Gamma}-\Sigma = \frac{1}{n}\widetilde{X}^T\widetilde{X}-\Sigma, \text{ and } s:=\frac{1}{c}\frac{n}{d_1+d_2}\min\{\frac{\lambda_{min}^2(\Sigma)}{\sigma^4},1\},
	\end{align}
	where $n\geq c(d_1+d_2) /\min\{\frac{\lambda_{min}^2(\Sigma)}{\sigma^4},1\}$ so $s\geq 1$, we see that it suffices to show that
	\begin{align}
	D(s):=\sup_{v\in\mathbb{K}(2s)}|\tilde{v}^T(\widehat{\Gamma}-\Sigma)\tilde{v}|\leq\frac{\lambda_{min}(\Sigma)}{54},
	\end{align}
	with high probability.
	
	Note that by modified Lemma 15 ( in Appendix $G$) in~\cite{loh2011high}, we simply change the $1/3$-covering set for sparsity vectors to the $1/3$-covering set for $\mathbb{K}(2s)$, whose covering number is $27^{2s(d_1+d_2+1)}$ by Lemma~\ref{lemma:cover}, and achieve
	\begin{align}
	\mathbb{P}(D(s)\geq t) \leq 2\exp\left(-c'n\min\left(\frac{t^2}{\sigma^4},\frac{t}{\sigma^2}\right)+2s(d_1+d_2+1)\log 27\right),
	\end{align}
	for some univeral constant $c'>0$. Setting $t = \frac{\lambda_{min}(\Sigma)}{54}$, we see that there exists some $c_2>0$, such that
	\begin{align}
	\mathbb{P}\left(D(s)\geq\frac{\lambda_{min}(\Sigma)}{54}\right) \leq 2\exp\left(-c_2n\min\left(\frac{\lambda_{min}^2(\Sigma)}{\sigma^4},1\right)\right),
	\end{align}
	which establishes the result.
	
	Set $\delta$ equals to the right side of last inequality, one can get the desired gurantee for $n$ in Theorem~\ref{thm:RSC_subG}.
\end{proof}

\section{Proof for Theorem~\ref{thm:theta_conv}}
\label{sec:app_theta_conv}
\subsection{Useful Lemmas}
\begin{lemma}[Converence under RSC, adapted from Proposition 10.1 in~\citet{wainwright2019high}] \label{lemma:RSC}
	Suppose the observations $X_1,\ldots,X_{n}$ satisfies the non-scaled RSC condition in Definition~\ref{def:RSC}, such that
	\begin{align}
	\frac{1}{T_1} \sum_{t=1}^{n}\langle X_t, \Theta\rangle^2 \geq \kappa\left\|\Theta\right\|_F^2 - \tau_{n}^2\left\|\Theta\right\|_{\text{nuc}}^2, \ \forall \Theta\in\mathbb{R}^{d_1\times d_2}.
	\end{align}
	Then under the event $G:=\{\left\|\frac{1}{n}\sum_{t=1}^{n}\eta_tX_t\right\|_{op}\leq \frac{\lambda_{n}}{2}\}$, any optimal solution $\widehat{\Theta}$ to Equation~\ref{equ:nuc} satisfies the bound below:
	\begin{align}
	\left\|\widehat{\Theta}-\Theta^*\right\|_F^2 \leq 4.5\frac{\lambda_{n}^2}{\kappa^2}r,
	\end{align}
	where $r = \text{rank}(\Theta^*)$ and $\frac{1}{\tau_{n}^2}\geq \frac{64r}{\kappa}$.
\end{lemma}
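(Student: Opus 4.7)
The plan is to follow the standard analysis for nuclear-norm regularized least squares. First I would derive the \emph{basic inequality} from optimality of $\widehat{\Theta}$ in~\eqref{equ:nuc} tested against $\Theta^*$. Setting $\Delta := \widehat{\Theta} - \Theta^*$ and substituting $y_t = \langle X_t,\Theta^*\rangle + \eta_t$, the $\eta_t^2$ terms cancel and we obtain
\begin{align*}
\frac{1}{2n}\sum_{t=1}^n \langle X_t,\Delta\rangle^2 \;\leq\; \Big\langle \tfrac{1}{n}\sum_{t=1}^n \eta_t X_t,\,\Delta\Big\rangle + \lambda_n\bigl(\|\Theta^*\|_{\text{nuc}} - \|\widehat{\Theta}\|_{\text{nuc}}\bigr).
\end{align*}
Under event $G$, the duality between operator and nuclear norms bounds the noise inner product by $\tfrac{\lambda_n}{2}\|\Delta\|_{\text{nuc}}$.

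Next I would exploit the rank-$r$ structure of $\Theta^*$ via a decomposability argument. Writing $\Theta^* = U\Sigma V^T$ with $U \in \mathbb{R}^{d_1\times r}$, $V\in\mathbb{R}^{d_2\times r}$, set $\Delta'' := (I - UU^T)\Delta(I - VV^T)$ and $\Delta' := \Delta - \Delta''$, so that $\operatorname{rank}(\Delta') \leq 2r$ and $\Theta^*$ and $\Delta''$ have orthogonal row and column spaces, which gives $\|\Theta^* + \Delta''\|_{\text{nuc}} = \|\Theta^*\|_{\text{nuc}} + \|\Delta''\|_{\text{nuc}}$. Triangle inequality on $\|\widehat{\Theta}\|_{\text{nuc}} = \|\Theta^* + \Delta'' + \Delta'\|_{\text{nuc}}$ yields
\begin{align*}
\|\Theta^*\|_{\text{nuc}} - \|\widehat{\Theta}\|_{\text{nuc}} \leq \|\Delta'\|_{\text{nuc}} - \|\Delta''\|_{\text{nuc}}.
\end{align*}
Combined with $\|\Delta\|_{\text{nuc}} \leq \|\Delta'\|_{\text{nuc}} + \|\Delta''\|_{\text{nuc}}$, the basic inequality becomes
\begin{align*}
\frac{1}{2n}\sum_{t=1}^n \langle X_t,\Delta\rangle^2 \;\leq\; \tfrac{3\lambda_n}{2}\|\Delta'\|_{\text{nuc}} - \tfrac{\lambda_n}{2}\|\Delta''\|_{\text{nuc}}.
\end{align*}
Non-negativity of the left side produces the \emph{cone condition} $\|\Delta''\|_{\text{nuc}} \leq 3\|\Delta'\|_{\text{nuc}}$, and since $\operatorname{rank}(\Delta') \leq 2r$ we get $\|\Delta\|_{\text{nuc}} \leq 4\|\Delta'\|_{\text{nuc}} \leq 4\sqrt{2r}\,\|\Delta\|_F$, hence $\|\Delta\|_{\text{nuc}}^2 \leq 32 r\|\Delta\|_F^2$.

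Finally I would apply the RSC hypothesis to the left side. RSC gives $\tfrac{1}{n}\sum_t \langle X_t,\Delta\rangle^2 \geq \kappa\|\Delta\|_F^2 - \tau_n^2\|\Delta\|_{\text{nuc}}^2$, and the assumption $1/\tau_n^2 \geq 64 r/\kappa$ combined with the nuclear-norm cone bound yields $\tau_n^2\|\Delta\|_{\text{nuc}}^2 \leq \tfrac{\kappa}{2}\|\Delta\|_F^2$, so half the curvature survives:
\begin{align*}
\tfrac{\kappa}{4}\|\Delta\|_F^2 \;\leq\; \tfrac{1}{2n}\sum_t \langle X_t,\Delta\rangle^2 \;\leq\; \tfrac{3\lambda_n}{2}\|\Delta'\|_{\text{nuc}} \;\leq\; \tfrac{3\lambda_n\sqrt{2r}}{2}\|\Delta\|_F.
\end{align*}
Dividing by $\|\Delta\|_F$ and squaring yields the required bound $\|\Delta\|_F^2 \leq C\lambda_n^2 r / \kappa^2$. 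The main obstacle is sharpening the constant to land exactly at the stated $4.5$: the loose chain above delivers a larger prefactor, so one must retain tighter slack in the cone step (for example, keeping the $-\tfrac{\lambda_n}{2}\|\Delta''\|_{\text{nuc}}$ term and peeling $\Delta'$ into its row-space and column-space pieces rather than bounding both at once through the loose factor $\sqrt{2r}$). All the structural steps---basic inequality, nuclear-norm decomposability, rank-$2r$ projection, and RSC absorption---are standard; the only delicate work is constant bookkeeping.
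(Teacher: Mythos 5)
Your proposal is correct and is essentially the same argument the paper relies on: the paper gives no proof of this lemma, importing it directly from Proposition 10.1 of Wainwright (2019), whose proof is exactly your chain of basic inequality, nuclear-norm decomposability over the row/column spaces of $\Theta^*$, the rank-$2r$ cone condition, and RSC absorption of the tolerance term. The only discrepancy is the numerical prefactor (your bookkeeping gives $72$ rather than $4.5$, reflecting the normalization of the loss and the factor of curvature lost to the tolerance term), which is immaterial since the lemma is only invoked downstream up to constants.
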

\subsection{Proof for Theorem~\ref{thm:theta_conv}}
\begin{proof}
	According to Theorem~\ref{thm:RSC_subG}, there exists constants $c_1$ and $c_2$ such that with probability at least $1-\delta$, we have below RSC condition
	\begin{align}
	\frac{1}{n}\sum_{t=1}^{n}\langle X_t,\Theta\rangle^2\geq \frac{c_1}{d_1d_2}\left\|\Theta\right\|_F^2-\frac{c_2(d_1+d_2)}{nd_1d_2}\left\|\Theta\right\|_{nuc}^2, \forall \Theta\in\mathbb{R}^{d_1\times d_2},
	\end{align}
	Lemma~\ref{lemma:RSC} can be applied under above RSC condition, then under event $G(\lambda_{n}):=\{\left\|\frac{1}{n}\sum_{t=1}^{n}\eta_tX_t\right\|_{op}\leq \frac{\lambda_{n}}{2}\}$, we can easily conclude the theorem.
	Thus, it remains to figure out $\lambda_{n}$ such that event $G(\lambda_n)$ can hold with high probability. 
	
	Define the rare event $E:=\left\{\max_{t=1,\ldots,T_1}|\eta_t|>\sqrt{2\log\left(\frac{4T_1}{\delta}\right)}\right\}$, so that $\mathbb{P}(E)\leq \delta/2$ can be proved by the definition of sub-Gaussian. By matrix Bernstein inequality, the probability of $G(\lambda_n)^c$ can be bounded by:
	\begin{align*}
	P\left(\left\|\frac{1}{n}\sum_{t=1}^{n}\eta_tX_t\right\|_{op}>\varepsilon\right) &\leq P\left(\left\|\frac{1}{n}\sum_{t=1}^{n}\eta_tX_t\right\|_{op}>\varepsilon \middle| E^c\right)+P(E)\\
	&\leq (d_1+d_2)\exp\left(\frac{-n\varepsilon^2/2}{ 2\log\left(\frac{4n}{\delta}\right)\max\{1/d_1,1/d_2\}+\varepsilon\sqrt{2\log\left(\frac{4n}{\delta}\right)}/3}\right)+\delta/2,
	\end{align*}
	where the last inequality is by matrix Bernstein using the fact that
	\begin{align}
	\max\left\{\left\|\sum_{t=1}^{n}\mathbb{E}\eta_t^2X_tX_t^T\right\|_{op},\left\|\sum_{t=1}^{n}\mathbb{E}\eta_t^2X_t^TX_t\right\|_{op}\right\}\leq 2n\log\left(\frac{4n}{\delta}\right)\max\{1/d_1,1/d_2\}.
	\end{align}
	
	For $(d_1+d_2)\exp\left(\frac{-n\varepsilon^2/2}{ 2\log\left(\frac{4n}{\delta}\right)\max\{1/d_1,1/d_2\}+\varepsilon\sqrt{2\log\left(\frac{4n}{\delta}\right)}/3}\right)\leq \delta/2$ to hold, we need 
	\begin{align}
	\epsilon^2 = \frac{C'}{n\min\{d_1,d_2\}}\log\left(\frac{n}{\delta}\right)\log\left(\frac{d_1+d_2}{\delta}\right),
	\end{align}
	holds for some constant $C'$. 
	Take $\lambda_{n} = 2\varepsilon$, we need $\lambda_{n}^2 = \frac{C}{n\min\{d_1,d_2\}}\log\left(\frac{n}{\delta}\right)\log\left(\frac{d_1+d_2}{\delta}\right)$ and under this condition we have $P(G(\lambda_{n}))\geq 1-\delta$. We complete the proof by noting that the scaling of the right hand side in Lemma~\ref{lemma:RSC} under above choice of $\lambda_n$ is indeed $\frac{(d_1+d_2)^3r}{n}$.
\end{proof}

\section{Proof for Theorem~\ref{thm:lower}}
\label{sec:app_lower}
\begin{proof}
	Take $\Delta = \sqrt{\frac{dr}{T}}\frac{1}{8\sqrt{3}}$, $\mathbb{\Theta} = \{\Theta = \begin{bmatrix}
	\theta_1^T\\
	\vdots\\
	\theta_r^T\\
	\mathbf{0}
	\end{bmatrix}\in\mathbb{R}^{d\times d}, \theta_i\in\{\pm\Delta\}^d, \forall i\in[r]\}$.
	For $i\in[r], j\in[d]$, define $\tau_{i,j} = T\wedge\min\{t:\sum_{s=1}^{t}X_{s,i,j}^2\geq \frac{T}{dr}\}$, where $X_{s,i,j}$ denotes the element on the $i$-th row and $j$-th column of matrix $X_s$. Then for a fixed $\Theta$, taking expectation over $X_t$, we have
	\begin{align}
		\mathbb{E}\left[R_T(\Theta)\right] &= \mathbb{E}_\Theta\sum_{t=1}^{T}\langle X^*-X_t,\Theta\rangle\\
		&=\Delta \mathbb{E}_\Theta\sum_{t=1}^{T}\sum_{i=1}^{r}\sum_{j=1}^{d}\left(\frac{1}{\sqrt{dr}}-X_{t,i,j}\text{sign}(\Theta_{i,j})\right)\\
		&\geq \frac{\Delta\sqrt{dr}}{2}\sum_{i=1}^{r}\sum_{j=1}^{d}\mathbb{E}_\Theta\left[\sum_{t=1}^{T}\left(\frac{1}{\sqrt{dr}}-X_{t,i,j}\text{sign}(\Theta_{i,j})\right)^2\right]\\
		&\geq \frac{\Delta\sqrt{dr}}{2}\sum_{i=1}^{r}\sum_{j=1}^{d}\mathbb{E}_\Theta\left[\sum_{t=1}^{\tau_{i,j}}\left(\frac{1}{\sqrt{dr}}-X_{t,i,j}\text{sign}(\Theta_{i,j})\right)^2\right].
	\end{align}
Define $U_{i,j}(x) = \sum_{t=1}^{\tau_{i,j}}\left(\frac{1}{\sqrt{dr}}-X_{t,i,j}x\right)^2$. Let $\Theta'\in\mathbb{\Theta}$ be another parameter matrix such that $\Theta' = \Theta$, except that $\Theta'_{i,j} = -\Theta_{i,j}$. Let $\mathbb{P}, \mathbb{P}'$ be the laws of $U_{i,j}$ with respect to the learner interaction measure induced by $\Theta$ and $\Theta'$. Then
\begin{align}
	\mathbb{E}_\Theta\left[U_{i,j}(1)\right]&\geq \mathbb{E}_{\Theta'}\left[U_{i,j}(1)\right]-(\frac{4T}{dr}+2)\sqrt{\frac{1}{2}D\left(\mathbb{P},\mathbb{P}'\right)}\\
	&\geq \mathbb{E}_{\Theta'}\left[U_{i,j}(1)\right]-\Delta(\frac{4T}{dr}+2)\sqrt{\mathbb{E}\left[\sum_{t=1}^{\tau_{i,j}}X_{t,i,j}^2\right]}\\
	&\geq \mathbb{E}_{\Theta'}\left[U_{i,j}(1)\right]-\Delta(\frac{4T}{dr}+2)\sqrt{\frac{T}{dr}+1}\\
	&\geq \mathbb{E}_{\Theta'}\left[U_{i,j}(1)\right]-\frac{8\sqrt{3}T\Delta}{dr}\sqrt{\frac{T}{dr}},
\end{align}
where in the first inequality we used Pinsker's inequality, the result in exercise 14.4 in~\citep{lattimore2018bandit}, the bound
\begin{align}
	U_{i,j}(1) = \sum_{t=1}^{\tau_{i,j}}\left(\frac{1}{\sqrt{dr}}-X_{t,i,j}\right)^2\leq 2\sum_{t=1}^{\tau_{i,j}}\frac{1}{dr}+2\sum_{t=1}^{\tau_{i,j}}X_{t,i,j}^2\leq \frac{2T}{dr}+2\left(\frac{T}{dr}+1\right) = \frac{4T}{dr}+2.
\end{align}
The second inequality in above follows from the chain rule for the relative entropy up to a stopping time in~\citep{lattimore2018bandit}:
\begin{align}
	D(\mathbb{P},\mathbb{P}')\leq \frac{1}{2}\mathbb{E}_\Theta\sum_{t=1}^{\tau_{i,j}}\langle X_{t},\Theta-\Theta'\rangle^2 = 2\Delta^2\mathbb{E}_\Theta\sum_{t=1}^{\tau_{i,j}}X_{t,i,j}^2.
\end{align}
The third inequality in above is true by the definition of $\tau_{i,j}$ and the fourth inequality holds by the assumption that $dr\leq 2T$.

Then,
\begin{align}
	\mathbb{E}_\Theta\left[U_{i,j}(1)\right]+\mathbb{E}_{\Theta'}\left[U_{i,j}(1)\right]&\geq \mathbb{E}_{\Theta'}\left[U_{i,j}(1)+U_{i,j}(-1)\right] - \frac{8\sqrt{3}T\Delta}{dr}\sqrt{\frac{T}{dr}}\\
	&= 2\mathbb{E}_{\Theta'}\left[\frac{\tau_{i,j}}{d}+\sum_{t=1}^{\tau_{i,j}}X_{t,i,j}^2\right]-\frac{8\sqrt{3}T\Delta}{dr}\sqrt{\frac{T}{dr}}\\
	&\geq \frac{2T}{d}-\frac{8\sqrt{3}T\Delta}{dr}\sqrt{\frac{T}{dr}} = \frac{T}{d}.
\end{align}
The proof is completed using an averaging number argument:
\begin{align}
	\sum_{\Theta\in\mathbb{\Theta}}R_T(\Theta)&\geq \frac{\Delta\sqrt{dr}}{2}\sum_{i=1}^{r}\sum_{j=1}^{d}\sum_{\Theta\in\mathbb{\Theta}}\mathbb{E}_\Theta\left[U_{i,j}(\text{sign}(\Theta_{i,j}))\right]\\
	&\geq \frac{\Delta\sqrt{dr}}{2}\sum_{i=1}^{r}\sum_{j=1}^{d}\sum_{\Theta_{-i,-j}}\sum_{\Theta_{i,j}\in\{\pm\Delta\}}\mathbb{E}_\Theta\left[U_{i,j}(\text{sign}(\Theta_{i,j}))\right]\\
	&\geq \frac{\Delta\sqrt{dr}}{2}\sum_{i=1}^{r}\sum_{j=1}^{d}\sum_{\Theta_{-i,-j}}\sum_{\Theta_{i,j}\in\{\pm\Delta\}}\frac{T}{dr} = 2^{dr-2}\Delta\sqrt{dr}T.
\end{align}
Hence there exists a $\Theta\in\mathbb{\Theta}$ such that $R_T(\mathcal{A},\Theta)\geq \frac{T\Delta\sqrt{dr}}{4} = \frac{dr\sqrt{T}}{32\sqrt{3}}$.
\end{proof}


\section{Preliminaries for EW}\label{sec:app_EW_prelim.tex}
We provide more information on the construction of standard \emph{exponentially weighted average forecaster}.
\paragraph{Prediction with Expert Advice.}
We use $\{f_{i,t}: i\in\mathcal{I}\}$ to denote the prediction of experts at round $t$, where $f_{i,t}$ is the prediction of expert $i$ at time $t$.
On the basis of the experts' predictions, the forecaster computes the prediction $\hat{y}_t$ for the next outcome $y_t$ and the true outcome $y_t$ is revealed afterwards. 
The regret of the learner relative to expert is defined by 
\begin{align*}
R_{i,T} = \sum_{t=1}^{T}\left(\ell_t(\widehat{p}_t)-\ell_t(f_{i,t})\right) = \widehat{L}_T-L_{i,T},
\end{align*}
where $L_{i,T} := \sum_{t=1}^{T}\ell_t(f_{i,t})$ and $\widehat{L}_T := \sum_{t=1}^{T}\ell_t(\widehat{p}_t)$. For linear prediction expert, we define $f_{\Theta,t}:=\langle \Theta,X_t\rangle$ and above reward matches with $\rho_T(\Theta)$.

\paragraph{Exponential Weighted Average Forecaster (EW).}
Suppose we have $N$ linear prediction experts. Define the regret vector at time $t$ as $\mathbf{r}_t = (R_{1,t},\ldots,R_{N,t})\in\mathbb{R}^N$ and the cumulative regret vector up to time $T$ as $\mathbf{R}_T = \sum_{t=1}^{T}\mathbf{r}_t$, then a weighted average forecaster is defined as
\begin{align*}
\widehat{p}_t = \sum_{i=1}^{N}\triangledown \Phi(\mathbf{R}_{t-1})_i f_{i,t}/ \sum_{j=1}^{N}\triangledown \Phi(\mathbf{R}_{t-1})_j
\end{align*}
where $\Phi(\cdot)$ denotes a potential function $\Phi: \mathbb{R}^N\rightarrow \mathbb{R}$ of the form $\Phi(\mathbf{u}) = \psi\left(\sum_{i=1}^{N}\phi(u_i)\right)$. $\phi:\mathbb{R}\rightarrow\mathbb{R}$ is any nonnegative, increasing and twice differentiable function, and $\psi: \mathbb{R}\rightarrow\mathbb{R}$ is any nonnegative, strictly increasing, concave and twice differentiable auxiliary function. 

\emph{Exponentially weighted average forecaster} is constructed using $\Phi_\eta(\mathbf{u}) = \frac{1}{\eta} \log\left(\sum_{i=1}^{N}e^{\eta u_i}\right)$, where $\eta$ is a positive parameter. The weights assigned to the experts are of the form: $\triangledown \Phi_\eta(\mathbf{R}_{t-1})_i = \frac{e^{\eta R_{i,t-1}}}{\sum_{j=1}^{N}e^{\eta R_{j,t-1}}}$. Thus, the \emph{exponentially weighted average forecaster} can be simplified to 
\begin{align*}
\hat{y}_t = \frac{\sum_{i=1}^{N}e^{-\eta L_{i,t-1}}f_{i,t}}{\sum_{j=1}^{N}e^{-\eta L_{j,t-1}}},
\end{align*}
as defined in the main text.


\section{More on Experiments}
\label{sec:app_exp}
\subsection{Parameter Setup for Comparing OFUL and LowESTR Simulation}
We present the parameter setups for the experiments in Section~\ref{sec:exp}.
\paragraph{OFUL:} failure rate: $\delta = 0.01$, horizon: $T = 3000$, standard deviation of the reward error $\sigma = 0.01$.

\paragraph{LowESTR:}

\begin{itemize}
	\item failure rate: $\delta = 0.01$.
	\item standard deviation of the reward error: $\sigma = 0.01$.
	\item least positive eigenvalue of $\Theta^*$: $\omega_r = 0.5$ for $r=1$ and $r=3$.
	\item horizon $T = 3000$, steps of stage 1: $T_1 = 200$, steps of stage 2: $T_2 = T-T_1$.
	\item penalization in Equation~\ref{equ:nuc}: $\lambda_{T_1} = 0.01\sqrt{\frac{1}{T_1}}$.
	\item gradient decent solving Equation~\ref{equ:nuc} step size: $0.01$.
	\item $k = r(d_1+d_2-r)$ in LowOFUL (Algorithm~\ref{algo:LowOFUL}).
	\item $B = 1, B_\perp = \sigma^2(d_1+d_2)^3r/(T_1\omega_r^2)$.
	\item $\lambda = 1, \lambda_\perp = \frac{T_2}{k\log\left(1+T_2/\lambda\right)}$.
\end{itemize}

\subsection{LowESTR: Sensitivity to $\omega_r$}
We prove a $\widetilde{O}\left((d_1+d_2)^{3/2}\sqrt{rT}\frac{1}{\omega_r}\right)$ regret for LowESTR algorithm in Section~\ref{sec:ESTR}. 
To complement this theoretical finding, we compare the performance of LowESTR on different values of $\omega_r\in\{0.05,0.1,0.2,0.3,0.4,0.5\}$. 

We run our simulation with $d_1=d_2=10,r=3$. The true $\Theta^*\in\mathbb{R}^{d_1\times d_2}$ is a diagonal matrix with $\text{diag} = (0.5,0.5,\omega_r,0,\ldots,0)$. 
The arm set is constructed in the same way as previous experiment and the reward is also generated by $y = \left\langle X,\Theta^*\right\rangle+\varepsilon$, where $\varepsilon\sim N(0,0.01^2)$. 
For each $\omega_r$ setting, we run LowESTR for 20 times to calculate the averaged regrets and their 1-sd confidence intervals.

Parameters for LowESTR are same as those of previous experiment except that $T_1 = \text{int}(100/\omega_r)$. 
The plot for cumulative regret at $T = 3000$ v.s. the value of $\omega_r$ is displayed in Figure~\ref{fig:omega_r comparison}.
We observe that as we increase the least positive singular value of $\Theta^*$: $\omega_r$, the cumulative regret up to $T=3000$ is indeed decreasing.

\begin{figure}[h]
    \centering
    \includegraphics[width=0.5\textwidth]{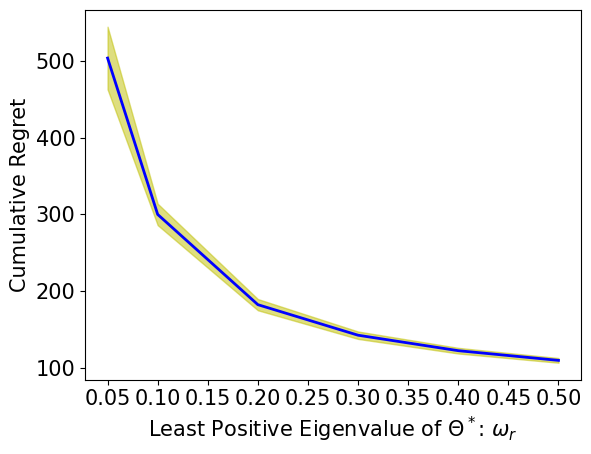}
    \caption{LowESTR: cumulative regret at $T=3000$ v.s. $\omega_r$. The yellow area represents the 1-standard deviation of the cumulative regret at $T=3000$.}
    \label{fig:omega_r comparison}
\end{figure}

\end{document}